\DeclareMathOperator*{\argmin}{arg\,min}
\def\checkmark{\tikz\fill[scale=0.4](0,.35) -- (.25,0) -- (1,.7) -- (.25,.15) -- cycle;}
\newcommand{\R}{\mathbb{R}}
\newcommand{\etal}{\textit{et al}. }
\newcommand{\ie}{\textit{i}.\textit{e}.}
\newcommand{\eg}{\textit{e}.\textit{g}.}
\newcommand{\coo}{\ensuremath{\mathrm{CO_2}}}
\crefname{theorem}{Theorem}{Theorems}
\crefname{assumption}{Assumption}{Assumptions}
\crefname{lemma}{Lemma}{Lemmas}
\newcommand{\C}{\mathbb{C}}
\newcommand{\parahead}[1]{\noindent\textbf{\titlecap{#1}}.\ }
\begin{document}
\title{Quantum-enhanced Computer Vision:\\Going Beyond Classical Algorithms} 
\author{
Natacha Kuete Meli$^1$ $\quad$ Shuteng Wang$^4$ $\quad$ Marcel Seelbach Benkner$^1$ $\quad$ Michele Sasdelli$^2$ \\ 
$\quad\quad\;$Tat-Jun Chin$^2$ $\quad\quad\;\;\;$ Tolga Birdal$^3$ $\quad\quad\quad\quad$ Michael Moeller$^1$ $\quad\quad\quad$ Vladislav Golyanik$^4$ \\
\vspace{5pt}
$^1$University of Siegen $\;\;$ $^2$University of Adelaide $\;\;$ $^3$Imperial College London $\;\;$ $^4$MPI for Informatics 
}

\IEEEtitleabstractindextext{%
\begin{abstract} 
Quantum-enhanced Computer Vision (QeCV) is a new research field at the intersection of computer vision, optimisation theory, machine learning and quantum computing. It has high potential to transform how visual signals are processed and interpreted with the help of quantum computing that leverages quantum-mechanical effects in computations inaccessible to classical (i.e. non-quantum) computers. In scenarios where existing non-quantum methods cannot find a solution in a reasonable time or compute only approximate solutions, quantum computers can provide, among others, advantages in terms of better time scalability for multiple problem classes. Parametrised quantum circuits can also become, in the long term, a considerable alternative to classical neural networks in computer vision. However, specialised and fundamentally new algorithms must be developed to enable compatibility with quantum hardware and unveil the potential of quantum computational paradigms in computer vision. 

This survey contributes to the existing literature on QeCV with a holistic review of this research field. It is designed as a quantum computing reference for the computer vision community, targeting computer vision students, scientists and readers with related backgrounds who want to familiarise themselves with QeCV. We provide a comprehensive introduction to QeCV, its specifics, and methodologies for formulations compatible with quantum hardware and QeCV methods, leveraging two main quantum computational paradigms, i.e. gate-based quantum computing and quantum annealing. We elaborate on the operational principles of quantum computers and the available tools to access, program and simulate them in the context of QeCV. Finally, we review existing quantum computing tools and learning materials and discuss aspects related to publishing and reviewing QeCV papers, open challenges and potential social implications.
\end{abstract}

\begin{IEEEkeywords}
Quantum-enhanced Computer Vision, Applied Quantum Computing, Quantum Algorithms.
\end{IEEEkeywords}}

\maketitle

\section{Introduction}
\label{sec:introduction}
\IEEEPARstart{C}omputer Vision (CV) studies automatic processing of visual and spatial information. Data representing such information is acquired in the form of 2D images, videos, depth maps, 3D point clouds, and different combinations of these inputs, possibly along with other sensory signals (\eg~data from inertial measurement units) \cite{szeliski2022computer,stockman2001computer}. 
The long-term aim of computer vision is to come up with intelligent, high-level interpretations of the observed scenes, inspired by the capabilities of the Human Visual System (HVS). 
Today, CV and Machine Learning (ML) constantly inform each other: modern CV strongly relies on ML techniques while developments in ML are often driven by unsolved problems in CV \cite{torralba2024foundations}. 

Many modern CV systems are inherently complex: They include multiple components and process large data volumes (\eg~during neural network training) \cite{PoYifanGolyanik2024, Awais2025}. 
Primary tools of computer vision, such as deep learning, have started to demand unreasonable and hard-to-satisfy GPU resources when classical computation is concerned. 
Neural architectures in many areas of CV are becoming increasingly larger, parameter-heavy and require more and more time to converge during training. 
Moreover, many CV problems contain combinatorial objectives that cannot be solved by an exhaustive search in a reasonable time. 
As a consequence, they require hardware that is able to fulfil high data processing demands (\eg~graphical processing units (GPUs)). 
In addition, theoretical guarantees of classical solvers are only locally optimal \cite{Sun2020}, and as such, may not be sufficient for certain problems.

Hence, in many cases, progress on the algorithmic side is a valuable alternative to increasing demands in storage and computational resources.
Such algorithmic improvements can be achieved in different ways, \eg~by approximative formulations and solutions instead of exact but computationally expensive or intractable ones. 
This survey investigates a specific type of those, namely related to how \textbf{quantum computing} can be leveraged in computer vision. 

\parahead{Quantum computing at a glance} 
Since the proposal of quantum computers in the 1980s \cite{feynman1982simulating,benioff1980computer,deutsch1985quantum}, substantial progress has been made in their practical experimental realisations. 
The new computational paradigm inspired a multitude of works on theoretical foundations of quantum computing (computer science) \cite{nielsen2002quantum,das2005quantum} and quantum hardware realisations (physics) \cite{easttom2024hardware}. 
Researchers have been actively working on various qubit technologies for the last 25 years, and quantum computers based on them are available now for research purposes. 
Such machines and hardware that are and will become available in the near future---with up to several hundred qubits---are often called \textit{Noisy Intermediate-scale Quantum (NISQ) computers} \cite{preskill2018quantum}. 
In 2019, a quantum computer, Google's Sycamore, was able to perform a  particular calculation tens of orders of magnitude quicker than a classical computer~\cite{arute2019quantum}. 

Quantum computation fundamentally revolves around evolving quantum systems into distinct states using quantum mechanical effects. 
To harness these effects for practical computation, two primary paradigms, governed by two physical principles, have emerged: Adiabatic Quantum Computing (AQC) \cite{das2005quantum,albash2018adiabatic} and gate-based quantum computing \cite{nielsen2002quantum,sutor2019dancing}. 
AQC relies on the smooth evolution or annealing of a so-called Hamiltonian to guide the system toward its lowest-energy state, making it naturally suited for optimisation problems, particularly those expressed in the Quadratic Unconstrained Binary Optimisation (QUBO) form. 
In contrast, gate-based quantum computing employs discrete unitary transformations, offering greater flexibility in algorithm design. 
Together, these paradigms define modern quantum computation, each with distinct advantages that depend on the specific problem domain. 

Broadly speaking, Quantum Computing (QC) allows designing algorithms to solve several classes of computationally challenging problems with possible computational gains depending on the type of QC and the problem. 
The type of computational gains can range from improvements in asymptotic complexity (compared to the best classical counterparts) \cite{watrous2008quantum} to the acceleration of computations in absolute terms (in the same complexity class) \cite{mohseni2022ising}. 
Quantum computers operate on \textit{qubits}, counterparts of classical bits that leverage quantum effects.
These qubits abstractly span a Hilbert space, where computation takes place.
In theory, a quantum computer can perform everything a classical computer can perform and vice versa.
However, the corresponding Hilbert space of multiple qubits is exponentially large ($2^n$-dimensional for $n$ qubits), due to the tensor product structure of quantum mechanics, and so-called entangled states where the qubits cannot be described separately. 
Classical computation of the corresponding exponentially large operators would, of course, be highly time-consuming. 

As of 2023-2025, we observe a transition in the field thanks to the proliferation of real quantum hardware: If previously (pre-2015), quantum hardware was accessible predominantly to researchers developing it, multiple quantum and quantum-inspired computers 
(\eg~adiabatic, gate-based machines, photonic machines, quantum simulators and quantum-inspired optimisers) 
can nowadays be accessed by researchers from different fields, and the developed methods can be tested on real quantum hardware.
This is in contrast to the vast majority of papers on quantum algorithms published before, 
including highly influential ones that have been tested on very small problems on real quantum hardware so far \cite{Shor1997, Lanyon2007, Skosana2021}. 
For the next two decades, experts predict a super-linear increase in the number of qubits \cite{Sevilla2020arXiv} and substantial improvements in the properties of the individual qubits (such as their decoherence and connectivity properties). 
We argue that these predictions should be taken seriously, because the investments in quantum computing and algorithm development are substantial; many national and international initiatives related to quantum computing were brought into being within the last several years. 
Quantum computing power on selected algorithms scales super-linearly (up to exponentially) with the number of qubits. 
Almost all the big technology companies including Intel, IBM, Google,  Microsoft, Amazon, NVIDIA and D-Wave are aiming at a steady increase in the number of qubits and are investing heavily in quantum technology, as the limits of Moore's law is approached\footnote{Researchers even started to use the term \textit{Neven's Law}~\cite{Hartnett2019}, \ie~ referring to the doubly-exponential growth in quantum compute power.}. 
For example, Google has publicly announced its goal to build a commercial quantum computer composed of 1M qubits by 2029\footnote{\url{https://quantumai.google/learn/map}}. 
The Willow chip achieving successful quantum error correction \cite{google_willow} constitutes an encouraging milestone for quantum computing.
In response to the aforementioned algorithmic challenges in computer vision and the opportunities of quantum computing, 
computer vision researchers started turning their attention to this new (for the research field) computational paradigm. 
From both theoretical and practical perspectives, it is both desirable and interesting to investigate new (quantum) algorithms for long-standing (vision) problems. 
Based on different principles than before, such algorithms can bring previously unexpected advantages and new properties to computer vision systems. 
To make the best use of the strongly increasing quantum computing power, we need to make the best use of quantum machines, and we need QeCV algorithms to be ready and scalable when practical quantum computing arrives. 
In the long term, Quantum Processing Units (QPUs) promise to extend the available arsenal of reliable computer vision tools and computational accelerators (with GPUs being an example of currently widely-used technology). 
Fig.~\ref{fig:qcv} provides an overview of different computer vision problems and quantum computational paradigms that can be used to address them, adiabatic and gate-based quantum computing; it also highlights the common steps of every quantum-compatible computer vision approach (\eg~problem embedding to the hardware, annealing or execution of quantum gates, and, finally, read-out of the solution). 
We discuss both quantum computational paradigms in Sec.~\ref{sec:operational_principles}. 

\begin{figure*}[t]
	\begin{center}
		\includegraphics[width=\linewidth]{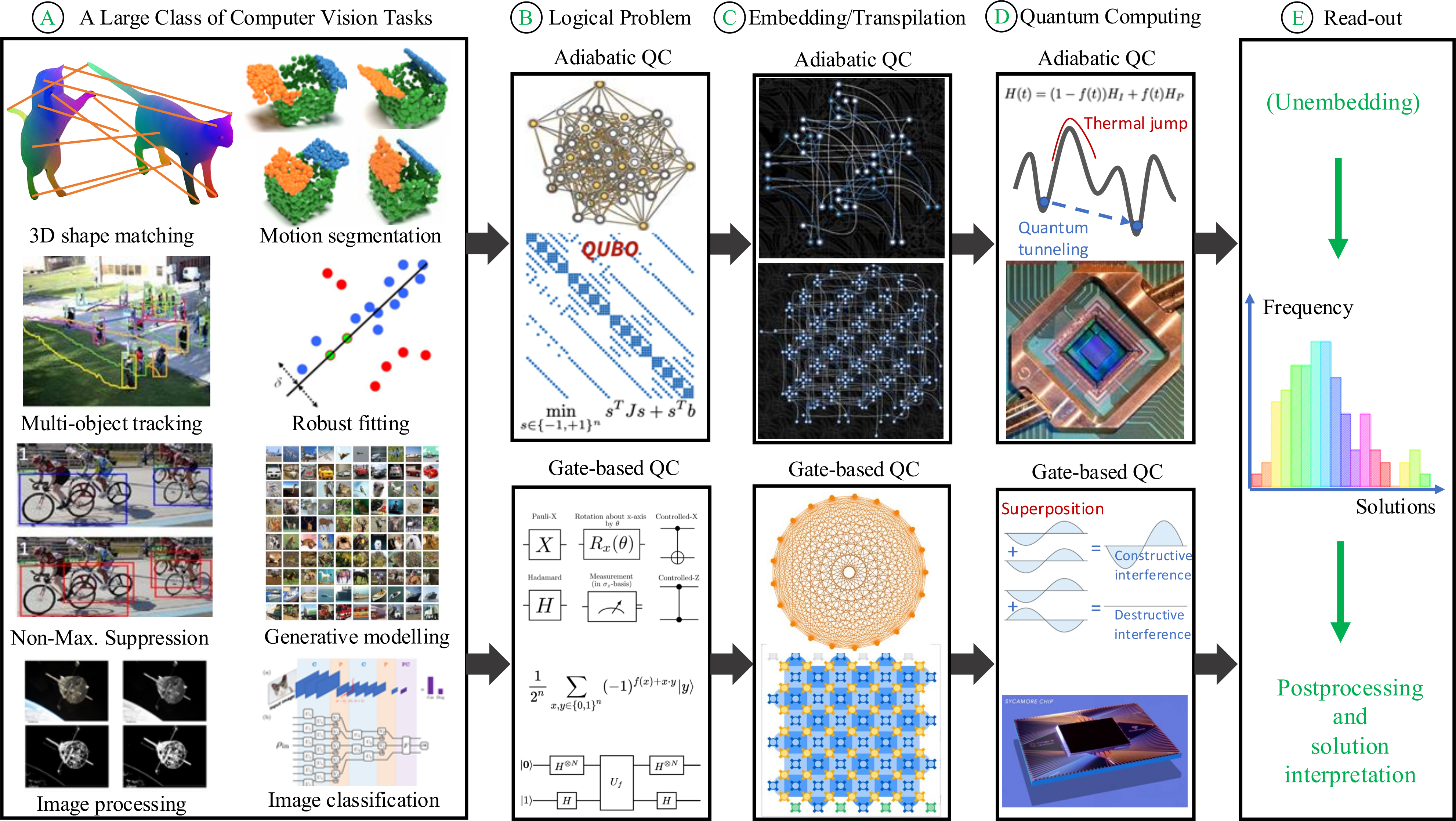}
	\end{center}
	\caption{\parahead{Quantum-\relax enhanced computer vision}. 
	(\textbf{A}): First, a target problem must be formulated in a form consumable by modern quantum machines, \eg~as a QUBO problem for AQC devices or as a gate sequence for gate-based QC.
	This operation is performed on a host (classical CPU).
	(\textbf{B}): In AQC, the resulting QUBO defines a \textit{logical} problem—binary variables that become qubits during optimisation on an idealised quantum annealer with full qubit connectivity. Alternatively, gate-based QC uses a gate sequence to drive the system into a solution-encoding state.
    (\textbf{C}): To run on a quantum computer with limited connectivity, a logical problem must be minor-embedded or transpiled. 
    During this mapping step, each logical qubit is assigned to one or more physical qubits to match hardware constraints.
	(\textbf{D}): An AQC device performs annealing for computation, while a gate-based QC device alternatively executes the algorithm describing gates. 
    Adiabatic computers leverage quantum mechanical effects of superposition and tunnelling to find optima of QUBOs.
    Gate-based computers can additionally harness entanglement and interference to speed up computations, surpassing the capabilities of classical ones.
    (\textbf{E}): Measured qubit values are unembedded from the hardware and aggregated in the AQC paradigm, or directly read out in gate-based QC. 
    The measurement is repeated several times, and a solution distribution is returned to the host. 
    The bit-strings are processed and interpreted in terms of the original problem. 
    Image sources, if applicable (from left to right and top to bottom in each step): 
    (\textbf{A}): 
    \cite[IEEE \textcopyright 2025]{QuantumSync2021},
    \cite[IEEE \textcopyright 2025]{huang2021multibodysync},
    \cite[IEEE \textcopyright 2025]{xiang2015learning},
    \cite{ransac},
    \cite[Springer Nature \textcopyright 2025]{LiGhosh2020},
    \cite[reproduced under the dataset’s academic license]{krizhevsky2009learning},
    \cite[Springer Nature \textcopyright 2025]{yan2023toward},
    \cite[Springer Nature \textcopyright 2025]{cong2019quantum},
    (\textbf{B}): 
    \cite[IEEE \textcopyright 2025]{QuantumSync2021},
    \cite{meyer2022survey},
    \cite[IEEE \textcopyright 2025]{yang2024robust},
    (\textbf{C}): 
    \cite[Springer Nature \textcopyright 2025]{yurtsever2022q},
    \cite[IonQ \textcopyright 2025]{ionq_aria},
    (\textbf{D}): 
    \cite[reproduced under the CC BY-AS 4.0 license]{dwave_wiki},
    \cite[reproduced under the CC BY 3.0 license]{sycamore_wiki}.
    } 
	\label{fig:qcv}
\end{figure*}

\subsection{Computer Vision meets Quantum Computing} 

In the broad sense, \textit{Quantum-enhanced Computer Vision (QeCV) encompasses computer vision methods and systems executed entirely or partially on quantum hardware.} 
The term ``enhanced'' refers to the fact that QeCV methods include classical parts to different degrees (\eg~data encoding, weight matrix preparation, classical neural network parts in hybrid quantum-classical neural architectures) and are boosted (or enhanced) through quantum computations. 
Hence, modern QeCV methods are hybrid and designed as an interplay between classical and quantum parts. 
QeCV falls into the category of applied quantum computing, perhaps among the first such disciplines across the fields. 

\begin{tcolorbox}[enhanced,attach boxed title to top center={yshift=-3mm,yshifttext=-1mm},
  colback=green!2,colframe=green!25!gray,colbacktitle=green!25!gray,
  title=Definition of QeCV,fonttitle=\bfseries, 
  boxed title style={size=small,colframe=green!25!gray} ]
  The goal of QeCV is the development of innovative computer vision techniques (improved or fundamentally new ones) leveraging quantum computational paradigms and surpassing classical methods in terms of processing speed, required resources, accuracy or the ability to learn patterns from complex visual data.
\end{tcolorbox}

We emphasise that this definition is intended to be aspirational, reflecting the evolving and exploratory nature of the field. 
The use of quantum hardware in QeCV approaches must be justified not only from a computational standpoint but also through demonstrable benefits in solution quality or predictive accuracy. 
In other words, it is insufficient to dub a method as \textit{quantum} solely because it can be executed on quantum hardware. 
Recall that a universal quantum computer can execute everything a classical binary machine can, but not the other way around (in reasonable time; classical computers can accurately simulate quantum computations with exponential growth of classical resources, which becomes infeasible for large numbers of qubits \cite{Zhou2020}). 
QeCV is an emerging field. 
The first QeCV method published at a primary computer vision conference was an approach for correspondence problems on point sets \cite{golyanik2020quantum}; it appeared on arXiv.org in 2019. 
The paper provides an introduction to modern quantum annealers and proposes algorithms for transformation estimation and point set alignment that can be executed on a quantum annealer once the inputs are pre-processed to a form admissible to it. 
The term \textit{Quantum Computer Vision}---or QeCV as it is more often called today---was coined later in Birdal and Golyanik~\etal \cite{QuantumSync2021} and since then, it is used to denote computer vision methods relying on quantum hardware. 

\parahead{Applications of quantum computers in vision} 
Not many problems in computer vision can be formulated in a form consumable by modern quantum hardware. 
The estimated number of qubits necessary for practical computer vision problems with gate quantum computing formulations that make use of provably advantageous quantum algorithms is typically larger than available on NISQ architectures. 
It is rare that QUBO forms are available and can be directly tested on an AQC. 
Often, it is the case that the target problem first has to be mapped to QUBO, and the solution has to be encoded in binary form. 
It is, therefore, of broad scientific interest that computer vision (and machine learning) problems can be formulated for quantum computing and efficiently solved with it, while offering advantages compared to their classical method counterparts. 
Moreover, using a quantum mindset to CV problems can provide a new perspective, leading to new insights for classical CV and new methods, especially for the field in which many things are empirical. 
We discuss these aspects in Sec.~\ref{sec:methods_domains}. 
\vspace{0.5mm}
\parahead{Related research fields} 
Several research fields related to QeCV can be identified in the literature, such as quantum-inspired computer vision, Quantum Image Processing (QIP), and Quantum Machine Learning (QML). 

The first category simulates quantum-mechanical effects or draws inspiration from quantum mechanics \cite{Aubry2011, Aytekin2014, Deng2014, SinghBose2021, Cosmo2022}.
These algorithms are not meant for execution on quantum hardware and are solely inspired by quantum phenomena. 
They should not be confused with techniques that can execute on quantum hardware. 
The second method category, \ie~QIP, is a field of quantum information processing focusing on representing and processing images as quantum states \cite{yan2016survey, chakraborty2018quantum, YanVenegasAndraca2020}. 
QIP provides several faster algorithms (in theory) for multiple widely-used linear algebra operations and common low-level operations on images \cite{Harrow2009, aaronson2015read, Gilyen2019}.

Both method categories above can be broadly seen as sub-fields of QeCV, though distinct from this survey’s focus on \emph{mid- and high-level} computer vision tasks (\eg~point set or mesh alignment, object tracking, and robust fitting). 
In contrast, just as classical ML is deeply intertwined with CV, quantum machine learning \cite{schuld2014quest,biamonte2017quantum,cerezo2022challenges} explores the intersection of quantum computing and machine learning to enhance learning algorithms. 
It holds promise for QeCV, potentially accelerating tasks such as image classification, object detection, and pattern recognition.
However, current QML methods remain largely theoretical, with limited practical applications due to quantum hardware and scalability constraints. 
Hence, QML remains distant from this survey’s focus on \emph{practical} applications to CV.

\subsection{Motivation and Scope of this  Survey}
\label{ssec:scope} 
In the broad sense, this survey shows by reviewing the first works in the field, \textit{how quantum computations and quantum hardware can be leveraged for computer vision.} 
It showcases recent and ongoing progress towards practical quantum computing and computer vision, discusses the current state of the art, limitations therein, expected progress and its impact on computer vision. 
Our goal is to provide a shared, computer-science-friendly language and mathematical formulation of quantum computing, covering its two modern paradigms, \ie~gate-based quantum computing and adiabatic quantum computation.
We identify and classify computer vision problems that can be addressed by quantum computers and analyse what they do have in common. 

We observe that the quantum technology acts more and more like a booster for algorithm development in CV. 
The fact that the developed methods could run on real quantum hardware often brings us to interesting discoveries. 
Thus, we can often obtain methods that work better than existing classical ones. 
There are several considerations regarding the gate-based model \textit{vs} adiabatic model. 
Gate-based QCs are currently severely limited in the number of qubits, their connectivity patterns and the accuracy of operations (gates) that can be applied to the qubits. Additionally, decoherence poses a significant obstacle. 
Thus, the largest number factorised on gate-based quantum hardware using the celebrated Shor's algorithm remains $21$ for more than ten years as of the moment of writing \cite{MartinLopez2013, SkosanaTame2021}. 
In contrast, the number of qubits in modern quantum annealers such as D-Wave is larger, which in combination with their connectivity and qubit characteristics allows for solving combinatorial optimisation problems in the Ising encodings of sizes relevant to computer vision and real-world applications. 
Also, qubit coherence times required for AQC are shorter than for gate-based machines, which partially explains the better scalability of quantum annealers compared to gate-based machines. 
Hence, the interest in especially applying AQC in computer vision has grown substantially over the last three years. 

This survey focuses on computer vision methods for gate-based quantum computers and quantum annealers that have been evaluated on real quantum hardware (in the case of adiabatic quantum annealers) or simulators of gate-based quantum computers. 
We include a few theoretical works on gate-based quantum computing in computer vision without experiments on a simulator though this survey is generally structured to reflect the applied focus of QeCV. 
Since computer vision is an applied science, we believe that the criterion of experimentation on quantum hardware is pivotal, especially because modern quantum machines already allow solving problems of sizes encountered in practical applications. 
That is why we believe it is important that the methods are evaluated and the exposition is not restricted to theoretical considerations. 

\vspace{0.5mm}
\parahead{Paper selection criteria} 
QeCV is a nascent field. 
In line with its aspirational definition, this survey presents an overview of methods designed for full or hybrid execution (\ie~classical-quantum) on quantum hardware, emphasising approaches that report theoretically grounded results with potential relevance to QeCV. 
Hence, the main paper selection criterion for this survey is experimental evaluation of the proposed techniques and, at least, some results obtained on real quantum hardware. 
Moreover, we focus on works published at computer vision conferences (CVPR,  ICCV and ECCV) and other venues (perhaps interdisciplinary) that are interested in the application of quantum computers in vision. 
We also include several technical reports on arXiv.org if they fulfil the main paper selection criterion. 
The authors made efforts to provide as complete review of the field as possible, but cannot claim completeness in every aspect, since the field is getting momentum and growing. 
We also recommend interested readers to read the discussed papers for more details. 
\parahead{Related surveys} 
Recently, Quantum Machine Learning (QML) has gained significant attention as it is now a fast-growing area.
The large body of works in the QML space has been reviewed in~\cite{divya2021quantum,ramezani2020machine}, whereas~\cite{li2020quantum,divya2021quantum,abbas2024challenges} also focus on quantum approaches for tackling optimisation problems. 
Out of all machine learning models, quantum deep learning requires special attention due to the impact of these learning machines in the field of AI. 
Massoli et al.~\cite{massoli2022leap} gather, compare and analyse the current state-of-the-art concerning Quantum Neural Networks (QNN). 
Yarkoni~\etal~\cite{yarkoni2022quantum}, 
on the other hand, look into possible industry applications of Quantum Annealing (QA); their survey is perhaps closest to our work in terms of focusing on an application area of QA.

Mohseni et al.~\cite{mohseni2022ising} review different methods for solving Ising problems (exactly or approximately) and discuss quantum annealing as one meta-heuristic. 
A few surveys focus on quantum annealing for physicists~\cite{Hauke_2020}, whereas our work is written for researchers with a computer vision background. 
The short survey by Larasati \textit{et al.}~\cite{Larasati2022} provides a summary of a few quantum computer vision methods (${<}5\%$ of what we cover) and 
is much less comprehensive than our article. 
Another brief paper by Mebtouche \textit{et al.}~\cite{mebtouche2024quantum} reviews the foundations of quantum computing, machine learning and multiple low-level vision and image processing techniques. 
In this regard, ours is the first comprehensive survey of its kind with an emphasis on QeCV. 

\parahead{Target audience} 
This survey \textbf{it is written for applied computer scientists, especially computer vision researchers and practitioners, who may have little to no background in theoretical physics.}
Unlike previous surveys~\cite{divya2021quantum,ramezani2020machine,li2020quantum,abbas2024challenges,massoli2022leap,mohseni2022ising,mebtouche2024quantum} on quantum algorithms, mostly published outside traditional computer vision venues, our goal is to make  QeCV accessible and actionable for a broader audience of computer vision researchers and engineers. 
By consolidating multiple contributions, we aim to bridge that gap and provide a comprehensive overview of existing QeCV methods, along with insights into future directions.

We present a practical ``cookbook'' for computer vision practitioners to begin exploring quantum-enhanced techniques today and to prepare for the growing impact of quantum technologies. 
While the survey avoids deep dives into physics and quantum theory, it includes sufficient technical detail to ensure a solid foundation for the presented results and the validity thereof.
Physical assumptions are clearly separated from the mathematical consequences relevant to algorithmic design.
The intended readership includes:
\begin{itemize}
    \item Computer vision researchers seeking to understand whether their work can benefit from quantum computing; 
    \item Computer vision practitioners interested in experimenting with QeCV algorithms or integrating quantum solvers into their workflows; 
    \item Computer science students at various levels who are curious about the intersection of computer vision and applied quantum computing.
\end{itemize}
We hope this survey will serve as a gateway for new researchers to enter the field and contribute to its development.

\subsection{Structure of this Survey} 
This survey is structured in six sections. 
Sec.~\ref{sec:operational_principles} reviews foundations of quantum computing relevant to quantum-enhanced computer vision methods including two main quantum computational paradigms, \ie~gate-based (or circuit-based) and adiabatic quantum computing, as well as the design and hardware of quantum computers. 
Moreover, as part of it, Sec.~\ref{sec:ProblMapMethGateBased} 
discusses the encoding of classical data as quantum states suitable for processing on quantum hardware, which is one of the essential steps in quantum-enhanced computer vision. 
Next, Sec.~\ref{sec:methods_domains} is devoted to algorithms and applications. 
It reviews methods for different problems such as point set alignment, mesh registration, object tracking, model fitting, quantum machine learning for vision, implicit representations and generative approaches, among others. 
We then discuss open challenges, specifics of the field and social implications in Sec.~\ref{sec:discussion}, and conclude in Sec.~\ref{sec:conclude}.

\section{Operational Principles of QCs}\label{sec:operational_principles} 
This section reviews the foundations of quantum computing necessary as a short introduction (or a refresher) to the field and sufficient for understanding the published literature reviewed in Sec.~\ref{sec:methods_domains}. 
Sec.~\ref{sec:fundamentals} introduces fundamentals such as notations and elementary operations on qubits. 
Secs.~\ref{ssec:gate_based} and \ref{ssec:adiabatic} describe the two main quantum computational paradigms—namely, gate-based quantum computing and quantum annealing, respectively—and Sec.~\ref{ssec:ConnectionsParadigms} establishes their connections and equivalency.
Sec.~\ref{eq:advantage} examines potential advantages of quantum computing over classical computing and Sec.~\ref{ssec:DesignArchitecture} discusses hardware designs and realizations of quantum computers. 
Note that we do not review complexity classes associated with the quantum computational paradigm, as their discussion is outside the scope. 
In this survey, it is sufficient to mention that quantum computers can accelerate computations relevant to computer vision and provide different and favourable properties in quantum-enhanced learning models; both of these characteristics provide strong motivation for the newly established research direction of QeCV. 
We recommend references~\cite{BernsteinVazirani1997, Watrous2009} to readers wishing to learn more about quantum complexity theory. 

\subsection{Fundamentals and Notations}
\label{sec:fundamentals} 
This background section introduces the fundamentals of quantum computing through both gate-based and adiabatic paradigms. 
It also establishes the notation and terminology used throughout the survey. 
For clarity and ease of reference, a comprehensive overview of symbols and acronyms is provided in Tabs.~\ref{tab:symbols} and~\ref{tab:acronyms}; note that the acronym list excludes method names and company identifiers.

\begin{table}
	\centering
	\caption{List of symbols used in the survey.} 
	\begin{tabular}{cl}\hline
		Symbol  & Description \\\hline\hline
		$\mathbb N$ & Set of natural numbers\\
		$\mathbb R$ & Set of real numbers\\
		$\mathbb C$ & Set of complex numbers\\
		$i$ & Complex unit (if not stated otherwise)\\
		$\hbar$ & Reduced Planck’s constant\\
		$\theta$ & Optimisable rotation angle\\
		$n$ & Number of qubits in the system\\
		& \\
		$\ket{\psi}$ & State vector of the system \\
		$\bra{\psi}$ & Conjugate transpose of $\ket{\psi}$\\ 
		$\langle\psi_1 |\psi_2\rangle$ & Complex inner product of $\ket{\psi_1}$ and $\ket{\psi_2}$ \\
		$\|\square\|$ & $L_2$-norm of $\square$\\
		$|\square|$ & Absolute value of $\square$\\
		$\rho$ & Density operator, \ie~$\rho = \ket{\psi}\bra{\psi}$\\
		& \\
		$\otimes$ & Kronecker product, Tensor product \\
		$G$ & Generator of a unitary gate \\
		$I$ & Identity matrix or operator (per context) \\
		$H$ & Hamiltonian or Hadamard gate (per context) \\
		$H_I$ & Initial Hamiltonian \\
		$H_P$ & Problem Hamiltonian \\
		$M$ & Measurement observable\\
		$U$ & Unitary operator\\
		$\sigma^{\{x, y, z\}}$ & Pauli-$X$, -$Y$ and -$Z$ operators \\
		$\mathrm{Tr}$ & Trace operator \\
		$\lambda$ & Eigenvalue or penalty factor (per context) \\
		& \\
		$T$ & Total time \\
		$\Delta t$ & Time step \\
		& \\
		$s$ & Ising variables, \ie~$s \in \{-1, 1\}^n$\\
		$x$ & QUBO variables, \ie~$x \in \{0, 1\}^n$\\
		$v$ & General binary variable, \ie~$v \in \{v_1, v_2\}^n$\\
		$J, b$ & Couplings and biases of an Ising problem\\
		$Q, c$ & Couplings and biases of a QUBO problem\\
		$W, w$ & Couplings and biases of a general quadratic\\
		& binary decision problem\\
		$A, b$ & Matrix and vector of linear constraints\\
		$f$ & Schedule or objective function (per context)\\
		$\mathcal L$ & Loss function of PQCs\\
		\hline
	\end{tabular}
	\label{tab:symbols} 
\end{table} 

\begin{table}
	\centering
	\caption{List of frequent acronyms used in the survey.} 
\begin{tabular}{cl}\hline
    Acronym  & Description \\\hline\hline
    AQC & Adiabatic quantum computing\\
    BNN & Binary neural networks\\
    CPU & Central processing unit\\
    CV & Computer vision\\
    CVPR & Computer Vision and Pattern Recognition\\
    ECCV & European Conference on Computer Vision\\
    GPU & Graphics processing unit\\
    GQC & Gate-based quantum computing\\
    ICCV & International Conference on Computer Vision\\
    MLP & Multi-layer perceptron\\
    ML & Machine learning\\
    NISQ & Noisy intermediate-scale quantum\\
    NN & Neural networks\\
    PQC & Parameterized quantum circuit\\
    QA & Quantum annealing\\
    QBO & Quantum binary optimization\\
    QC & Quantum computing\\
    QCT & Quantum complexity theory\\
    QCNN & Quantum convolutional neural network\\
    QCVML & Quantum computer vision and machine learning\\
    QDK & Quantum development kit\\
    QeCV & Quantum-enhanced computer vision\\
    QIP & Quantum image processing\\
    QML & Quantum machine learning\\
    QNN & Quantum neural networks\\
    QPU & Quantum processing unit\\
    QUBO & Quadratic unconstrained binary optimization\\
    SDK & Software development kit\\
    SSD & Sum of squared distances\\
    \hline
\end{tabular}
	\label{tab:acronyms} 
\end{table}

Many concepts in quantum computing have direct analogues in classical computing and optimization theory. 
For example, Hamiltonians represent energy functions, with eigenstates corresponding to energy levels and ground states denoting the lowest-energy configurations. 
Throughout the survey, we assume these physical Hamiltonians to be Hermitian operators, and unless otherwise stated, the quantum systems considered are closed—\ie, they do not exchange particles with their environment.
We adopt the bra–ket notation to concisely express familiar linear algebraic constructs such as row and column vectors, inner and outer products, and tensor products. 
This notation streamlines the presentation of quantum algorithms and aligns with conventions in both physics and quantum information theory.

\parahead{Single qubits} Let us start with the most fundamental building block and elementary information unit of a quantum computer, the \textit{qubit.}
\begin{assumption}{Representation of Qubits}{normalized}
	The information carrier in a quantum computing system called \textit{qubit} is described by a two-dimensional complex vector of length one, \ie~
	\begin{equation}
		\label{eq:normalized}
		\ket{\psi} \in \C^2, ~ \| \ket{ \psi }\|^2   = 1
	\end{equation}
\end{assumption}

	We adopt the widely-used \textit{bra--ket} notation common in physics and quantum computing to write vectors and their conjugate transposes: 
	$\ket{\psi}$ (ket) denotes a column vector and $\bra{\psi}=\ket{\psi}^*$ (ket) denotes its conjugate transpose. 
	The multiplication of a bra- and a ket-vector $\bra{\psi_1}$ and $\ket{\psi_2}$, denoted $\braket{\psi_1|\psi_2}$, results in their inner product in the $\mathbb{C}^2$ Hilbert space\footnote{Note that in quantum mechanics, states can also have higher dimension than qubits. For non-discrete quantities like momentum or position, these states are not finite-dimensional but functions from the Hilbert space $\mathcal{L}^2(\mathbb{R}^3)$, \ie~a so-called Lebesgue space. In this case, the bra vectors can be understood as functionals from $\mathcal{L}^2$. 
    Furthermore, even the formulation on $\mathcal{L}^2$ still has shortcomings and a full mathematical treatment would have to resort to rigged Hilbert spaces to describe distributions and deal with unbounded operators~\cite{riggedHilbert}.}. 
	We can write an arbitrary two-dimensional complex vector as a column vector:
	\begin{equation}
		\ket{\psi}= \begin{bmatrix}
			a +ib \\
			c + id \\
		\end{bmatrix},
	\end{equation}
	with real coefficients $a,b,c,d \in \R$. 
	The normalisation condition of the qubit's state vector in Eq.~\eqref{eq:normalized} then yields $a^2 + b ^2 + c^2  + d^2 = 1$.

	Next, we translate a quantum-physical phenomenon, namely that particles can also be in \textit{superposition} of states.
	Only after measurement do they collapse to one of the classical states with a certain probability. 
	This is modeled mathematically in the following way: Let $\ket{0} \in \C^2$ and $\ket{1}\in \C^2$ form an orthonormal basis of $\C^2$ (whose precise form depends on the setup of the physical system). For example, one can have in column vector notation: 
	\begin{equation}
		\ket{0} = \begin{bmatrix}
			1 \\
			0 \\
		\end{bmatrix},
		\quad 
		\ket{1} = \begin{bmatrix}
			0 \\
			1 \\
		\end{bmatrix}. 
	\end{equation}
	
	\begin{assumption}{Measurements}{measurements}
		When the state of a qubit $\ket{\psi} = \alpha \ket{0} + \beta \ket{1}\in  \mathbb{C}^2$ is \textit{measured} (with respect to the basis $\{\ket{0},\ket{1}\}$), it results in a state 
		\begin{equation}\label{eq:measurement}
			\begin{cases}
				\ket{0} & \quad \text{with probability } |\alpha|^2 = |\braket{0|\psi}|^2, \\
				\ket{1} & \quad \text{with probability } |\beta|^2 = |\braket{1|\psi}|^2 .
			\end{cases}
		\end{equation}
	\end{assumption}
	In other words, a qubit exists in a superposition of classical states as $\ket{\psi} = \alpha \ket{0} + \beta \ket{1}$, where $\alpha$ and $\beta$ are \emph{probability amplitudes}. 
	Upon measurement, the qubit collapses into either $\ket{0}$ or $\ket{1}$, with the \emph{measurement probability} given by the square of the absolute value of the respective amplitude: $\lvert \alpha \rvert^2$ for $\ket{0}$ and $\lvert \beta \rvert^2$ for $\ket{1}$. 
	This is also called \textit{collapse of the wave function}. 
	The act of measurement w.r.t.~a basis changes the state into one of the basis elements, with probabilities defined by the projections of the state onto the basis. 

	\vspace{0.5mm}
	\parahead{Bloch sphere} 
	As the probabilities of obtaining certain measurement outcomes depend only on the \textit{magnitude} of the coefficients, it is easy to see that a change of global phase, \ie~a multiplication by a global factor $ e^{i \phi} $ with $ \phi \in \mathbb{R} $, does not affect any measurable properties of a qubit. Due to this ambiguity in the complex phase, it is common to fix $ \alpha $, the coefficient for the first basis vector $ \ket{0} $, to be real, \ie~ $ \alpha \in \mathbb{R} $, thereby resolving the ambiguity. 
	
	Along with the normalisation condition $ |\alpha|^2 + |\beta|^2 = 1 $ as given in Eq.~\eqref{eq:normalized}, any qubit state can be expressed as
	\begin{equation}
		\ket{\psi} = \cos(\theta/2) \ket{0} + e^{i \varphi} \sin(\theta/2) \ket{1},
	\end{equation}
	The two angles $ \theta \in [0,\pi] $ and $ \varphi \in [0,2\pi] $ naturally define a point on the unit sphere in three dimensions, known as the \textit{Bloch sphere}. The state of a qubit $ \ket{\psi} $ is frequently visualised in this representation; see Fig.~\ref{fig:bloch_sphere}. 
	
	\begin{figure}[t]
		\centering
		\includegraphics[width=0.35\textwidth]{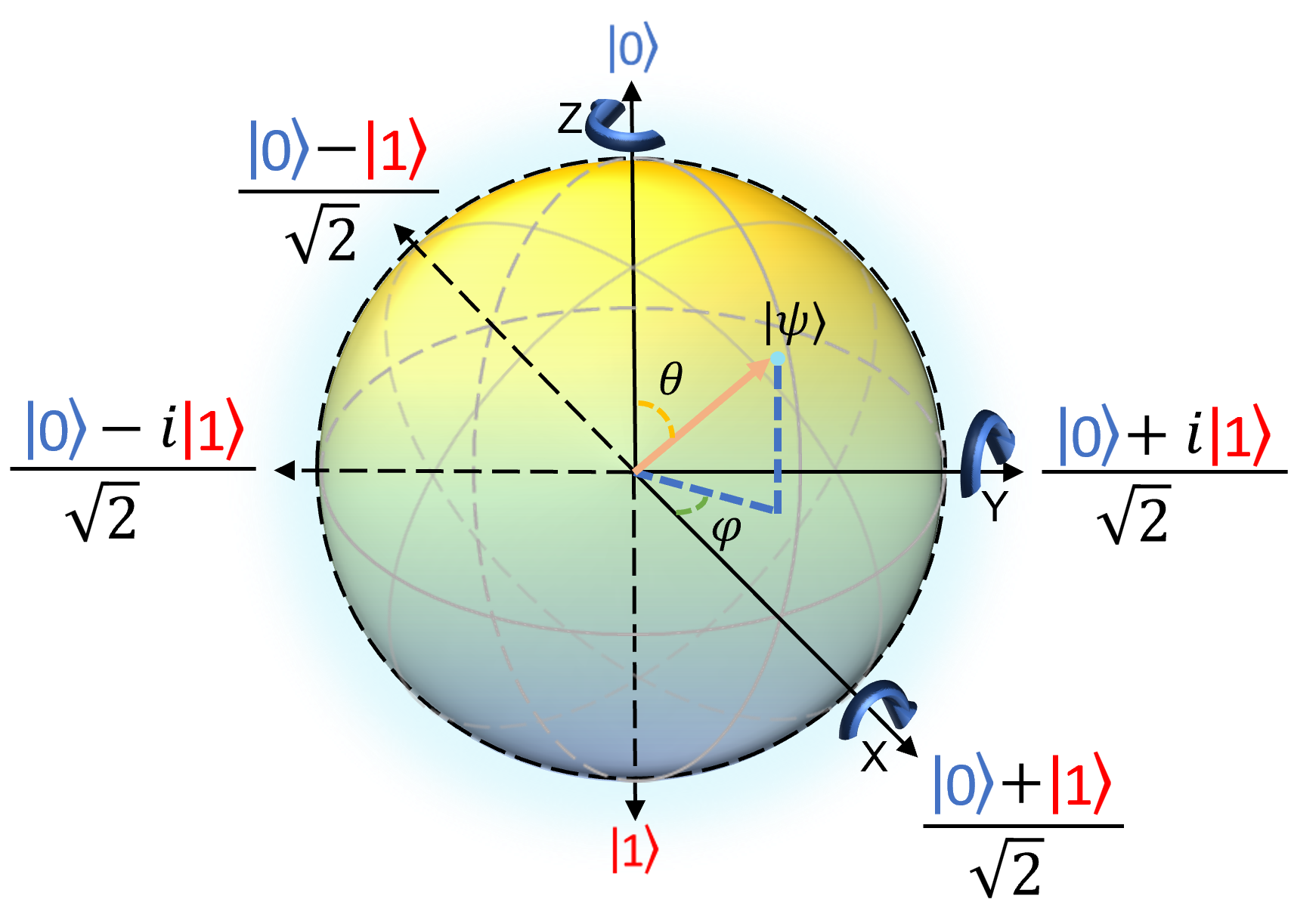}
		\caption{Visualising an arbitrary state of a qubit $\ket{\psi}$ on the Bloch sphere along with its several widely encountered states. Although in the original space $\mathbb{C}^2$ the states $\ket{0}$, $\ket{1}$ are orthogonal, they are visualised as opposite poles on the Bloch sphere. 
		} 
		\label{fig:bloch_sphere}
	\end{figure}
	
	\vspace{0.5mm}
	\parahead{Multi-qubit systems}
	When multiple qubits $\ket{\psi_1},\ket{\psi_2}, \hdots, \ket{\psi_n}$ are considered, their overall state $\ket{\psi}$ is described by the \textit{tensor or Kronecker product} of the individual qubit states and results in a vector consisting of all combinations of products of the individual coefficients:
	\begin{equation}
		\label{eq:tensorproduct}
		\ket{\psi} = \ket{\psi_1} \otimes \ket{\psi_2} \otimes \hdots \ket{\psi_n} \in \C^{2^n}. 
	\end{equation}
	For two qubits $\ket{\psi_1} = \alpha \ket{0} + \beta \ket{1}$ and $\ket{\psi} = \gamma \ket{0} + \delta \ket{1}$, this product state reads: 
	\begin{equation}\label{eq:kronecker2}
		\ket{\psi} = \ket{\psi_1} \otimes \ket{\psi_2} =
		\begin{bmatrix}
			\alpha \gamma \\
			\alpha \delta \\
			\beta \gamma \\
			\beta \delta \\
		\end{bmatrix}.
	\end{equation}
	A useful shorthand notation for the Kronecker product is
	\begin{equation}
		\ket{\psi} = 
		\ket{\psi_1} \otimes \ket{\psi_2} \otimes \hdots \otimes \ket{\psi_n} = \ket{\psi_1\psi_2 \hdots \psi_n},
	\end{equation}
	and such a system of multiple qubits is called \emph{quantum register}.

	Note that although the combination of multiple qubits is represented in a $2^n$-dimensional complex vector space, one would expect that actual (physically meaningful) vectors occupy a low dimensional subset of only those $2^n$ dimensional vectors that can be decomposed into a tensor product \eqref{eq:tensorproduct}. 
	Note, for example, that the tensor product of two vectors $\ket{\psi_1}$ and $\ket{\psi_2}$ is (a vectorisation of) their outer product, such that 2-qubit-states of the form \eqref{eq:kronecker2} can be identified with rank-1 matrices in the space of all complex $2{\times}2$ matrices. The extremely interesting and important concept of \textit{entanglement} implies that multiple-qubit systems are \textbf{not} limited to so-called \textit{separable} states that decompose as Eq.~\eqref{eq:tensorproduct}. 
	The state 
	\begin{equation}
		\ket{\psi} = \frac{1}{\sqrt{2}}(\ket{01} +\ket{10}) = \frac{1}{\sqrt{2}}
		\begin{bmatrix}
			0 \\
			1 \\
			1 \\
			0 \\
		\end{bmatrix},
	\end{equation}
	for example, cannot be decomposed as $\ket{\psi_1} \otimes \ket{\psi_2}$ from Eq.~\eqref{eq:kronecker2}
	since there are no $\alpha,\beta,\gamma,\delta$ that can simultaneously satisfy the equations $\alpha \gamma=0, ~ \alpha \delta =  1/\sqrt{2}, ~ \beta \gamma= 1/\sqrt{2}, ~ \beta \delta = 0$.
	This is one of the famous Einstein-Podolsky-Rosen states~\cite{nielsen2002quantum}.
	\begin{assumption}{Entanglement}{entangle}
		A system of $n$ many qubits can evolve into \textit{any} state in $\C^{2^n}$. 
        States that can be represented as tensor-product states via Eq.~\eqref{eq:tensorproduct} are called \textit{separable}, states that do not admit such a representation are called \textit{entangled}.  
	\end{assumption}
	Note that Assumption 2 extends to multi-qubit systems, irrespective of whether the system is separable or entangled. 
	In other words, the probability of measuring a specific multi-qubit basis state is equal to the squared magnitude of the corresponding coefficient.
	
	\parahead{Qubit evolution} 
	It is possible to manipulate the state of an $n$-qubit quantum physical system experimentally.
    For example, let $\ket{\psi(0)}$ be the state of the quantum system at time $t$.
	We can prepare an initial state $\ket{\psi(0)}$ and manipulate it at any time $t$ with external influences $H(t) \in \C^{2^n \times 2^n}$  determining the system's energies by letting the $n$-many qubits experience a certain coupling. 
	The way the state vector behaves under such an evolution is described by the famous Schrödinger Equation.
	\begin{assumption}{Schrödinger Equation}{schroed}
		The (time) evolution of a quantum state $\ket{\psi(t)}$ is described by the \textit{Schrödinger Equation} 
		\begin{equation}
			\label{eq:schroedinger_equation}
			i \hslash \frac{d}{dt}\ket{\psi(t)} = H(t) \ket{\psi(t)},
		\end{equation}
		where the \textit{Hamiltonian} $H(t) \in \C^{2^n \times 2^n}$ is a Hermitian matrix determined by the experimental setup, $i$ is the imaginary unit and $\hslash$ is the reduced Planck constant. 
	\end{assumption}
	A closer look at the solutions to this differential equation reveals that they 
	follow some unitary time evolution~\cite{nielsen2002quantum}.
	This is in particular consistent with what we learned that quantum states have norm 1, since the length of the components is related to measurement probabilities. 

    \subsection{Gate-Based Quantum Computers}
    \label{ssec:gate_based}
    
	As alluded to in Sec.~\ref{sec:fundamentals}, the Schrödinger equation~\eqref{eq:schroedinger_equation} governs the evolution of a closed quantum system over time. 
	Moreover, it can be proven that solving the Schrödinger equation always leads to a unitary time evolution~\cite[Chapter 2]{nielsen2002quantum}. 
	Gate-based quantum computers manipulate quantum states through a controlled sequence of unitary Hamiltonian evolutions---each abstracted as a \emph{quantum gate}---to perform computations. The specific algorithm implemented depends on the particular quantum gates chosen (Sec.~\ref{sec:broadoverviewgate}). While the earlier quantum algorithms, including the famous algorithms of Shor~\cite{Shor1997} and Grover~\cite{grover1996fast}, are based on ``handcrafted'' quantum gates, increasingly quantum gates are selected by ``training'' on data to optimise an objective function~\cite{cerezo2021variational,peruzzo2014variational,Farhi2014} (Sec.~\ref{sec:composegates}).

	\subsubsection{An Overview of Gate-based Quantum Algorithms}
	\label{sec:broadoverviewgate}
	
	Since algorithms under the gate-based model are often defined using a sequence of quantum gates, quantum algorithms are often framed as \emph{quantum circuits}. 
    Fig.~\ref{fig:qcircuit} provides an overview of the major steps in a quantum algorithm, displayed as an $n$-qubit quantum circuit. First, a basic quantum state (\eg~one of the classical states) is generated. Typically, the basic state is brought into superposition as a form of initialisation. Then, a sequence of quantum gates is applied to the initial state to achieve a final quantum state. 
    Since a sequence of non-interrupted (\eg~without intermediate measurements) unitary transformations is equivalent to a single unitary transformation, the successive quantum gates that define the algorithm can be seen as a single quantum gate ``block''. 
    The final quantum state is not directly usable until it is \emph{measured}, at which point it collapses to yield classical (binary) information. 
    This collapse reduces the degrees of freedom available for further computation, which is why many quantum algorithms repeat the initialise–transform–measure cycle multiple times—often with intermediate feedback—to amplify the probability of success and extract reliable outcomes from probabilistic quantum processes.
    Measurement mathematically manifests in projecting the quantum state onto the eigenbasis of a Hermitian operator known as an \emph{observable}, with the output given by the eigenvalue of one of its eigenvectors (see Eq.~\eqref{eq:measurement} for an elementary example). 
    The choice of observable depends on the algorithm, while the likelihood of obtaining a particular outcome is determined by the final quantum state.

	\begin{figure}[t]\centering
		\vspace{-.1cm}
		\includegraphics[scale=.98]{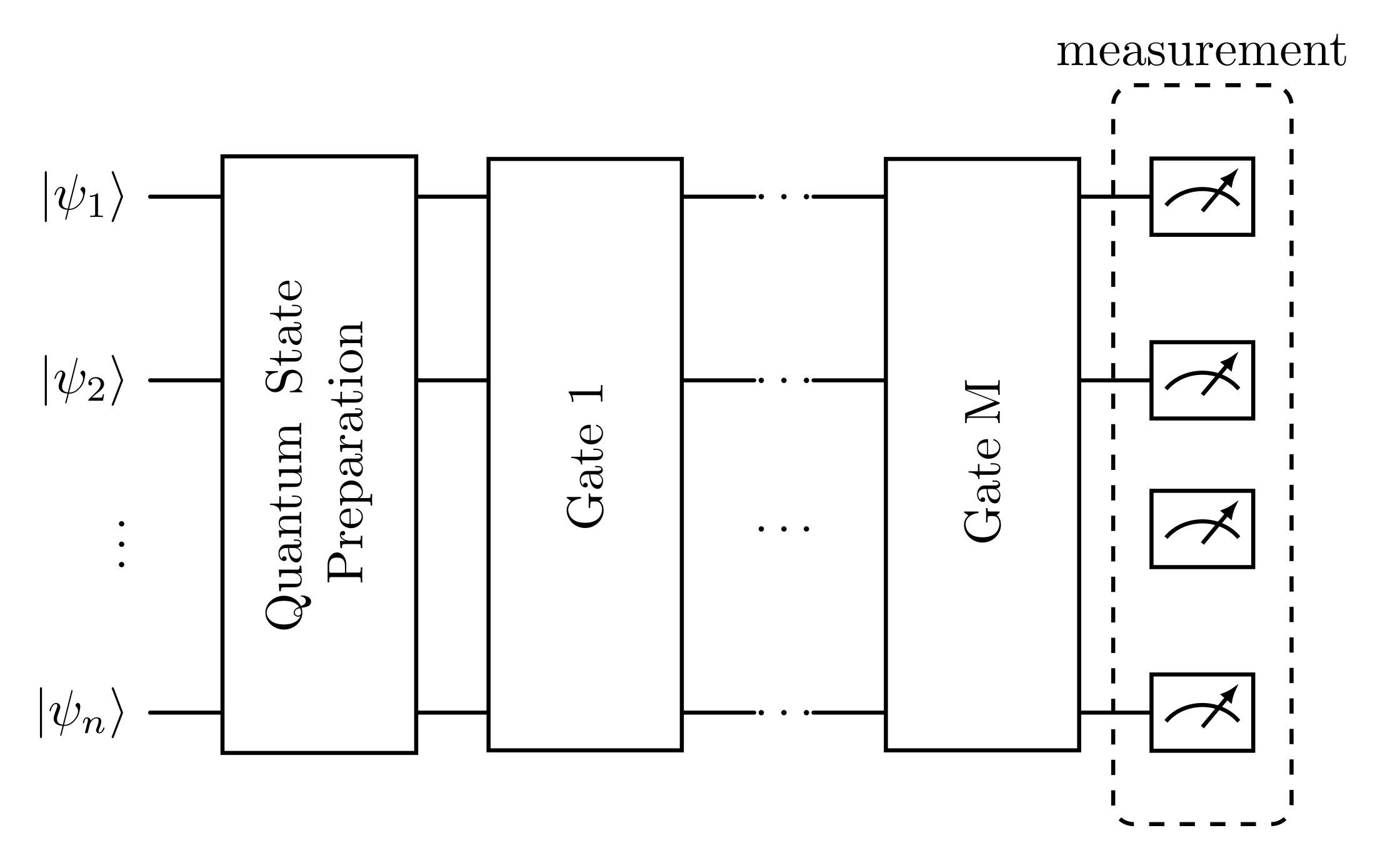}
		\caption{ 
			Common processing stages in a typical quantum circuit. Here, the quantum circuit operates on $n$ qubits, which at the beginning are often initialised to a basic state, \eg~$\ket{\psi_1 \psi_2 \dots \psi_n} = \ket{10\dots 0}$. Then, the basic state is prepared to yield an initial (usually superimposed) state, before a sequence of $M$ quantum gates is invoked. The specific gates employed define the algorithm implemented. The final quantum state is then measured to produce classical outcomes that correspond to the final results, which are often probabilistic. It is also common for a circuit to be repetitively invoked or iterated. 
		}
		\label{fig:qcircuit}
	\end{figure}

	Similar to classical algorithm design---where we are not concerned with the underlying implementation of the basic computational units (\ie~logical gates)---in quantum algorithm design, details of the physical realisation of the quantum gates and measurement devices are typically also abstracted away. 
	
	Constraining the operations to be unitary transformations can seem restrictive. 
	However, it can be shown that all (classical) logical gates can be made reversible without significantly adding more bits or logical operations, thereby allowing the logical gates to be interpreted as unitary transformations~\cite[Sec.~1.4]{nielsen2002quantum}. 
	It turns out that any efficient classical algorithm can also be efficiently executed on a quantum computer in a analogous way~\cite[Sec.~3.2]{nielsen2002quantum}. 
	However, significant interest in gate-based quantum computers draws from their ability to solve certain problems that are beyond the reach of classical machines~\cite{Shor1997,grover1996fast} (Sec.~\ref{sec:gatespeedup} in the Appendix provides a rudimentary example of an algorithm with a theoretical speedup due to quantum effects). 
	%
	Before reviewing some applications of gate-based quantum computing to QeCV, it is essential to elaborate deeper on quantum gates and quantum circuits. 

	\subsubsection{Quantum Gates and Circuits}
    \label{sec:composegates}
	
	As alluded above, all classical logical gates (\eg~AND, OR, NAND) can be made reversible, \ie~the inputs can be recomputed from the outputs. This can be achieved through the Toffoli gate, which has three input and three output bits\footnote{A basic requirement for a reversible gate is that the number of input and output bits are the same.}. By hardwiring selected inputs (to 0 or 1), the Toffoli gate can simulate all logical gates in a reversible manner. Details of reversible logical gates are beyond the scope of this survey; for that, we refer the reader to~\cite[Sec.~1.4]{nielsen2002quantum}. 

	\parahead{Single-qubit gates}
	Single-qubit gates are elementary gates acting on one single input qubit $\ket{\psi}$. 
	For example, the Pauli gates
	\begin{equation}
		\label{eq:pauli}
		X= \left[ \begin{matrix}
			0&1 \\
			1&0 
		\end{matrix} \right], \quad
		Y= \left[ \begin{matrix}
			0&-i \\
			i&0 
		\end{matrix} \right],
		\quad
		Z= \left[ \begin{matrix}
			1&0 \\
			0&-1 
		\end{matrix} \right],
	\end{equation}
	are used in many quantum algorithms.
	They are often graphically displayed as follows: 
	\begin{center}
		\begin{quantikz}
			\ket{\psi}&\gate{X}&\qw \;\;
			\ket{\psi}&\gate{Y}&\qw \;\;
			\ket{\psi}&\gate{Z}&\qw
		\end{quantikz}
	\end{center}
	Defining quantum gates as matrices, as in Eq.~\eqref{eq:pauli}, allows quantum operations to be performed as multiplications of unitary matrices, which can be verified: 
	Consider the vector form of a one-qubit state
	\begin{align}
		\ket{\psi} = \alpha\ket{0} + \beta\ket{1} = \alpha \left[ \begin{matrix} 1 \\ 0 \end{matrix} \right] + \beta \left[ \begin{matrix} 0 \\ 1 \end{matrix} \right] = \left[ \begin{matrix} \alpha \\ \beta \end{matrix} \right].
	\end{align}
	Applying the Pauli-$X$ on $\ket{\psi}$ implies conducting a matrix-vector multiplication, yielding $X\ket{\psi} = \beta\ket{0} + \alpha\ket{1}$.
	In particular, setting $\alpha = 1$ and $\beta = 0$ yields $X\ket{\psi}=\ket{1}$, which can be interpreted as ``flipping'' the basic state of $\ket{0}$ to $\ket{1}$ (and vice versa by if $\alpha = 0$ and $\beta = 1$). 
	Indeed, the Pauli $X$ gate is actually a (reversible) NOT gate. 

	Another basic quantum gate is the  Hadamard gate
	\begin{align}
		H= \frac{1}{\sqrt{2 }}\left[ \begin{matrix}
			1&1\\
			1&-1
		\end{matrix} \right],
	\end{align}
	which is commonly used to bring basic states into superposition (and vice versa). 
	\eg~applying $H$ on $\ket{1}$ yields $H\ket{1} = \frac{1}{\sqrt{2}}(\ket{0} - \ket{1})$,
	\ie~a state of uniform superposition. 
	Applying $H$ again on the above result yields the initial basic state: $H(H\ket{1}) = HH\ket{1} = \ket{1}$.
	As can be deduced, applying quantum gates successively is equivalent to performing a quantum operation that is defined by the multiplication of the corresponding unitary matrices. 
	The following quantum circuits depict the application of $H$ and $HH$ on the input qubits $\ket{\psi}$:
	\begin{center}
		\begin{quantikz}
			\ket{\psi}&\gate{H}&\gate{H}&\qw
		\end{quantikz} $\equiv$
		\begin{quantikz}
			\ket{\psi}&\gate{I_2}&\qw
		\end{quantikz}
	\end{center}

	\parahead{Composition of single-qubit gates}
	As introduced in Sec.~\ref{sec:fundamentals}, the state space of a $n$-qubit system is the tensor product space $\C^{2^n}$. Accordingly, single-qubit quantum gates (specifically, their matrix representations) should also be tensored to obtain the corresponding $n$-qubit quantum gate. 
	An example is the $2$-qubit quantum gate $H^{\otimes 2}$, which has the following circuit representation: 
	\begin{center}
		\begin{quantikz}
			\ket{\psi_1}&\gate{H}& \qw \\
			\ket{\psi_2}&\gate{H}& \qw
		\end{quantikz} $\equiv$ \begin{quantikz}
			\ket{\psi_1}&\gate[2]{H^{\otimes 2}}& \qw\\
			\ket{\psi_2}&&\qw
		\end{quantikz}
	\end{center}
	In matrix form, the $H^{\otimes 2}$-gate is given by
	\begin{align}
		H^{\otimes 2} = H \otimes H = \frac{1}{2}\left[ \begin{matrix} 1 & 1 & 1 & 1 \\
			1 & -1 & 1 & -1 \\
			1 & 1 & -1 & -1 \\
			1 & -1 & -1 & 1    
		\end{matrix} \right].
	\end{align}
	The application of a composite gate on a separable quantum state follows the calculation rule of the Kronecker product: With appropriate dimensions, the product $(A\otimes B) \cdot (x \otimes y)$ of two Kronecker products is the Kronecker product $ (Ax)\otimes (By) $ of two products.
	Hence, applying for example $H^{\otimes 2}$ on the 2-qubit quantum state $\ket{01}$ results in $H^{\otimes 2}\ket{01} = H\ket{0} \otimes H\ket{1}$, which is a 2-qubit state in uniform superposition. It is easy to verify that sequentially applying $H^{\otimes 2}$ twice is the same as performing the identity operation~$I_4$.
	
    \parahead{Controlled gates}
	Entangled quantum states cannot be constructed by applying composite single-qubit gates to a separable initial state.
	Those particular states are obtained with controlled gates, which condition the application of a gate to a set of qubits on the state of other qubits.
	A popular representative of controlled gates is the controlled-NOT (CNOT) gate.
	It has the following circuit representation:
	\begin{center}
		\begin{quantikz}
			\ket{\psi_1}&\ctrl{1}&\qw \\
			\ket{\psi_2}&\targ{}&\qw
		\end{quantikz}
	\end{center}
	In the above circuit, we apply the $X$ or NOT gate on qubit $\ket{\psi_2}$ if qubit $\ket{\psi_1}$ is in the $\ket{1}$-state and do nothing otherwise.
	The CNOT gate has the following matrix representation: 
	\begin{align}
		\mathrm{CNOT}  = 
		\left[ \begin{matrix} 1 & 0 & 0 & 0 \\
			0 & 1 & 0 & 0 \\
			0 & 0 & 0 & 1 \\
			0 & 0 & 1 & 0    
		\end{matrix} \right].
	\end{align}
	Example applications of CNOT on basis states yield $\mathrm{CNOT}\ket{00} = \ket{00}$ and $\mathrm{CNOT}\ket{10} = \ket{11}$.

	Similar to classical computers, where any logical circuit of arbitrary complexity can be composed from a small set of universal logical gates (\textit{e.g} NAND by itself is a universal logical gate), there exist \emph{universal quantum gates}, which are a small set of one- or two-qubit gates (\eg~the Hadamard, controlled-NOT and phase shift make up a set of universal quantum gates). Through more elaborate applications of sequential multiplication and tensor operations, a set of universal quantum gates can simulate any unitary transformation in $\C^{2^n}$ up to arbitrary accuracy~\cite[Chapter 4.5]{nielsen2002quantum}. Hence, in theory, a gate quantum computer needs only to physically implement a small set of unique quantum gates. 

	\parahead{Parameterised gates} Devising quantum algorithms by handcrafting quantum circuits can be non-intuitive. Increasingly, quantum circuits are learned or optimised from data, typically in conjunction with classical optimisation. To this end, parametrised quantum gates play a crucial role. 

	As we have seen so far, all quantum gates are unitary operators preserving the magnitude of the state vector.
	So, it is natural to think of them as rotations on the Bloch sphere around a specific angle and axis.
	Indeed, up to a global phase factor $e^{i\varphi}$ that is negligible in measurement, any single-qubit gate $U$ can be expressed as a special case of a parameter-dependent operator
	\begin{equation}
		\label{eq:genral_unitary}
		U(\theta) = \exp(i\theta G) = \cos(\theta)I + i \sin(\theta) G,
	\end{equation}
	where $\theta\in \mathbb{R}$ is the gate parameter, $I$ is the identity and $G$ is a certain unitary and Hermitian operator called \textit{generator} of $U$.
	The operator $U(\theta)$ is a rotation of angle $\theta$ around the axis supporting the eigenvectors of $G$.
	Common generators are Pauli-$X,Y,Z$ matrices, cf. Eq.~\eqref{eq:pauli}, which turn $U(\theta)$ into a rotation of angle $\theta$ and about the $x,y,z$ axis respectively.
	It is easy to verify that $U(\theta)$ in  Eq.~\eqref{eq:genral_unitary} is unitary. 
	Parameterised quantum gates play an important role in variational quantum computation~\cite{cerezo2021variational,peruzzo2014variational,Farhi2014}. 
	They can be used to calculate the ground state (\ie~ the Hamiltonian eigenvector which returns the lowest eigenvalue) of a certain Hermitian operator or Hamiltonian $M$, which cannot be efficiently calculated classically. 
	The approach is to design a so-called \emph{Parameterised Quantum Circuit (PQC)} consisting of a unitary block $U(\theta)$, made up of several parameterised and/or controlled gates, which acts on an initial state vector $\ket{\psi}$, resulting in a parameterised eigenvector $\ket{\psi(\theta)} = U(\theta)\ket{0}$. 
	The parameter vector $\theta$ is then the unknown that needs to be tuned in a gradient-based or gradient-free manner to minimize the cost function $\braket{\psi(\theta)|M|\psi(\theta)}$.
	Similar to classical learning-based paradigms, PQCs 
	can also be \emph{trained} to perform other tasks with proper $M$ and circuit $U(\theta)$ defined. Typically, the loss function for training a PQC has the following form:
	\begin{equation}
		\mathcal L(\theta) = \mathbb{E_x}\left[ f_x\left(\mathrm{Tr}(\rho_x(\theta)M_x)\right) \right],
	\end{equation}
	where $x$ are training samples, $\rho_x(\theta) = \ket{\psi_x(\theta)}\bra{\psi_x(\theta)}$ is the density operator describing the quantum state before measurement, $M_x$ is a certain problem-dependent observable, $\mathrm{Tr}$ is the trace operator and $f_x$ is the function that pushes the quantum circuit to learn the pattern of the data.
	Standard $f(x)$ include mean-squared errors, mean-averaged errors for regressions and others for classifications. 
	In simple terms, $\mathrm{Tr}(\rho_x(\theta)M_x)$ is the expectation value on the observable $M_x$ on the quantum state $\ket{\psi_x(\theta)}$, so $\mathrm{Tr}(\rho_x(\theta)M_x) = \braket{\psi_x(\theta)|M_x|\psi_x(\theta)}$.
	PQCs are differentiable and their gradient can be evaluated using the so-called parameter-shift rule~\cite{mitarai2018quantum}.
	Small-scale PQCs can even be simulated classically, in which case auto-differentiation and back-propagation can be used for the training. 
	PQCs belong to a larger class of quantum algorithm named \emph{Quantum Machine Learning (QML)}~\cite{cerezo2022challenges,biamonte2017quantum,schuld2015introduction,schuld2018supervised}.

	\subsection{Adiabatic QC and Quantum Annealing}
    \label{ssec:adiabatic} 
	Adiabatic Quantum Computing (AQC) is another quantum computational paradigm different from gate-based quantum computing discussed in the previous section. 
	Instead of representing unitary transformations using a sequence of well-defined (pre-defined or learned) elementary unitary transformation blocks, AQC performs useful calculations through a continuous transition between \emph{Hamiltonians}.
	To put it simply, in classical computing terms, Hamiltonians can be thought of as energy functions; those are mathematical expressions that describe how energy is distributed in a system.
	A transitioning, or a time-dependent Hamiltonian, then acts as an evolving energy landscape, with the goal being to guide the quantum system toward an optimal solution. 
	The choice of this Hamiltonian is crucial and structured in a way that naturally aligns with the optimization problem being solved.
	Next, we discuss the operational principle of AQC grounded on the adiabatic theorem of quantum mechanics ~\cite{BornFock1928}. 

	Suppose we are given a fixed initial Hamiltonian $H(0)=H_I$ and that it is physically possible to create an initial quantum state $\ket{\psi(0)}$ that is an eigenvector to the smallest eigenvalue of the operator $H(0)$. 
	Moreover, it is possible (with certain system-dependent limitations) to create $n^2$ many interactions, so-called \emph{couplings}, $J_{i,j}\in \mathbb{R}$ between the individual qubits along with $n$ \emph{biases} $b_i \in \mathbb{R}$ acting on each qubit that amount to a Hamiltonian of the form 
	\begin{eqnarray}
		\label{eq:problem_hamiltonian}
		H_P &=& \sum_{i,j} J_{i,j} \sigma_i^z \sigma_j^z + \sum_i b_i \sigma_i^z,\\
		\sigma_i^z &=& \underbrace{I \otimes I \hdots \otimes  I}_{(i-1)\text{-many times}} ~\otimes \sigma_z \otimes \underbrace{ I \otimes  \hdots \otimes I}_{(n-i)\text{-many times}}, \\
		\sigma_z &=& \begin{bmatrix} 1 & 0 \\ 0 & -1 \end{bmatrix}, \,\text{and} \ \ I = \begin{bmatrix} 1 & 0 \\ 0 & 1 \end{bmatrix}. 
	\end{eqnarray} 
	One can show that these $n^2+n$ many terms lead to a Hamiltonian $H_P$ that is a diagonal $2^n \times 2^n$ matrix whose diagonal entries are the enumeration of all costs obtained via $s^\top  J s + s^\top  b$ for $s \in \{-1,+1 \}^n$; see Sec.~\ref{sec:eigenvalues_of_H_P} for details. 
	Thanks to this enumeration, the eigenvector $\ket{\psi}$ to the smallest eigenvalue of $H_P$ is a unit vector whose only entry equal to one can be identified with a particular $s \in \{-1,+1 \}^n$ that is the solution to 
	\begin{equation}
		\label{eq:optimization_problem}
		\min_{s \in \{-1,+1 \}^n} s^\top  J s + s^\top  b.
	\end{equation}
		Problem
		\eqref{eq:optimization_problem} is a widely encountered combinatorial optimization problem known as Ising problem.
		An equivalent formulation over binary variables, called quadratic unconstrained binary optimisation (QUBO), is found by variable substitution $x_i = (s_i +1 ) / 2$, yielding
		\begin{equation}
			\label{eq:qubo}
			\min_{x \in \{0, 1\}^n} x^\top  Q x + x^\top  c,
		\end{equation}
		for $Q = J/4 $ and $c = (b + \mathbb{1}^\top J) / 2$, where $\mathbb{1}^\top J$ sums the columns of $J$.
	
	The central idea of adiabatic quantum computing then merely follows from the free evolution of the state vector $\ket{\psi(t)}$ (Assumption \ref{assum:entangle} in Sec.~\ref{sec:fundamentals}) under the Schrödinger equation (Assumption \ref{assum:schroed} in Sec.~\ref{sec:fundamentals}) with an experimentally constructed Hamiltonian of a form 
	\begin{equation}
		\label{eq:hamiltonian_transition}
		H(t) = (1-f(t)) H_I + f(t) H_P 
	\end{equation}
	for a function $f: [0,T] \rightarrow [0,1]$ that slowly transitions between the two Hamiltonians in time $T$, \eg~$f(t) = t/T$ for linear schedule (other schedules such as piece-wise linear, quadratic or schedule with a pause are also possible). 
	The slow transition requirement and the speed of the transition are grounded on the adiabatic theorem of quantum mechanics. 
	%
	
	\begin{theorem}{Quantum Adiabatic Theorem}{adiabatic}
		If a system evolves under the Schrödinger equation~\eqref{eq:schroedinger_equation} starting from the $k$-th eigenvector $\ket{k(0)}$ ---the eigenvector to the $k$-th smallest eigenvalue $\lambda_k(0)$--- of $H_I$, then a sufficiently slow transition in Eq.~\eqref{eq:hamiltonian_transition} ensures that the state vector $\ket{\psi(t)}$ remains in the $k$-th eigenvector $\ket{k(t)}$ to $H(t)$ during the entire evolution until $H_P$, if the corresponding eigenvalue $\lambda_k(t)$ retains a sufficiently large gap to all other eigenvalues during the entire evolution. 
	\end{theorem}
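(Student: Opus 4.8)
\emph{Proof sketch.} The plan is the classical Born--Fock argument~\cite{BornFock1928}, rendered quantitative. First I would rescale time: put $s := t/T \in [0,1]$, write the schedule in Eq.~\eqref{eq:hamiltonian_transition} as $f(t)=g(t/T)$ with $g:[0,1]\to[0,1]$, $g(0)=0$, $g(1)=1$ (so $g(s)=s$ for the linear schedule), and set $\tilde H(s) := H(sT) = (1-g(s))H_I + g(s)H_P$. Then Eq.~\eqref{eq:schroedinger_equation} becomes $i\hbar\,\tfrac{d}{ds}\ket{\psi(s)} = T\,\tilde H(s)\ket{\psi(s)}$, and the ``sufficiently slow'' regime is precisely $T\to\infty$. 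The gap hypothesis says $\lambda_k(s)$ stays isolated from the rest of the spectrum for every $s$, so the associated normalised eigenvector $\ket{k(s)}$ of $\tilde H(s)$ may be chosen to depend smoothly on $s$, and together with the remaining instantaneous eigenvectors $\ket{n(s)}$ it is an orthonormal basis of $\C^{2^n}$ for each $s$. I would then expand the solution in this moving basis,
\begin{equation*}
  \ket{\psi(s)} \;=\; \sum_n c_n(s)\, e^{\,i\gamma_n(s)}\,\ket{n(s)}, \qquad \gamma_n(s) := -\tfrac{T}{\hbar}\!\int_0^s \lambda_n(u)\,du ,
\end{equation*}
which peels off the dynamical phases and leaves only the overlap coefficients $c_n(s)$ to control.

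I would then substitute this ansatz into the rescaled Schr\"odinger equation, so that the $T\lambda_n$ terms cancel by construction; projecting what remains onto $\bra{m(s)}$ and using orthonormality gives the exact coupled system
\begin{equation*}
  \dot c_m \;=\; -\,c_m\,\braket{m(s)|\dot m(s)} \;-\!\!\sum_{n\neq m} c_n\,\braket{m(s)|\dot n(s)}\, e^{\,i(\gamma_n(s)-\gamma_m(s))},
\end{equation*}
where the dot denotes $d/ds$ and $\braket{m|\dot n} := \braket{m(s)|\tfrac{d}{ds}n(s)}$. Differentiating the eigenrelation $\tilde H(s)\ket{n(s)} = \lambda_n(s)\ket{n(s)}$ and pairing with $\bra{m(s)}$ gives, for $m\neq n$, the standard formula
\begin{equation*}
  \braket{m(s)|\dot n(s)} \;=\; \frac{g'(s)\,\braket{m(s)|(H_P-H_I)|n(s)}}{\lambda_n(s)-\lambda_m(s)} ,
\end{equation*}
since $\tfrac{d}{ds}\tilde H(s)=g'(s)(H_P-H_I)$. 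In the row $m=k$ the denominator is, by hypothesis, bounded below in modulus by the gap $\Delta>0$, so these couplings are bounded uniformly on $[0,1]$; the diagonal term $\braket{k|\dot k}$ is purely imaginary (differentiate $\braket{k|k}=1$) and hence only contributes a geometric phase that leaves $|c_k|$ unchanged.

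The crux is that the phase difference $\gamma_n-\gamma_k$ has derivative $\tfrac{T}{\hbar}(\lambda_k-\lambda_n)$, of modulus at least $T\Delta/\hbar$, so $e^{\,i(\gamma_n-\gamma_k)}$ oscillates ever faster as $T$ grows. I would write $1-|c_k(s)|^2 = -\!\int_0^s 2\,\mathrm{Re}\!\big(\sum_{n\neq k}\bar c_k c_n \braket{k|\dot n}\, e^{\,i(\gamma_n-\gamma_k)}\big)\,du$ (using $|c_k(0)|=1$) and integrate each summand by parts against this rapidly rotating exponential, transferring one $u$-derivative onto the smooth, uniformly bounded prefactor $\bar c_k c_n\braket{k|\dot n}$ divided by $\tfrac{i}{\hbar}(\lambda_k-\lambda_n)$; each such step produces an overall $O(1/T)$ factor from the reciprocal of the large phase derivative, plus boundary terms of the same order. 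With the a priori bound $\|c(s)\|=1$ and a short bootstrap on the derivatives in the prefactor (routine when the full spectrum stays non-degenerate; otherwise see the remark below), one concludes $\sup_{s\in[0,1]}\big(1-|c_k(s)|^2\big) = O(1/T)$. Since starting in $\ket{k(0)}$ forces $c_k(0)=1$ and $c_n(0)=0$ for $n\neq k$, this is exactly the assertion that, up to dynamical and geometric phases, $\ket{\psi(s)}$ stays within $O(1/\sqrt T)$ of the instantaneous eigenvector $\ket{k(s)}$; tracking constants shows it suffices that $T$ be large compared with $\|g'\|_\infty\,\|H_P-H_I\|\,/\,\Delta^2$, which makes ``sufficiently slow transition'' and ``sufficiently large gap'' precise.

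The main obstacle is securing a smooth---or even merely continuous---choice of the eigenvector branch $\ket{k(s)}$: this is precisely what the gap assumption buys, since without it $\lambda_k(s)$ could collide with a neighbouring level and the branch need not be differentiable there. If $\lambda_k$ is itself degenerate---relevant here because $H_P$ is diagonal with typically repeated diagonal entries (Sec.~\ref{sec:eigenvalues_of_H_P})---the statement must be read with $\ket{k(s)}$ replaced by the spectral projector $P_k(s)$ onto the corresponding eigenspace, and the argument reorganised in Kato's style, tracking $P_k(s)$ and its parallel-transport unitary instead of a single vector; this projector formulation is moreover the cleanest route to a fully rigorous $O(1/T)$ error bound, as it never decomposes the orthogonal complement into individual eigenvectors whose small mutual gaps would otherwise spoil the control of $\dot c_n$. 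A lesser technicality is that the rescaled schedule $g$ (hence $\tilde H$) may be only piecewise smooth, in which case the integration by parts is done separately on each smooth piece and the $O(1/T)$ boundary contributions summed.
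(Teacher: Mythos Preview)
Your argument is correct and shares the same Born--Fock skeleton as the paper's proof: expand $\ket{\psi}$ in the instantaneous eigenbasis, insert into the Schr\"odinger equation, and control the off-diagonal couplings via the identity $\braket{m|\dot n}=\braket{m|\dot H|n}/(\lambda_n-\lambda_m)$. The substantive difference is in how you dispose of those couplings. The paper simply argues that if $\|H'(t)\|\ll\min_{l\neq k}|\lambda_l-\lambda_k|$ the cross terms in the coupled ODE are negligible and may be dropped, then solves the resulting decoupled equation; this yields only the heuristic condition ``$\|H'\|$ small compared to the gap.'' You instead factor out the dynamical phases from the start, so the couplings appear multiplied by $e^{i(\gamma_n-\gamma_k)}$ with phase derivative of order $T\Delta/\hbar$, and then integrate by parts against this fast oscillation to extract an honest $O(1/T)$ bound with the sharper constant $\|g'\|_\infty\|H_P-H_I\|/\Delta^2$. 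This is the more rigorous route and recovers the standard inverse-gap-squared scaling, which the paper's cruder estimate does not. Your closing remarks on replacing the eigenvector by the Kato spectral projector $P_k(s)$ when $\lambda_k$ is degenerate, and on handling piecewise-smooth schedules, go beyond what the paper attempts and are well placed.
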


	%
	In summary, one first has to 
	prepare the system in an eigenstate of $H_I$ corresponding to its smallest eigenvalue.  
	After that, one evolves $H_I$ to the problem Hamiltonian Eq.~\eqref{eq:problem_hamiltonian} sufficiently slowly via Eq.~\eqref{eq:hamiltonian_transition}.
	In the end, measuring the state should yield minimizers of Problem \eqref{eq:optimization_problem} (or at least be an excellent meta-heuristic for computing those) with high probability.
	The minimum time necessary for an adiabatic transition can be thought of as the runtime of the quantum algorithm.

	Conceptually, one can think of AQC as the minimisation of a function that is smoothly interpolated from a trivial function to the QUBO of interest, as depicted in Fig. \ref{fig:quantum_annealing}. 
	This interpolated function represents the energies of the system in the different basis states under the transition Hamiltonian $H(t)$. 
	As time $t$ increases, these energies gradually evolve from trivial values towards the energies defined by the problem Hamiltonian $H_P$.
	During this evolution, the adiabatic theorem guarantees that the system stays in the state corresponding to the minimum energy, which consequently corresponds to a gradual boost of the probability of measuring the ground state of the QUBO Hamiltonian $H_P$. 

	\begin{figure}[t]
		\centering
		\includegraphics[width=1.\linewidth]{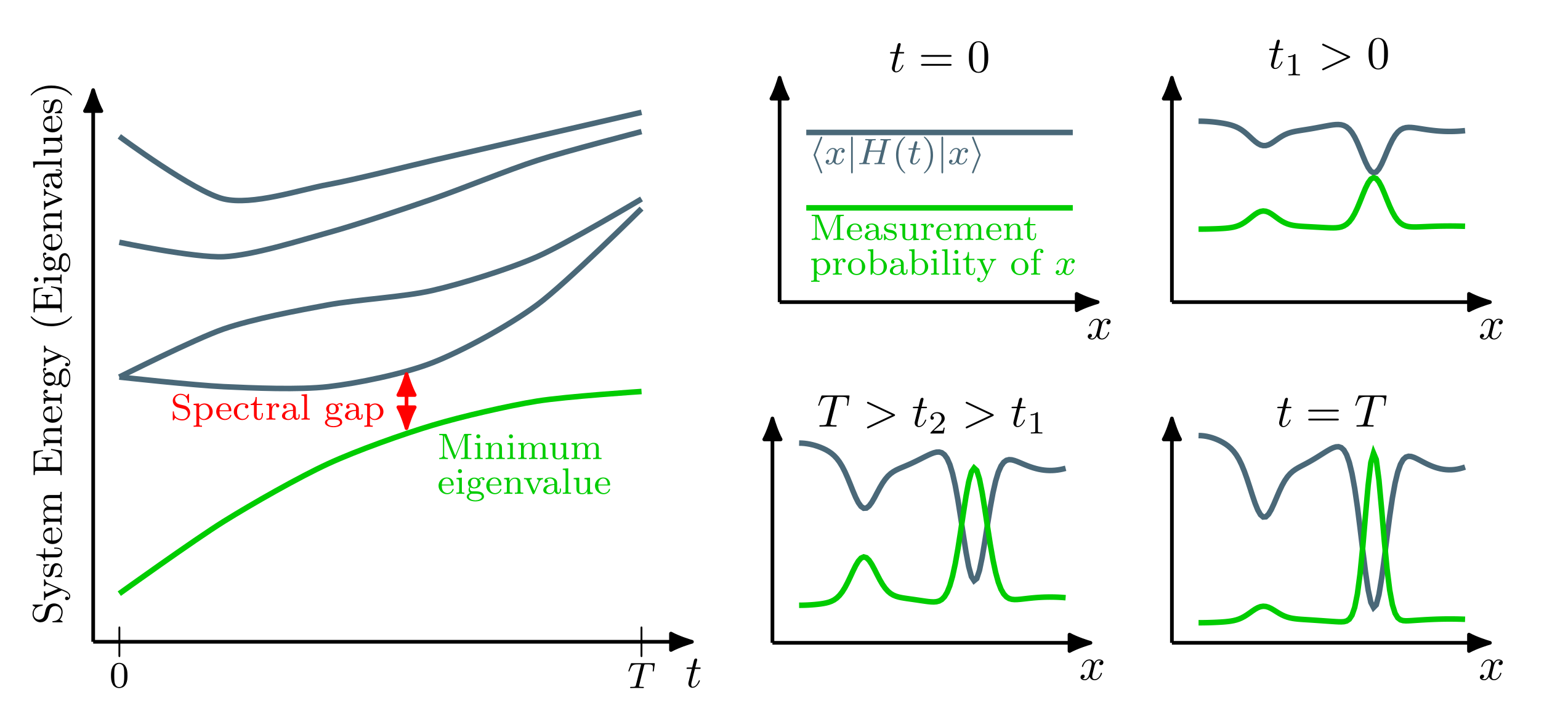}
		\caption{
			Conceptual illustration of AQC.
			(Right): Depiction of the spectral gap as the smallest difference between the lowest and the first excited energy states of the system over the time $t$.
			(Left): The AQC process of minimising a function $\braket{x|H(t)|x}$ that is interpolated from a trivial initialisation objective to a QUBO problem of interest. 
			The minimum eigenvalue on the left corresponds to a superimposed (eigen)state that determines the measurement probabilities of the basis states on the right. 
			For a strictly positive spectral gap, a sufficiently slow transition process maximally boosts the likelihood of measuring the ground state of the QUBO.  
		}
		\label{fig:quantum_annealing}
	\end{figure}

	Let us now prove that the adiabatic theorem is indeed merely a mathematical consequence of our assumptions. 
	We will sketch the proof here to give an understanding of what \textit{approximate eigenvector} and \textit{``sufficiently slow/large''} mean; a detailed version of the proof is provided in the appendix, Sec.~\ref{sec:adiabatic_quantum_theorem}. 
	\begin{proof}
		Note that all $H(t)$ are Hermitian matrices, such that for all $t$ there exists an orthonormal basis $\ket{k(t)}$ of eigenvectors to eigenvalues $\lambda_k(t)$, \ie
		\begin{equation}
			\label{eq:eigenvectors}
			H(t)\ket{k(t)} = \lambda_k(t) \ket{k(t)}.
		\end{equation}
		Thus, at each time we can represent a quantum state $\ket{\psi(t)}$ that is evolving under Eq.~\eqref{eq:schroedinger_equation} as 
		\begin{equation}
			\ket{\psi(t)} = \sum_k c_k(t)\ket{k(t)}
		\end{equation}
		for suitable coefficients $c_k(t)$. Subsequently, we insert this expansion into the Schrödinger equation and show that it results in a component-wise equation in the coefficients of the (time-dependent) eigenbasis,
		\begin{equation}
			\label{eq:componentwise}
			i \hslash (c_k'(t) + \langle k(t), k'(t) \rangle  c_k(t)) \approx \lambda_k(t) c_k(t),
		\end{equation}
		up to a sum over terms involving $\langle l(t), k'(t) \rangle$, $l\neq k$. Note that if Eq.~\eqref{eq:componentwise} held exactly, then 
		\begin{equation}
			\label{eq:estimate_ck}
			c_k(t) = c_k(0)\text{exp}\left(-\frac{i}{\hslash}\int_0^t  \lambda_k(s) - i \hslash\langle k(s), k'(s) \rangle~ds\right)
		\end{equation}
		would yield the claim. To obtain a bound on  $\langle l(t), k'(t) \rangle$ one can differentiate Eq.~\eqref{eq:eigenvectors} and take the product with $\bra{l(t)}$ to obtain
		\begin{equation}
			\langle l(t), k'(t) \rangle = \frac{\langle l(t),H'(t) k(t) \rangle}{\lambda_l(t) - \lambda_k(t)}, \label{eq:DividedByEigDiff}
		\end{equation}
		which is small if the change of the Hamiltonian $H'$ is small in comparison to all gaps of eigenvalues $\lambda_l(t) - \lambda_k(t)$.
		In this case, it follows from Eq.~\eqref{eq:estimate_ck} that if $c_k(0)$ has held for all but one index $k$, it will approximately hold for all $t \leq T$. 
		However, crossing eigenvalues, \ie~ $\lambda_l(t) = \lambda_k(t)$ for some time $t$, makes the claim invalid. 
	\end{proof}
	
	We can see from the above proof, an important quantity for the idea of AQC is the smallest difference of the second smallest to the smallest eigenvalue of $H(t)$ during the evolution in Eq.~\eqref{eq:hamiltonian_transition}, which is also called the \textit{spectral gap}, see Fig.~\ref{fig:quantum_annealing}.  
	Fortunately, for the following and common choice for the initial operator $H_{I}$:
	\begin{eqnarray}
		\label{eq:initial_hamiltonian}
		H_I &=& -\sum_{i}  \kappa \sigma_i^x, \,\\
		\sigma_i^x &=& \underbrace{I \otimes I \hdots I}_{(i-1)\text{-many times}}\otimes\,\sigma_x \otimes \underbrace{ I \otimes I \hdots \otimes I}_{(n-i)\text{-many times}}, \,\\ 
		\sigma_x &=& \begin{bmatrix} 0 & 1 \\ 1 & 0 \end{bmatrix},\text{and}\, \ \ I = \begin{bmatrix} 1 & 0 \\ 0 & 1 \end{bmatrix},
	\end{eqnarray}
	and under rather general conditions as stated in the next theorem, one can show that the spectral gap (and thus the denominator in Eq.~\eqref{eq:DividedByEigDiff}) does not become zero if the solution to the original problem \eqref{eq:optimization_problem} is unique. 

	\begin{theorem}{No crossing of eigenvalues}{nocrossing}
		For $H_P$ given by Eq.~\eqref{eq:problem_hamiltonian} and $H_{I}$ by Eq.~\eqref{eq:initial_hamiltonian} 
		the lowest and the second lowest eigenvalue of $H(t)$ from Eq.~\eqref{eq:hamiltonian_transition} do not cross during the time evolution, assuming that the ground state of $H_P$ is not degenerate.  
	\end{theorem}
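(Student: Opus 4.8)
The plan is to exploit the \emph{stoquastic} structure of $H(t)$: after an overall sign change and an energy shift, $H(t)$ becomes an entrywise nonnegative matrix, and as long as the transverse field is still switched on (i.e. $f(t)<1$, $\kappa>0$) it is moreover irreducible. The Perron--Frobenius theorem then forces its largest eigenvalue --- which, up to the shift, is the negative of the ground-state energy of $H(t)$ --- to be simple, so $\lambda_1(t)<\lambda_2(t)$; the endpoint $f(t)=1$ is handled separately using the non-degeneracy hypothesis.

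First I would make the matrix structure explicit. By the computation referenced for Eq.~\eqref{eq:problem_hamiltonian} (Sec.~\ref{sec:eigenvalues_of_H_P}), $H_P$ is diagonal, with diagonal entries enumerating $s^\top J s + s^\top b$ over $s\in\{-1,+1\}^n$. Since $\sigma_x$ has zero diagonal, $H_I=-\kappa\sum_i\sigma_i^x$ has zero diagonal, and its off-diagonal $(v,w)$ entry equals $-\kappa$ exactly when the computational basis states $v,w$ differ in a single bit and $0$ otherwise. Hence $H(t)=(1-f(t))H_I+f(t)H_P$ is real symmetric, its diagonal is $f(t)$ times the diagonal of $H_P$, and its $(v,w)$ entry for $v\neq w$ is $-(1-f(t))\kappa$ if $v,w$ differ in one bit and $0$ otherwise. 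Choosing a constant $c$ large enough that $c$ exceeds $f(t)\,(s^\top J s+s^\top b)$ for all $t$ and all $s$, and setting $M(t):=cI-H(t)$, the matrix $M(t)$ is symmetric with strictly positive diagonal and off-diagonal entries $(1-f(t))\kappa\ge 0$, hence entrywise nonnegative.

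Next I would argue irreducibility. For any $t$ with $f(t)<1$ and $\kappa>0$, the strictly positive off-diagonal entries of $M(t)$ sit precisely on the edges of the $n$-dimensional hypercube graph on $\{0,1\}^n$; this graph is connected, so $M(t)$ is irreducible. By the Perron--Frobenius theorem the spectral radius of $M(t)$ is a simple eigenvalue with a strictly positive eigenvector, and since $M(t)$ is symmetric this spectral radius is its largest eigenvalue. Translating back through $M(t)=cI-H(t)$, the \emph{smallest} eigenvalue of $H(t)$ is simple, i.e. $\lambda_1(t)<\lambda_2(t)$, for every $t$ with $f(t)<1$. At the endpoint $f(t)=1$ we instead have $H(t)=H_P$, whose ground state is non-degenerate by assumption, so $\lambda_1(t)<\lambda_2(t)$ there as well. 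Combining the two cases gives $\lambda_1(t)<\lambda_2(t)$ on all of $[0,T]$, so the lowest and second-lowest eigenvalues never cross.

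The hard part is really only the bookkeeping, and it has three moving pieces: (i) one must genuinely use that $H_P$ is diagonal, so that the sign pattern of the off-diagonal entries of $H(t)$ is dictated solely by $-\kappa\sigma_i^x$; (ii) one must identify the induced graph on basis states with the hypercube and note its connectivity --- this, together with $\kappa\neq 0$, is exactly what upgrades Perron--Frobenius from ``the largest eigenvalue is attained'' to ``the largest eigenvalue is simple''; and (iii) one must treat the degenerate endpoint $f(t)=1$ by invoking the hypothesis, since irreducibility genuinely fails when the transverse field is off. If $\kappa<0$, one first conjugates by the diagonal unitary $\mathrm{diag}((-1)^{|v|})_v$, which flips the sign of every $\sigma_i^x$ while commuting with the diagonal $H_P$, reducing to the case $\kappa>0$. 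I would also note that continuity of the $\lambda_k(t)$ in $t$ is not even needed for the statement as phrased --- pointwise simplicity already precludes a crossing --- although it is precisely the property that links this lemma to the spectral-gap hypothesis of the Quantum Adiabatic Theorem stated above.
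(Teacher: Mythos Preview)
Your proposal is correct and follows essentially the same route as the paper: shift to obtain an entrywise nonnegative matrix, use hypercube connectivity from the $\sigma_i^x$ terms to get irreducibility while $f(t)<1$, apply Perron--Frobenius for simplicity of the ground state, and handle the endpoint via the non-degeneracy hypothesis. The only cosmetic difference is that the paper shifts $J\mapsto J-cI$ so that $-H(t)$ itself is nonnegative, whereas you form $M(t)=cI-H(t)$; your explicit treatment of the sign of $\kappa$ and your remark that pointwise simplicity already suffices (continuity is only needed to extract a uniform positive gap) are welcome clarifications beyond the paper's sketch.
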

	
	\begin{proof}
		The basic idea is to show that the smallest eigenvalue of $H(t)$ is simple using the Perron-Frobenius theorem. A full proof is given in the appendix, Sec.~\ref{sec:no_crossing_eigenvalues}. 
	\end{proof}

	Note that the constant $\kappa$ in the definition of $H_I$ allows for scaling all eigenvalues and has an effect on the speed $H'(t)$ of the transition in Eq.~\eqref{eq:hamiltonian_transition}. 

	While a zero spectral gap can be avoided for all problems with unique solutions, the question how the total time $T$ necessary for an \textit{adiabatic transition}, \ie~ one where the system remains in its ground state, scales with the dimension of the overall problem, is crucial for the complexity of the presented AQC framework. 
	In particular, if the spectral gap becomes exponentially small, an exponential increase in the transition time $T$ will be needed, indicating a very difficult problem even for a quantum computer. 
	Since QUBOs are NP-hard in general, it is believed that even quantum computers will require an exponentially increasing runtime for solving them in general. 
	For an interesting article in this context, we refer to Aaronson~\cite{aaronson2005guest}. 

	For a linear transition between $H_I$ and $H_P$ in Eq.~\eqref{eq:hamiltonian_transition} it has been proven that certain instances indeed require an exponentially increasing runtime~\cite{van2001powerful}. On the topic of how to change the annealing path and the Hamiltonians to avoid exponentially small gaps, if possible, a lot of research has been done~\cite{albash2018adiabatic}. 
	Another interesting research area is how the success probability depends on the degree of entanglement of the state $\ket{\psi(t)}$ for $t\in( 0,T)$. 
	In~\cite{batle2016multipartite} it was shown that quantum systems that give a speedup via adiabatic quantum computing can have a low amount of multipartite entanglement. 
	Therefore, the relation between those quantities is complex and one cannot claim that entanglement fully explains the performance of AQC algorithms. 
	Another research direction is to deliberately deviate from a sufficiently slow transition of Hamiltonians to yield Bayesian estimates rather than optimal solutions, see \eg~\cite{mccormick2022multiple}. 
	As an exhaustive discussion on adiabatic quantum computing itself goes beyond the scope of this work, we refer the interested reader to the overview~\cite{Hauke_2020} and rather focus on some practical aspects relevant to quantum computer vision. 

	\parahead{Quantum annealers} 
	Quantum annealers are modern experimental realisations of AQC devices.
	They take advantage of the theory of AQC from Sec.~\ref{ssec:adiabatic} and realise the transition from $H_0$ to a given $H_P$ experimentally. 
	The main distinction between quantum annealing and AQC is that quantum annealing also refers to experiments that are not fully adiabatic ~\cite{AdiabaticQuantumComputingandQuantumAnnealing}. 
	Hence, quantum annealers return ground states with probabilities ${\ll}1.0$ for difficult problems. 
	To increase the probability of finding the ground state at least once, annealing is repeated multiple times. 
	Current experimental realisations of quantum annealers can sample solutions to QUBO problems. 
	\parahead{QA \textit{vs} classical methods} 
	A comparison between quantum annealing and classical optimisation algorithms is also insightful. 
	QUBOs can be solved by several classical algorithms with the help of variants of branch and bound and various heuristics~\cite{Kochenberger2014}. 
	Some of the early methods solve max-cut~\cite{Barahona1989}, \ie~reduced QUBOs with $c$ as a zero-vector, 
	or QUBOs with positive definite matrices~\cite{Pardalos1990}. 
	Other methods rely on semidefinite relaxations and cutting planes~\cite{goemans1995improved,poljak1995solving,alizadeh1995interior,HelmbergRendl1998}. 
	The Goemans and Williamson algorithm~\cite{goemans1995improved} solves the max-cut problem with a provable approximation ratio of $0.87$, which was later shown to be optimal~\cite{karloff1996good}.
	Later techniques adapt tabu search algorithms~\cite{Glover1998, Palubeckis2006}, genetic local search~\cite{Katayama2000}, memetic policies~\cite{MerzKatayama2005} and simulated annealing~\cite{Alkhamis1998, KatayamaNarihisa2001}. 
	Exact (predominantly earlier) methods remain limited to several hundred binary variables, while later techniques report successful results with the number of binary variables of the order $10^4$ though do not provide global optimality guarantees. 

	\parahead{Simulated annealing} 
	Simulated annealing~\cite{Kirkpatrick1983} shares operational analogies with quantum annealing. 
	The simulated annealing algorithm works iteratively. 
	At the beginning of a step, a random neighbour from the current iterate is chosen. 
	If the energy of the randomly chosen neighbour is better than that of the current iterate, it gets accepted and is the new iterate. 
	If the energy gets worse, the randomly chosen neighbour is only accepted with a probability 
	\begin{equation}
		P = \min \left \{1,\exp ((E(x^{(i-1)})- E(x^{(i)}_{\text{candidate}}) )/T_i ) \right \}.
	\end{equation}
	The sequence $T_i$ is called temperature schedule and is monotonically decreasing with every iteration step $i$. 
	As a result, it becomes progressively more unlikely that worse solutions are accepted. 
	At some point, the algorithm will then end up in a local optimum from which it cannot escape. 
	An early research question for AQC was for which energy potentials it is beneficial compared to simulated annealing. 
	Farhi~et al.~\cite{farhi2002quantum} present an energy landscape with a spike, where simulated annealing takes exponential time to get to the global optimum, while AQC finds the global optimum in polynomial time. 

	\parahead{Simulated \textit{quantum} annealing} 
	Another question one could ask is how quantum algorithms compare to classical algorithms that are \emph{inspired} from quantum annealing. 
	This question is explored, \eg~in Crosson et al.~\cite{crosson2016simulated}. 
	They look at an algorithm where low-energy states are estimated using classical Markov chain Monte Carlo methods.
	In contrast to simulated annealing, the authors considered not only one, but multiple state configurations of the quantum system at a time to describe the system. 
	This multi-state configuration then allows simulating quantum effects like tunnelling, superposition and entanglement. 
	In Sec.~\ref{app:montecarlosampling} of the Appendix, we provide a short summary of this quantum-inspired simulation.

	\subsection{Connections between Gate-based Quantum Computing and Adiabatic Quantum Computing}
    \label{ssec:ConnectionsParadigms}
	This section discusses how the AQC and gate-based quantum computational paradigms relate. 
	It is believed and can be mathematically shown that 
	the gate-based model of quantum computing is polynomially equivalent to the adiabatic model (and vice versa) in terms of computational complexity~\cite{albash2018adiabatic,nielsen2002quantum,aharonov2008adiabatic}:
	\begin{theorem}{Equivalence of gate-based and AQC}{equiv_aqc_gqc}
		The adiabatic model of quantum computation is polynomially equivalent to the
		gate-based model of quantum computation.
	\end{theorem}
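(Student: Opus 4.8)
The plan is to prove the two simulation directions separately, since \emph{polynomially equivalent} means each model reproduces the other's computation with only polynomial overhead in the number of qubits, the gate/term count, the evolution time, and the error tolerance. The direction ``gate-based simulates adiabatic'' is routine: given an adiabatic computation with path $H(s)=(1-f(s))H_I+f(s)H_P$, $s\in[0,1]$, where $H_I,H_P$ are sums of polynomially many bounded-norm local terms and the prescribed total time is $T=\mathrm{poly}(n)$, one slices $[0,T]$ into $R=\mathrm{poly}(n,1/\varepsilon)$ intervals, approximates the time-ordered propagator by $\prod_j \exp(-\tfrac{i}{\hbar}H(t_j)\Delta t)$, and expands each factor by a Lie--Trotter--Suzuki formula into a product of exponentials of single local terms; each such exponential is a fixed-size gate, realisable to accuracy $\varepsilon$ by the universal set of Sec.~\ref{sec:composegates}. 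Standard Hamiltonian-simulation error estimates then bound the total gate count polynomially.

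The substantive direction, ``adiabatic simulates gate-based,'' follows the history-state construction of Aharonov et al.~\cite{aharonov2008adiabatic}. Given a circuit $U=U_L\cdots U_1$ on $n$ qubits with $L=\mathrm{poly}(n)$ one- and two-qubit gates, I would adjoin an $(L{+}1)$-state \emph{clock} register (encoded in unary so that all Hamiltonian terms stay local) and define two Hamiltonians. The initial Hamiltonian $H_I$ penalises every clock state except $0$ and every input-register state except the designated $\ket{0\cdots0}$, so its unique, easily prepared ground state is $\ket{0\cdots0}\otimes\ket{0}_{\mathrm{clock}}$. The problem Hamiltonian is Kitaev's propagation Hamiltonian $H_P=H_{\mathrm{in}}+H_{\mathrm{clock}}+\sum_{l=1}^{L}H_l$ with $H_l=\tfrac12\bigl(I\otimes\ket{l}\bra{l}+I\otimes\ket{l-1}\bra{l-1}-U_l\otimes\ket{l}\bra{l-1}-U_l^{\dagger}\otimes\ket{l-1}\bra{l}\bigr)$, whose unique zero-energy ground state on the legal-clock subspace is the \emph{history state} $\ket{\eta}=\tfrac{1}{\sqrt{L+1}}\sum_{l=0}^{L}(U_l\cdots U_1\ket{0\cdots0})\otimes\ket{l}_{\mathrm{clock}}$. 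Running the adiabatic evolution along $H(s)=(1-s)H_I+sH_P$ and measuring the clock at the end yields $\ket{L}$ with probability $1/(L{+}1)$, collapsing the computation register to the circuit output $U\ket{0\cdots0}$; appending $\Theta(L)$ trailing identity gates (or repeating $O(L)$ times) raises this to constant probability.

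Everything then reduces to a quantitative spectral-gap bound: one must show $\mathrm{gap}(H(s))\ge\Omega(1/\mathrm{poly}(L))$ uniformly for $s\in[0,1]$. Granting this, the quantitative form of the adiabatic theorem (\cref{thm:adiabatic}) forces the evolution time to scale only as $T=\mathrm{poly}(1/\mathrm{gap},\|H\|,\|H'\|)=\mathrm{poly}(n)$, so the whole simulation is efficient. I would obtain the gap bound in two moves. First, on the legal-clock subspace the unitary $W=\sum_{l}(U_l\cdots U_1)\otimes\ket{l}\bra{l}$ conjugates $\sum_l H_l$ into a tensor extension of the combinatorial Laplacian of the path graph on $L{+}1$ vertices, whose smallest nonzero eigenvalue is $\Theta(1/L^2)$. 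Second, a geometric lemma bounding the gap of a sum of two positive semidefinite operators in terms of their individual gaps and the angle between their kernels (Kitaev's projection lemma) transports this estimate first to $H_P$ on the full Hilbert space and then, along the interpolation, to $H(s)$ for every $s$.

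The main obstacle is precisely this uniform inverse-polynomial spectral-gap bound along the whole Hamiltonian path --- it is the technical heart of~\cite{aharonov2008adiabatic}, whereas the reverse simulation and the adiabatic theorem itself are comparatively routine (see also \cref{thm:nocrossing} and the discussion in Sec.~\ref{ssec:adiabatic} for the non-degeneracy and no-crossing issues that must be checked so that \cref{thm:adiabatic} applies with $k=0$). Secondary points that still require care are: keeping $H(s)$ a sum of polynomially many \emph{local} terms --- cheap with the unary clock, more delicate with an $O(\log L)$-qubit binary clock and its locality gadgets; verifying that the ground state of $H(s)$ remains unique throughout so the adiabatic evolution indeed tracks $\ket{\eta}$; and the final-measurement bookkeeping sketched above. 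For strengthenings (geometric locality, $2$-local terms, or a one-dimensional arrangement) one would additionally invoke perturbation-gadget refinements~\cite{albash2018adiabatic} and related work, but these are not needed for the bare equivalence claimed here.
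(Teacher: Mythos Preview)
Your proposal is correct and follows essentially the same route as the paper's own sketch: Trotterisation for the adiabatic-to-circuit direction, and the Kitaev/Aharonov-et-al.\ history-state construction with clock register and inverse-polynomial gap bound for the circuit-to-adiabatic direction. You in fact supply more detail than the paper does (the conjugation to the path-graph Laplacian and the projection-lemma transport of the gap), whereas the paper simply quotes the $1/p^3$ gap bound from~\cite{aharonov2008adiabatic}.
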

	In other words, given an arbitrary quantum circuit, it is possible to design a Hamiltonian whose ground
	state equals the output of the circuit starting from an easily prepared initial state.
	And, in the other direction, that the evolution governed by the Schrödinger equation under a given AQC-Hamiltonian can be implemented using a quantum circuit.

	\begin{proof}
		In Sec.~\ref{app:equivalence_gb_aqc} of the appendix, we sketch the proof of the theorem and refer the reader to~\cite{van2001powerful,aharonov2008adiabatic} for a full discussion.
	\end{proof}

	It is noteworthy that while gate-based and adiabatic computing models are mathematically equivalent, current hardware does not support seamless switching between the two. 
	Consequently, from a computational viewpoint, it is valuable to investigate which prototypical CV sub-problems are particularly suited to each model and to compare their respective solution strategies.

	\parahead{Quantum approximate optimisation algorithm} 
	A variational method linking adiabatic and gate-based quantum computing is the \emph{Quantum Approximate Optimization Algorithm (QAOA)}~\cite{Farhi2014}. 
	QAOA seeks to find the ground state of the Ising Hamiltonian, as defined in Eq.~\eqref{eq:problem_hamiltonian}, by emulating the adiabatic evolution described in Eq.~\eqref{eq:hamiltonian_transition}. 
	The idea is to discretise the time-dependent Hamiltonian of AQC and to solve, at least approximately, the Schrödinger equation for the resulting discretised Hamiltonian.
	This evolution, see also in Sec.~\ref{app:equivalence_gb_aqc}, Eq.~\eqref{eq:aqc_to_circuit}, can be approximately simulated through a sequence of gates. 
	These gates alternate between evolving the system under the cost Hamiltonian $H_P$ and the initial (mixing) Hamiltonian $H_I$ for short intervals $\Delta t$. 
	In the QAOA algorithm, the time step $\Delta t$—alternatively denoted as $\gamma_t$ or $\beta_t$, depending on whether it governs the application of the Hamiltonians $H_P$ or $H_I$ at time $t$—is treated as an optimisation parameter. It is adjusted by a classical outer loop, yielding the variational principle underpinning QAOA.
	This results in $2p$ parameters for a depth-$p$ discretised evolution. 
	An overview of the QAOA circuit is provided in Fig.~\ref{fig:QAOA}.

	\begin{figure}[t]
		\centering
		\includegraphics[width=0.95\linewidth]{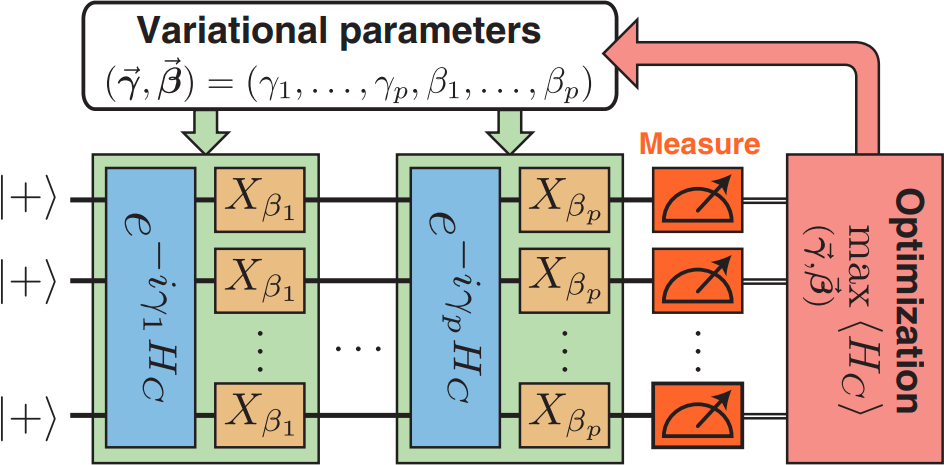}
		\caption{Diagram of QAOA. The Hamiltonian $H_C$ in the figure corresponds to $H_P$ in this survey. Image source:~\cite[reproduced under the CC BY 4.0 license]{PhysRevX.10.021067}.
	} 
	\label{fig:QAOA} 
\end{figure}

\subsection{Computational Advantages of Quantum Computing} 
\label{eq:advantage}
The computational advantages of QC algorithms can be analysed with the help of \emph{Quantum Complexity Theory (QCT)} which is the counterpart of classical complexity theory and which studies the hardness of problems solved on quantum computers~\cite{BernsteinVazirani1997, Watrous2008arXiv}. 
Even though an in-depth coverage of QCT in this survey is out of scope,
we summarise the core aspects useful for researchers in the QeCV field. 
Recall that major advances in computer vision are driven by the accuracy on benchmark datasets and not by reduced method complexity. 
As a consequence, computer vision and machine learning scientists do not often use computational complexity theory and analyse the target problems in its terms. 
To summarise, we find the following aspects related to QCT important for QeCV researchers. 
First, it was noticed that finding quantum algorithms---that improve upon the computational complexity of the best classical methods addressing the same problem---is notoriously challenging~\cite{Shor2003}. 
Nonetheless, different multiplicative terms in the same complexity class can make a substantial difference in practical applications (\textit{\eg~} faster neural network training or faster optimisation of an objective function) and this concerns both the gate-based and the adiabatic models. 
Second, not all properties of quantum methods can be quantified in terms of QCT. 
For instance, theoretical and experimentally observed advantages such as fewer required parameters in quantum neural networks (gate-based model) and high probability to measure globally-optimal solutions for certain problem types (adiabatic model) hold additionally for quantum formulations. 
Third, there are many open research questions on how existing computer vision problems can benefit from quantum formulations. 
The community is just at the beginning of understanding which problems can be solved with QPUs, in the sense of computational tools (similar to graphics processing units). 
Identifying problems that can be solved with the help of quantum formulations better in a practical sense (than relying on classical formulations and irrespective of theoretical complexity gains) is the ultimate goal of QeCV. 

\begin{figure*} [t]
	\centering 
	\subfloat[IBM Heron (2024).]{\includegraphics[width = 0.26\textwidth ]{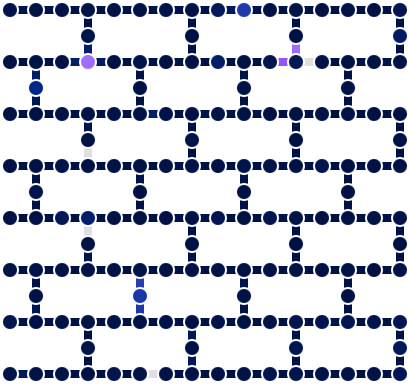}}
	\hspace{1.cm}
	\subfloat[IonQ Aria trapped ion (2022).]{\includegraphics[width = 0.265\textwidth, trim={0cm 0cm 0cm 0cm}, clip]{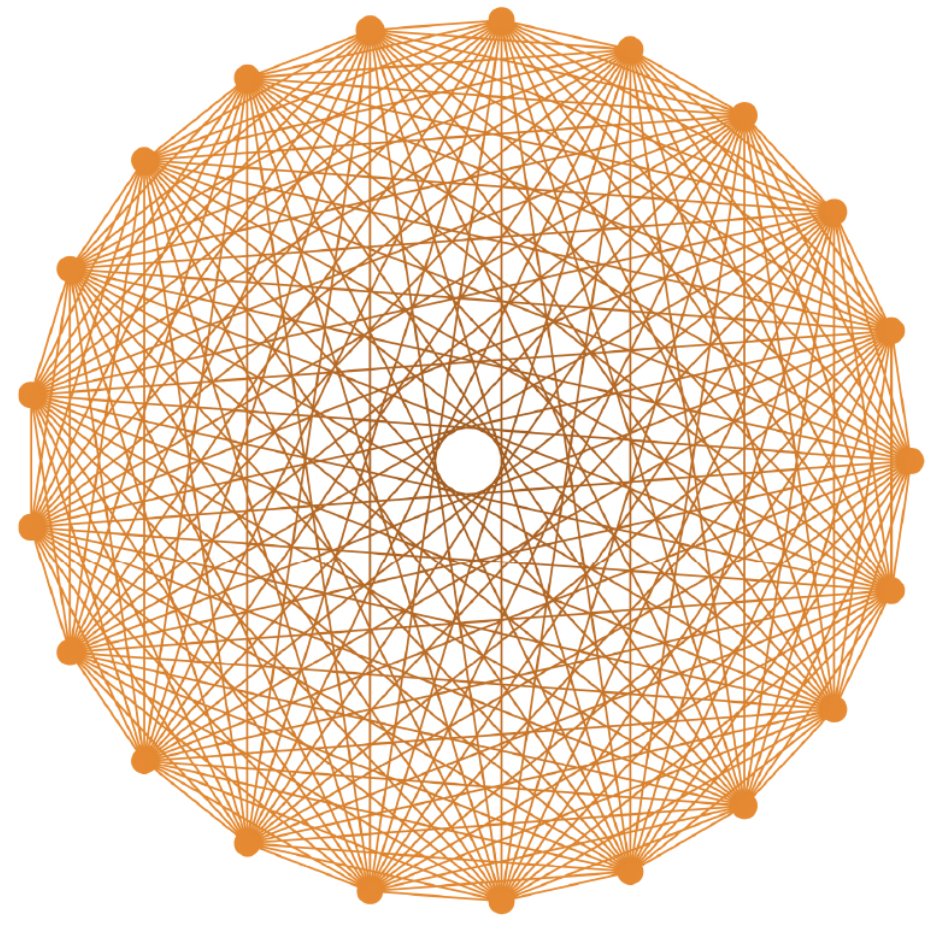}}
	\subfloat[Google Willow (2024).]{\includegraphics[width=.4\textwidth, trim={-1.5cm 0cm -1cm 0cm}, clip]{ 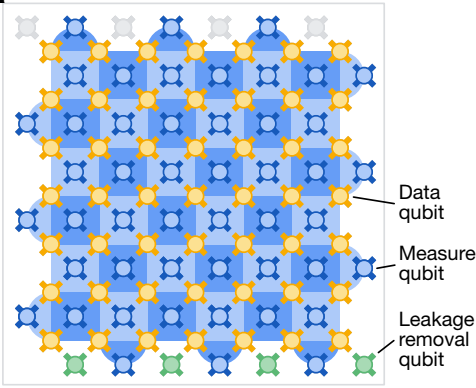}}
	\hspace{-1.cm}
	\caption{Examples of architectures of gate-model quantum computers.
		Launch dates are indicated in brakets.
		(a) A IBM Quantum Heron processor with 156 qubits (nodes) connected in the heavy hex topology, source:~\cite[reprint courtesy of IBM corporation \textcopyright 2025]{ibm_eagle}. 
		(b) IonQ Aria processor with 21 fully-connected qubits (nodes), source:~\cite[reprint courtesy of IonQ \textcopyright 2025]{ionq_aria}.
		(c) Google Willow processor with 105 qubits of degree $\sim4$ each, comprising data qubits (gold), measure qubits (blue) ---used to detect errors in the neighboring data qubits, and additional leakage removal qubits (green) ---used to detect and correct leakage errors, source:~\cite[reproduced under the CC BY NC ND 4.0 license]{google_willow}
	} 
	\label{fig:gate_architectures} 
\end{figure*} 

The most prominent example of a quantum algorithm that outperforms classical counterparts is probably Shors algorithm for factoring primes~\cite{Shor1997}. This algorithm has polynomial runtime for a problem class that is widely believed to be outside of $\mathcal{P}$. But also, cases where there is polynomial improvement are very interesting. In the case of the Grover's algorithm there is a quadratic improvement for searching in an unstructured database~\cite{grover1996fast}. 
That the same speed up is achievable in the model of adiabatic quantum computing was shown in~\cite{roland2002quantum,van2001powerful}. 
This is especially remarkable, since in the general case time estimates for adiabatic quantum computing are often hard to find, since the spectral gap is hard to compute, see Sec.~\ref{ssec:adiabatic} for a discussion on the spectral gap.

Although it is widely believed that AQC cannot solve NP-complete problems in polynomial time, it may still offer acceleration in solving QUBO problems of the form \eqref{eq:qubo}. 
In practice, an AQC samples solutions for QUBOs and returns multiple low-energy states. 
Depending on the problem structure, one might even achieve global optima faster than classically foreseeable. 
However, this becomes increasingly challenging, as the spectral gap typically decreases relatively fast as the problem size increases. 
As a result, for larger problems, the probability of finding an optimal solution can become very low. 
Improvements in quantum hardware are expected to increase the problem sizes that obtain substantial non-zero probabilities of measuring globally optimal solutions in a single sample. 
\subsection{Design and Architecture of Quantum Computers}\label{ssec:DesignArchitecture} 
Since their conceptual proposal in 1980s by Feynman~\cite{feynman2018simulating} and Deutsch~\cite{deutsch1985quantum,deutsch1989quantum}, substantial progress in experimental hardware realisations of quantum computers has been demonstrated.
A first working two-qubit quantum computer was proposed by IBM in $1997$ following the requirements of DiVincenzo~\cite{divincenzo1997topics}.
D-Wave released the first adiabatic quantum computer of $128$ qubits in $2011$~\cite{johnson2011quantum}.
A major obstacle in quantum hardware manufacturing is the occurrence of errors, often caused by interactions with the environment. 
Completely isolating a quantum system is practically difficult, making it challenging to preserve quantum coherence, \ie~the time period during which qubits retain their quantum properties for performing calculations. 
As a result, qubits are noisy, uncorrected and prone to imperfect operations. 
Due to the no-cloning theorem, researchers recognised that quantum error correction had to be fundamentally different from classical methods, which typically rely on redundancy through copying information multiple times. 
A groundbreaking approach to quantum error correction was introduced by Shor in 1995, a year after his pioneering work on factoring integers~\cite{georgescu202025,shor1995scheme,calderbank1996good}. 
However, practical implementations of quantum error correction remain highly challenging, primarily because they require a significant number of additional qubits to encode and protect information---often far beyond the number needed for basic computation.  
Thus, large-scale fault-tolerant quantum computing remains an ongoing challenge rather than a solved problem~\cite{cai2023quantum}.
Early quantum machines—both current and near-future—are categorised as \emph{Noisy Intermediate-Scale Quantum (NISQ)} devices~\cite{preskill2018quantum}, inspiring researchers to explore how useful such systems could be, even without full error correction. 
In the late 2010s and early 2020s, the first claims of computational supremacy or advantage by quantum computers emerged. 
In particular, Google presented an experiment~\cite{arute2019quantum} suggesting that a classical computer would require an unreasonable amount of time to sample the outputs. 
However, subsequent research demonstrated that the sampling task could indeed be solved using classical computing~\cite{SupremacyBeaten}. 
On the quantum annealing side, D-Wave identified a problem where their annealer seemed to outperform all known classical algorithms in 2024~\cite{king2024computational} and in 2025~\cite{king2025beyond}.  

\begin{figure*} [t]
\centering 
\subfloat[2000Q (2017).]{\includegraphics[width = 0.33\textwidth, trim={0cm 1cm 19cm 0cm}, clip]{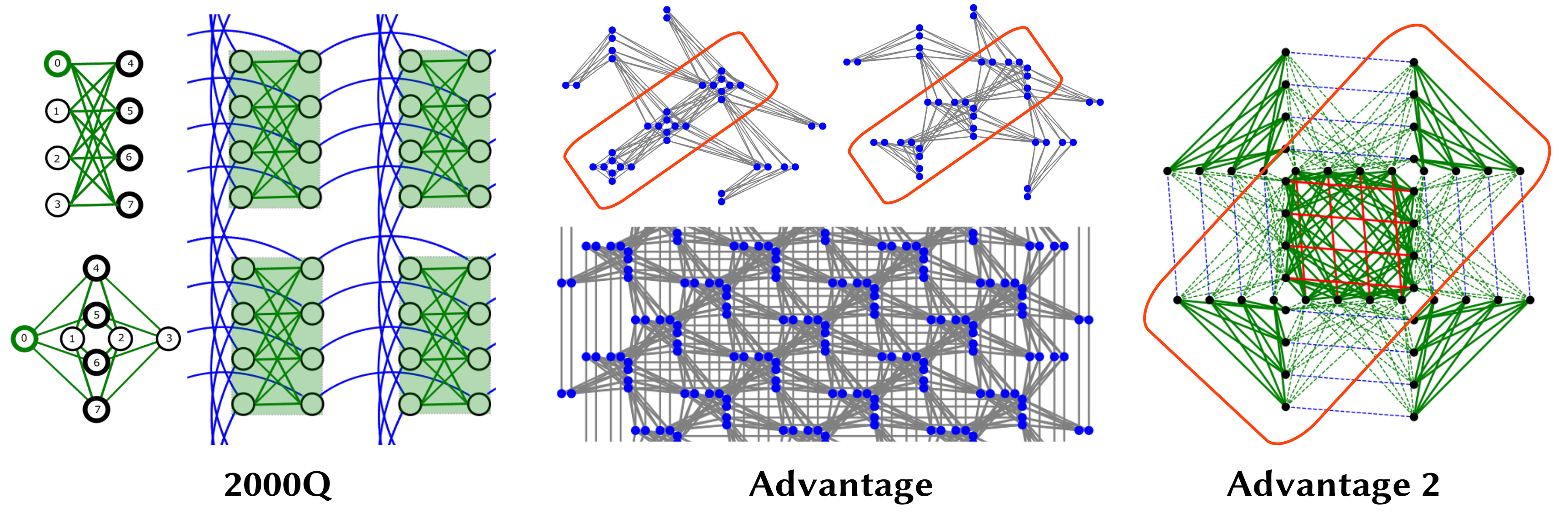}}
\subfloat[Advantage (2020).]{\includegraphics[width = 0.35\textwidth, trim={10cm 1cm 8.5cm 0cm}, clip]{figures/figure_D-Wave_Architectures.png}}
\subfloat[Advantage 2 (2025).]{\includegraphics[width = 0.29\textwidth, trim={20.6cm 1cm 0cm 0cm}, clip]{figures/figure_D-Wave_Architectures.png}}
\caption{Different architectures of D-Wave quantum annealers and their qubit connectivity patterns.
	Launch dates are indicated in brakets.
	(a) 2000Q with 2048 qubits (nodes) of degree 6 each ---4 internal (green) and 2 external (blue) couplers---arranged in the Chimera topology;
	(b) Advantage with 5640 qubits (dots) of degree 15 each ---12 internal (gray) and 3 external couplers and additional odd ones--- arranged in Pegasus topology;
	And 
	(c) Advantage 2 with 4400 qubits(dots) of degree 20 each ---16 internal (green) and 4 external (blue) couplers and additional odd (red) ones--- arranged in the Zephyr topology.
    Source:~\cite[reproduced with permission]{dwave_architechture}.
} 
\label{fig:D-Wave_architectures} 
\end{figure*} 

\subsubsection{Gate Quantum Computers Architectures}
Many prominent quantum systems are realization of gate-based architectures, see Fig.~\ref{fig:gate_architectures} for the graph connectivity of several architectures. In the following we give some information about the competing architectures:
\begin{enumerate}
\item IBM: IBM's flagship System One quantum computers are quantum processors built with superconducting qubits.
IBM managed to produce a large number of systems of different sizes accessible over the cloud.
These systems come with several types of connectivity graphs and assigning physical qubits to logical qubits often requires procedure called transpilation. 
In December 2023, IBM announced a system with 1,121 qubits \cite{IBMroadmap2025}.

\item Google: Google's try to achieve quantum supremacy was executed with the Sycamore processor a superconducting quantum computer with 53 qubits. 
Reference~\cite{bravyi2022future}
provides further descriptions of technological challenges for superconducting qubits.

\item IonQ: IonQ's systems employ trapped ions as qubits, demonstrating exceptionally low error rates.
An advantage of IonQ trapped ion architecture are a fully connected architecture, however, the architecture can not yet perform mid-circuit measurements. The largest Aria architecture has $21$ qubits, while the IonQ Forte has 36 qubits.  
Many aspects about trapped ion QCers are presented in detail in the review article~\cite{bruzewicz2019trapped}.

\item Rigetti: Rigetti's Ankaa-3 quantum computer offers cloud-based access to computers with up to $84$ superconducting qubits~\cite{Rigetti}.
\item Honeywell: Honeywell's trapped-ion quantum computer emphasizes qubit quality and stability in qubit manipulation~\cite{Honeywell}.
\item PsiQuantum, Xanadu: Both work in photonic quantum computing. A review about experiments with  linear optical elements  can be found in~\cite{zoller2005quantum}. 
\end{enumerate}

In addition to real quantum hardware, simulators of more than $30$ qubits are available for testing quantum circuits over the cloud from several providers such as Amazon Web Services (AWS) and Microsoft Azure, IBM and Google Cloud.

Also note that while the number of qubits is often cited as a benchmark, it provides limited insight in isolation. 
Also important are factors such as qubit connectivity, gate fidelity, coherence time and error correction capabilities. 
All these characteristics considered together offer a more meaningful picture of quantum device performance. 
\subsubsection{Quantum Annealing Architectures}
\label{ssec:DWave_Architecture} 

Quantum annealing is possible with a variety of qubit technologies.
The D-Wave~\cite{Dwave} company provides the most advanced annealers, which we describe in the following.

To obtain the qubits in the D-Wave annealer, Niobium loops are cooled to temperatures below $17$mK, so that they become superconducting. This allows the current to flow in two directions, which physically realises a qubit. 
For example, the state $\ket{0}$ corresponds to current flowing clockwise in the loop. 
The coupling between individual qubits is also done via Niobium loops. 
The cooled niobium loops have to be carefully shielded from interaction with the environment.
For the D-Wave machines, the coherence time is of the order of nanoseconds. 
This is shorter than the actual default annealing time of $20\mu s$~\cite{Dwave}.
How this affects the solution quality is still under research~\cite{albash2015decoherence}. 

Since 2016 at the latest, there is experimental evidence that D-Wave machines leverage quantum mechanical effects to perform computations~\cite{denchev2016computational, Gibney2017DWaveUH, Mandra2017, King2021}. 
Nowadays there seems to be a consensus that quantum mechanical effects are present in the D-Wave annealers.
For instance, Denchev \textit{et al.}~\cite{denchev2016computational} investigated the computational utility of having quantum mechanical tunneling in the dynamic of the system. 
They demonstrated that for some energy landscapes, quantum annealing can be ${\sim}10^8$ times faster than simulated annealing. 
Currently, three generations of D-Wave quantum annealers can be accessed and used remotely; Those are 2000Q (Chimera), Advantage (Pegasus) and Advantage2 (Zephyr). 
Two 2000Q QPUs (located in North America) and three Advantage QPUs (two in North America and one in Europe) are available for remote access. 
A QPU of the Zephyr topology was completed in late 2023 and released in $2025$, with $4400$ qubits. 
All three generations of D-Wave annealers differ in their architecture. 
The differences include the compositions of the cells, the total number of unit cells and qubits, the nominal lengths of the qubits, the number of external couplers per qubit and the available types of couplers. 
The qubits are organised in unit cells. 
Fig.~\ref{fig:D-Wave_architectures} summarises the current D-Wave architectures. 
The architectures determine how QUBOs are embedded into hardware with so-called \textit{Minor embedding} algorithms (such as Cai \textit{et al.}~\cite{Cai2014}) that can calculate a mapping from the logical problem graph to the physical hardware graph.
In this way, multiple physical qubits are enforced to behave as one logical qubit with higher connectivity. 
One often calls this process building a chain. 
The physics behind how this works is that one adds a term to the Hamiltonian that penalises unequal measurement results in the computational basis between the qubits in the chain. 
The strength of how much the physical qubits are enforced to behave as one logical qubit is a free parameter called the chain strength. 
Finding optimal embeddings influences not only the number of physical qubits needed for a problem and the chain length but also the probability of measuring an optimal solution in a single read (anneal). 
Advantage can embed and sample substantially larger QUBO problems compared to 2000Q: while the latter has approximately 2040 working qubits—each connected to up to six others—Advantage features over twice as many (around 5500), with connectivity to up to $15$ other qubits. 
Advantage2 (released in 2025) with 4400 qubits has fewer qubits than Advantage but offers improved connectivity with a degree of 20. 
Note that some qubits are nonoperational, which minor embedding algorithms must be taken into account (each QPU can be queried for the list of such qubits). 
These defective qubits also play a large role in the computational complexity of finding the embedding~\cite{lobe2024minor}. 
It is worth mentioning that the default annealing schedule in D-Wave machines is not given by a simple linear combination as introduced in Eq.~\eqref{eq:hamiltonian_transition};  see~Ref,\cite{shedule}. 
Moreover, D-Wave annealers are programmable machines allowing the user to set up solver parameters in addition to the problem-specific couplers and biases. 
Among others, the user can define the chain strength, the total annealing time, the annealing schedule and the number of shots, \ie~ the number of repetitions of the annealing process. 

\section{Quantum-enhanced Computer Vision}\label{sec:methods_domains} 
After having reviewed the fundamentals of quantum computing, we now delve deeper into the use of quantum computers in enhancing computer vision techniques. 
As the computer vision community is driven by experimental results, we emphasise methods evaluated on real quantum hardware. 
Quantum hardware has been accessible to a wider community of researchers for several years—often through free-tier accounts—and opportunities to access quantum computers are expected to increase further in the near to mid-term.
Hence, it is expected that the number of studies demonstrating results on real quantum hardware will steadily increase. 
In the following, we begin with a discussion on problem mapping methodologies, specifically modelling and data encoding techniques for AQC and gate-based QC in Sec.~\ref{sec:ProblMapMeth}. 
Subsequently, we review the published literature on QeCV methods utilizing AQC in Sec.~\ref{ssec:MethodsAQC}, followed by approaches based on the gate-based paradigm in Sec.~\ref{ssec:MethodsGateBased}. 
Finally, Sec.~\ref{sec:others_and_discussion} presents high-level insights and observations derived from the reviewed literature.

\subsection{Problem Mapping Methodologies} 
\label{sec:ProblMapMeth}

\subsubsection{Mapping to AQC}
\label{sec:ProblMapMethAnnealers}
As discussed in Sec.~\ref{ssec:adiabatic}, quantum annealing is a heuristic designed to solve 
(or sample solutions to) problems of the form \eqref{eq:optimization_problem} and \eqref{eq:qubo}. 
Hence, every QeCV approach leveraging modern quantum annealers has to \textbf{provide a QUBO objective} that is then sampled on a quantum annealer.  
The weights of the QUBO form (qubit couplers and biases) have to be calculated on classical hardware for a given problem and data. 
The resulting weights define the \textbf{logical problem} and the logical problem graph, in which each binary variable corresponds to a logical qubit (\ie~ a qubit on a hypothetical quantum machine). 
Since the connectivity patterns in real quantum annealers are restricted in the sense that not all qubits can be directly connected to all other qubits, a form of transpiling\footnote{Generally, transpiling in quantum computing refers to finding a low-level and hardware-oriented equivalent to a quantum circuit or a high-level problem for quantum hardware.} has to be performed known as a \textbf{minor embedding}. 
In graph theory, an undirected graph $H$ is called a minor of the graph $G$ if $H$ can be formed from $G$ by deleting edges, vertices and by contracting edges. 
Hence, minor embedding is finding a graph minor for a logical problem graph to the hardware graph of a quantum annealer. 
Once a minor embedding is found using algorithms such as Cai \textit{et al.}~\cite{Cai2014}, quantum annealing can be repeated multiple times (on the order of hundreds or thousands) to sample low-energy bitstrings (with the hope that the optimal solution would be among them). 
This step is also known as \textbf{solution sampling}. 
As mentioned in Sec.~\ref{ssec:DWave_Architecture}, although it may sound substantial, each sampling run—or \textit{annealing shot} on a D-Wave machine typically lasts only $20\mu s$. 
Therefore, even when performing hundreds to thousands of shots, the total time overhead remains relatively minimal.
Once multiple bitstrings are sampled, one or several of them can be selected for \textbf{unembedding}, in which a single binary variable is obtained/measured for each logical qubit. 
\textbf{Bitstring selection} can use different criteria, with the most widely used one being based on the lowest energies associated with the samples. 
Once one or several bitstrings are selected, they can be interpreted as solutions to the original problem. 
The \textbf{solution interpretation} is, hence, directly derived from solution encoding. 
This sequence is illustrated in Fig.~\ref{fig:qcv}.
In summary, every QeCV algorithm would have the following six steps: (i) QUBO preparation, (ii) minor embedding (at least calculated once if the structure of the weight matrix does not change), (iii) sampling by quantum annealing, (iv) unembedding, (v) bitstring selection and (vi) solution interpretation. 
Fig.~\ref{fig:qsync} visualises the six steps of QeCV approaches for quantum annealers in the context of the QSync approach~\cite{QuantumSync2021}. 

Encoding solutions as binary bitstrings and finding QUBO forms accounting for the problem, data and additional constraints (regularisers) constitute an active area of QeCV research, specially driven by the availability of real quantum annealers capable of solving problem sizes relevant to CV applications.
Many QeCV approaches define a single QUBO objective and sample solutions to it once on a quantum annealer. 
This method type with a single QUBO preparation step is known as \textit{one-sweep} approaches. 
QeCV methods can also be \textbf{iterative}, they can repeat the above-mentioned six steps, \eg~until solution convergence or aggregation. 
Fig.~\ref{fig:qcvhybrid} visualises this principle: Once the initially prepared QUBO is sampled, a new QUBO is prepared based on the obtained solution and so on until the problem-specific convergence criterion is met. 
Most of the time, the iterations will interpret the result using classical computation performed on CPU/GPU. 
We next describe common methodologies for finding and deriving QUBO forms in QeCV. 

\begin{figure}[t]
\begin{center}
	\includegraphics[width=\linewidth]{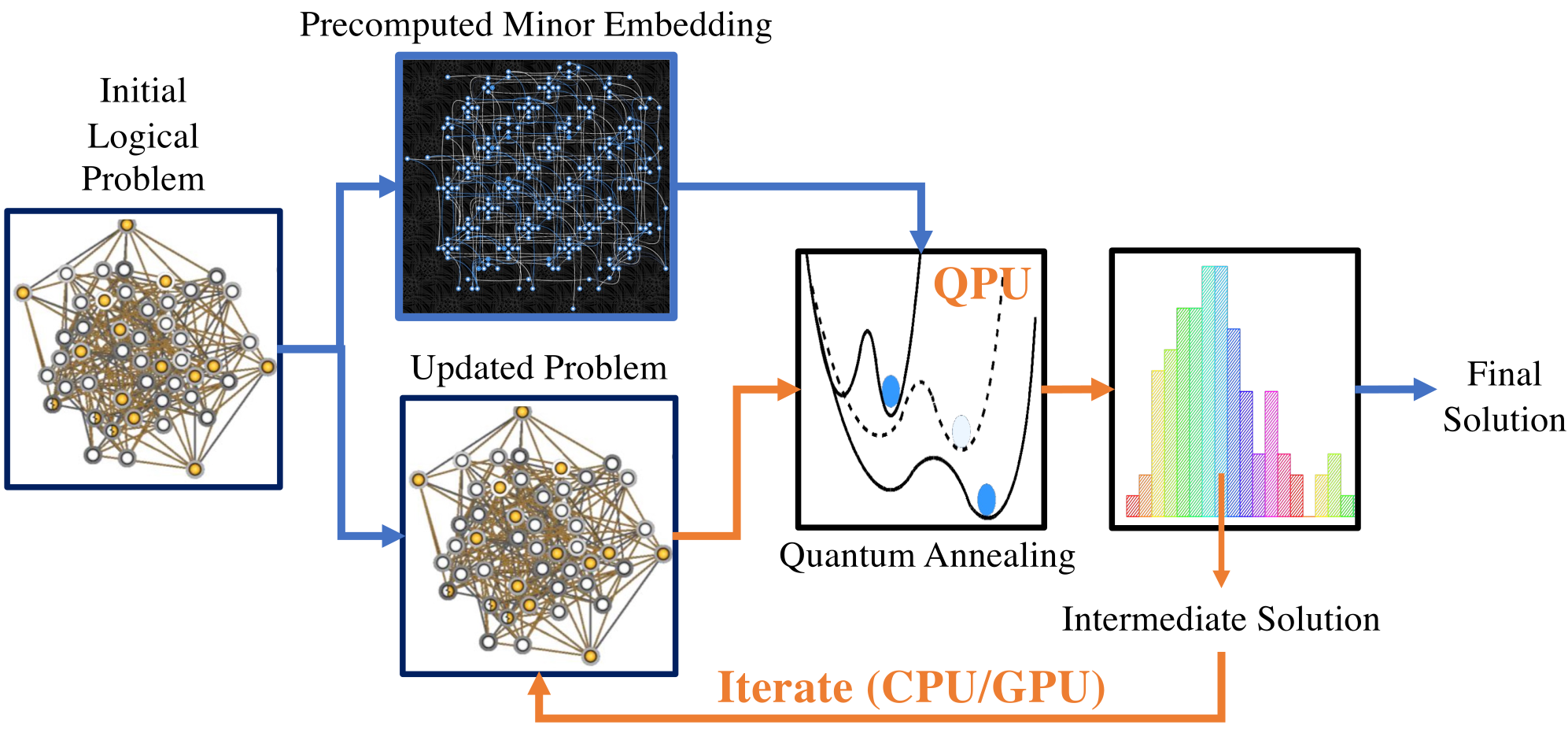}
\end{center}
\caption{
	Schematic overview of iterative quantum computations with a quantum annealer: After the initial logical problem has been defined, it is minor-embedded on an annealer. 
	Once the first annealing is finished, the logical problem is updated, minor-embedded and annealed again. 
	These three steps are repeated until convergence following a problem-specific criterion. 
} 
\label{fig:qcvhybrid}
\end{figure}

\parahead{QUBO and Ising problems}
We note that one can easily convert any optimization problem over $n$ values $v_1, v_2 \in \mathbb{R}$ and w.l.o.g. $v_2>v_1$ into the Ising form \eqref{eq:optimization_problem} with $ s\in \{-1,1\}^n$ as variables. 
For this, we first introduce the variable transform from $s$ to the variables $v\in \{ v_1, v_2 \}^n$ as
\begin{equation}\label{eq:variable_transform} 
v = \frac{1}{2} ((v_2-v_1) s + (v_2+v_1 ) \mathbb{1}),
\end{equation}
where $\mathbb{1}$ is the length-$n$ vector filled with ones.

Using Eq.~\eqref{eq:variable_transform}, the conversion to the Ising form is given by
{\small
\begin{align}
	 \argmin_{v \in \{ v_1, v_2 \}^n  }  &v^\top W v + w^\top v \\
	=  \argmin_{s \in \{ -1, 1 \}^n  }  
	&\frac{1}{2} ((v_2-v_1) s + (v_2+v_1 ) \mathbb{1} )^\top  W \cdot \\ \nonumber 
	& \cdot \frac{1}{2} ((v_2-v_1) s + (v_2+v_1 ) \mathbb{1} ) \nonumber\\
	&+ w^\top \frac{1}{2} ((v_2-v_1) s + (v_2+v_1 ) \mathbb{1} ) \nonumber \\
	=   \argmin_{s \in \{ -1, 1 \}^n  }  
	&\frac{(v_2-v_1)^2}{4}  s  ^\top  W  s \\ 
	+ \frac{1}{2} (v_2+v_1 &)(v_2-v_1) \mathbb{1}^\top  W s  +\frac{v_2-v_1}{2} w^\top   +\textnormal{const.} 
\end{align}
}
\hspace{-10pt} and we see that in the last optimization problem, only quadratic and linear terms occur. 

In the above expression (and throughout the survey), we always assume quadratic cost matrices to be symmetric. This is because we can freely distribute matrix entries between a matrix element $W_{i,j}$ and $W_{j,i}$. Simply writing down the sum explicitly reveals: 
\begin{align}
v^\top Wv&= \sum_ i \sum_j v_iv_j W_{i,j } \\ 
& = \sum_{(i,j) \in\{ (i,j ) \in \{ 1,...,n \}^2 | i < j  \}  } (W_{i,j} + W_{j,i})v_i v_j \nonumber  \\
& + \sum_i W_{i,i} v_i, 
\end{align}
\ie~the optimization problem only depends on the sums $(W_{i,j} + W_{j,i})$.

Also note that each formulation has a particular invariance, \eg~in problem \eqref{eq:optimization_problem}, the diagonal of $J$ can be changed by arbitrary constants due to $s_i^2=1$. 
In the frequently used QUBO form \eqref{eq:qubo}, one can place all diagonal terms of $Q$ into $c$ since $(x_i)^2 = x_i$ for $x_i \in \{0,1\}$. 

\parahead{Handling constraints} 
The usual QUBO problem \eqref{eq:qubo} is unconstrained by definition. 
However, to solve most vision problems, we need to factor constraints in, yielding the more general \emph{Quadratic Binary Optimisation} (QBO).
We next discuss the integration of equality and inequality constraints into QUBOs through Lagrange multipliers and ancilla binary variables. 

\parahead{Linear equality constraints} 
The standard way to incorporate equality constraints is by adding a so-called \emph{penalty} or \emph{rectification} term. 
Suppose one wants to optimize $x^\top  Qx$ over binary variables $x\in \{0,1\}^n$ with the restriction $Ax= b$, were $A\in \mathbb{R}^{m\times n}$ and $b\in \mathbb{R}^m$:
\begin{equation}
\label{eq:constrained_to_unconstrained_0}
\argmin_{ \{ x\in \{0,1\}^n| Ax= b \}} x^\top  Qx.
\end{equation}
Then, for a sufficiently large penalty parameter $\lambda \in \mathbb{R}$ , the constrained optimization problem \eqref{eq:constrained_to_unconstrained_0} can be converted into a \emph{softly} regularised QUBO in the following way: 
\begin{equation}
\label{eq:constrained_to_unconstrained_1}
\argmin_{  x\in \{0,1\}^n } \quad \underbrace{x^\top  Qx}_{\text{objective term}} 
+ \ \underbrace{ \lambda ||Ax-b ||^2}_{\text{penalty term}}.
\end{equation}
Note that the term $||Ax-b||^2$ has only quadratic dependence on the binary variables $x$. 
While from a purely theoretical perspective, the $\lambda$ parameter can be arbitrarily large, in practice, a penalty parameter $\lambda$ that is too large can cause numerical problems. 
If, on the contrary, $\lambda$ is too small, the imposed constraints will have virtually no effect on the outcome. 
Choosing one penalty parameter $\lambda_i$ for each row of $A$ can be beneficial to derive tighter bounds for each $\lambda_i$ in some cases. 
The QUBO problem then takes the form 
\begin{align}
\label{eq:constrained_to_unconstrained_2}
\begin{split}
	\argmin_{x\in \lbrace 0,1\rbrace^{n} } \quad  x^\top Q x + c^\top  x + \sum_i \lambda_i \left(   (Ax)_i-b _i \right)^2,
\end{split}
\end{align}
see, \eg~Ref.~\cite{SeelbachBenkner2020}. 
Still, it is in most cases impossible to know in advance the optimal values of the rectification weights $\lambda_i$. 
The selection of the $\lambda_i$ is often performed by a grid search on small problem instances~\cite{QuantumSync2021, Arrigoni2022}, though there is no guarantee that such weights will generalise to other and larger problem instances (\textit{\eg~} in synchronisation problems). 
Zaech \textit{et al.}~\cite{zaech2022adiabatic} proposed a heuristic to find the minimally sufficient value $\lambda_{i, \text{min}}$ of the rectification weights. 
Their approach can be applied to an arbitrary number of Lagrange multipliers: They model $\lambda_{i}$ (\textit{\ie~} each Lagrangian) as 
\begin{equation} 
\lambda_{i} = \lambda_b + \lambda_{i}^{'} + \lambda_{\text{off}}, 
\end{equation} 
where $\lambda_b$ is a small base value, $\lambda_{\text{off}}$ is a spectral gap  offset and $\lambda_{i}^{'}$ is estimated during optimisation. 
Optimisation starts with $\lambda_{i}^{'} = \lambda_{\text{off}} = 0$ and calls the rectified QUBOs iteratively until a solution of low energy satisfying the constraints is found. 
This approach monitors the cost reduction due to violation of individual constraints  (due to too low $\lambda_{i}$) after each QUBO sampling and increases  $\lambda_{i}^{'}$ in every iteration. 
A publication solely focusing on using augmented Lagrangian methods to deal with penalty parameters in quantum annealing is Djidjev~\cite{djidjev2023logical}. 

\parahead{Linear inequality constraints} 
Linear inequality constraints are often incorporated with so-called slack variables $s$ :
\begin{equation}
Ax -b\leq 0  \quad \Leftrightarrow \quad  Ax -b= s,
\end{equation}
which are restricted to the values
$s_i\in\{ (Ax -b)_i| x \in  \mathcal{S} \}$, where $\mathcal{S}$ is the feasible set for $x$ here. In practice, one can extend the value set of $s_i$ with other non-positive integers. These values will, however, not give the desired solution of $Ax -b= s$. 
A reference to find this reformulation of inequality constraints is~Witt et al.~\cite{witt2022tactile}, in which the authors look into solving integer linear programs on the quantum annealer. 

Recently, Montanez-Barrera~\cite{montanez2024unbalanced} proposed an unbalanced penalisation that does not require additional slack variables to encode inequality constraints. 
Their core idea lies in penalising $\lambda \exp(Ax-b)$, which is higher the more the constraint $Ax-b < 0$ is violated. 
However, as the term $\exp(Ax-b)$ is non-quadratic in $x$, they regularise the QUBO with $\lambda f(Ax-b)$, where $f$ is the second-order Taylor approximation of $\exp(\cdot)$. 
\parahead{Hard constraints} 
Although regularization can lead to accurate results with proper choices of the regularization coefficients, it still does not lead to exact constraint satisfaction. 
Therefore, Yurtsever and colleagues~\cite{yurtsever2022q} follow another approach to solving \emph{constrained} QBO problems and introduce \emph{Quantum Frank Wolfe} (Q-FW). 
They begin by showing that the problems presented in constrained-QBO forms are equivalent to \emph{copositive programs} (CPs). 
Specifically, consider:
\begin{equation}\label{eq:QBO}
	\begin{aligned}
&\min_{x \in \lbrace 0,1\rbrace^{n^2}} && \quad x^\top Q x + 2\,c^\top x \\
&\ \quad \text{s. t.} && \quad a_i^\top x = b_i, ~~~ i = 1,\ldots,m.
	\end{aligned}
\end{equation}

Then,~\eqref{eq:QBO} is tightly equivalent to the copositive problem:
\begin{equation}\label{eqn:copo}
\begin{aligned}
	\min_{x, X} & \quad \mathrm{Tr}(QX) \\ 
	\text{s. t.} &\quad a_i^\top x = b_i, ~~~ i = 1,\ldots,m, \\[-0.25em]
	& \quad \mathrm{Tr}(A_i X) = b_i^2, ~~~ i = 1,\ldots,m, \\
	& \quad \mathrm{diag}(X) = x, \text{~and~} \begin{bmatrix}
		1 & ~ x^\top \\
		x & ~ X~
	\end{bmatrix}
	\in \Delta^{n+1}.
\end{aligned}
\end{equation}
where $A_i := a_i a_i^\top$ and $\Delta^{n}:=\mathrm{conv}\{xx^\top: x \in \mathbb{Z}_2^n\}$ is the \emph{copositive cone}. One can further compactify this form into a convex (but still NP-hard) optimization:
\begin{equation}\label{eqn:copo-compact}
\begin{aligned}
	\min_{W \in \Delta^p} ~ \mathrm{Tr}(CW) \quad \text{s. t.} \quad 
	& \mathcal{A}W = v,
\end{aligned}
\end{equation}
where $\mathcal{A}$ and $v$ are respectively a linear map and a vector combining all affine constraints in problem \eqref{eqn:copo}.

To solve problem \eqref{eqn:copo-compact}, the authors utilize an \emph{\textbf{a}ugmented-\textbf{L}agrangian} variant of \textbf{F}rank-\textbf{W}olfe (also known as conditional gradient method)~\cite{FrankWolfe1956,yurtsever2019conditional} and develop the hybrid quantum-classical \emph{Q-FWAL} algorithm. 
Despite suffering from sub-linear convergence, Q-FWAL offers several advantages:
(i) It is an algorithm for solving general CPs and as such any problem that can be formulated into a CP can leverage Q-FWAL as an optimizer, which is more general than QUBOs;
(ii) The authors have shown how Q-FW can allow for the integration of \emph{inequality constraints}; 
And (iii) having the CP formulation can potentially allow for solving non-binary problems~\cite{prakhya2024convex}. The possibilities that Q-FWAL opens up are to be investigated in the future. 

\parahead{Handling higher-order terms}
QUBO problems are quadratic by definition. 
Often in CV, however, one has to optimise so-called \emph{pseudo-Boolean functions}, \ie~to solve problems of the form 
\begin{equation}
\label{eq:pseudo_boolean}
\argmin_{x\in \{0,1\}^n } \sum_{S \subset [n]} w_S \prod_{j\in S} x_j,
\end{equation} 
where the subsets $S$ of $[n] = \left\{1, \ldots, n\right\}$ can be such that $|S| > 2$. 
QUBOs are particular cases of such problems with $|S| \leq 2, \forall S \subset [n]$. 
Solving the more general class of problems in \eqref{eq:pseudo_boolean} with quantum annealers involves \emph{quadratising} the objective function, typically at the cost of ancilla (additional) qubits. 
Next, we briefly review some quadratisation techniques. 
A broader overview of those can be found in~Refs.\cite{boros2014quadratization,dattani2019quadratization}. 
For negative terms, Freedman and Drineas~\cite{freedman2005energy} proposed the term-wise reduction
\begin{equation}
\label{eq:quadratization}
-x_1 \cdots x_d = \min_{z\in \{0,1\}} z\left[(d-1) - \sum_{j=1}^d x_j  \right],
\end{equation}
which introduces one ancilla variable $z$ per term. 
By considering the negated variables $\Bar{x}_j = 1-x_j$, one finds from Eq.~ \eqref{eq:quadratization} a reduction for positive terms as well, where each term of degree $d$ needs $d-2$ ancilla qubits; see~Rodr{\'\i}guez-Heck~\cite{rodriguez2018linear}. 
The shortcoming of such term-wise reduction, in addition to increasing the number of variables, is that they are ``local'' in nature and do not capture the ``global'' replacement effect of the terms. 
Ishikawa~\cite{ishikawa2009higher,ishikawa2010transformation} proposed a more compact quadratisation for positive terms, using only about half as many variables as the previous term-wise methods. 
In~Ref.~\cite{fix2011graph}, a more global reduction technique is proposed to transform a group of terms at once, using asymptotically the same number of ancilla variables as Ishikawa~\cite{ishikawa2009higher}. 
A method that does not require additional variables was recently proposed by Kuete Meli \textit{et al.}~\cite{meli2025qucoop}; see Fig.~\ref{fig:qucoop}. 
Their framework termed QuCOOP enables tackling
binary parameterised problems on quantum annealer. 
The starting problem is constrained, quadratic and of the form
\begin{equation}
\label{eq:original_problem}
\argmin_{s \in \mathcal S} \ s^\top Q s,
\end{equation}
where $ \mathcal{S} \subseteq \mathbb{R}^k $ is a potentially non-convex set. 
If $ \mathcal{S} $ can be parametrised by a binary variables $ x \in \{0,1\}^n $ through a function $ g: \{0,1\}^n \to \mathcal{S} $, the problem reformulates as: 
\begin{equation}
\argmin_{x\in \{0,1\}^n } \ g(x)^\top Q g(x),
\end{equation}  
which is, for $ g $ linear in $ x $, a standard QUBO. 
QuCOOP comes into play when $ g $ is non-linear in $x$. 
To enable optimisation, the authors consider a smooth approximation of $ g $ and apply first-order Taylor expansion to construct a sequel of QUBOs: 
Initialise $x^0$ and repeat for $t = 0, 1, 2, \ldots$
\begin{align}
g^t (x) &:= g(x^t) + \Braket{\nabla g(x^t), x - x^t}\label{eq:linearization}\\
\begin{split}
	\label{eq:minimization_step}
	x^{t+1} &\gets \argmin_{x \in \mathcal \{0,1\}^n} \ g^t(x)^\top Q g^t(x) \quad \mathrm{s.t.} \quad g^t (x) \in \mathcal S,
\end{split}
\end{align}
where the constraint $g^t(x) \in \mathcal{S}$ is expected to be a quadratic form in the QUBO objective via penalty parameters.
The algorithm then returns $ s = g(x^{t+1}) $ as a solution of problem \eqref{eq:original_problem}. 
Despite assuming smoothness, in theory, only binary values of $ x $ are meaningful parameters, justifying the use of quantum annealers for sampling. 
The authors prove that QuCOOP monotonically decreases the objective function. 

\begin{figure}[t] 
\begin{center} 
	\includegraphics[width=.98\linewidth] {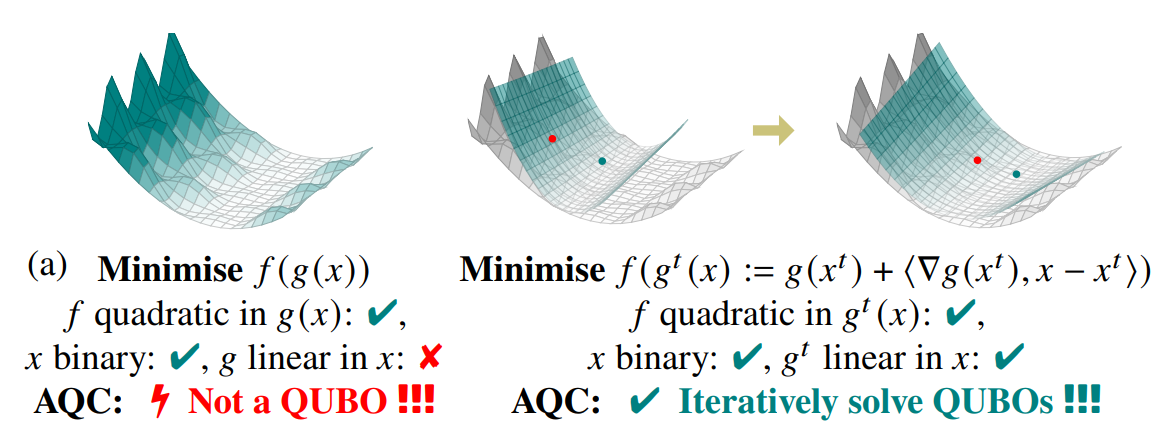} 
\end{center} 
\vspace{-.2cm}
\caption{QuCOOP optimizes binary-parametrised problems by iteratively solving QUBOs on quantum annealers~\cite[IEEE \textcopyright 2025]{meli2025qucoop}.} 
\label{fig:qucoop} 
\end{figure} 

\subsubsection{Mapping to Gate-based QC} 
\label{sec:ProblMapMethGateBased}
\begin{figure*}
\centering
\includegraphics[width=1.0\linewidth]{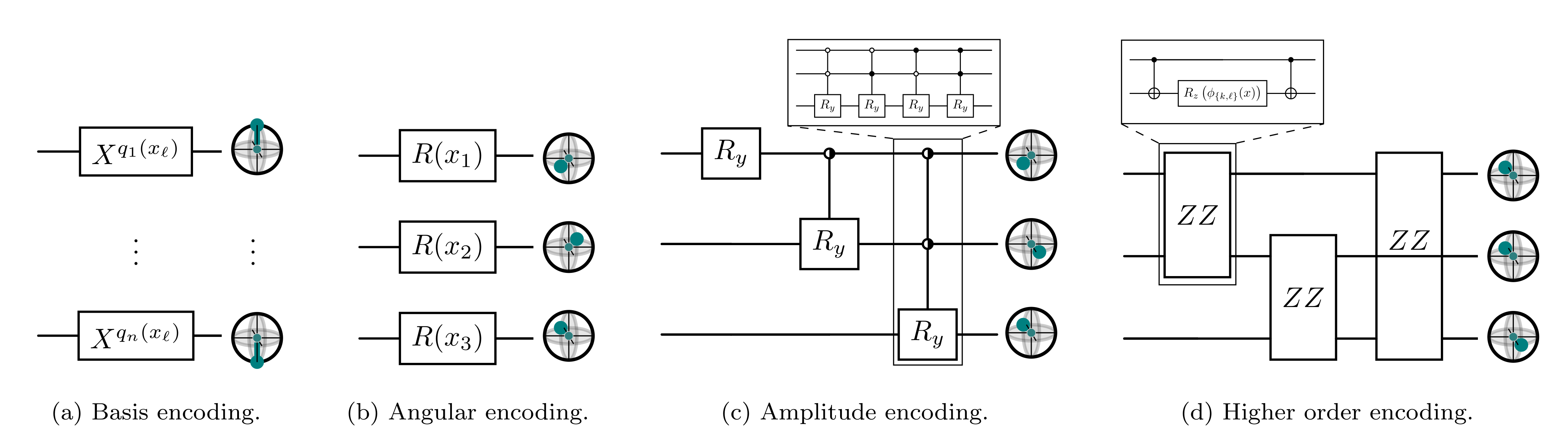}
\caption{Examples of quantum circuits for widely used encoding schemes of classical data into quantum states. 
	All input qubits are assumed to be in the initial $\ket{0}$ state. 
	(a) Basis encoding of a length-$n$ bit-string $\ket{x_\ell} = \otimes_{j = 1}^{n} \ket{q_j(x_\ell)}$ corresponding to the binary representation of entry $x_\ell$ of a data vector $x$.
	The operators $X^{q_j(x_\ell)}$ are either $I$ or $X$ according to  $q_j(x_\ell) \in \left\{0, 1\right\}$, which correspond to leaving the input Bloch vector unchanged or flipping it, respectively. 
	(b) Angular encoding of a length-$3$ vector $x$.
	Each qubit $\ell$ is rotated by an angle equal to the entry $x_\ell$ of $x$. 
	(c) Amplitude encoding of a length-$8$ vector $x$. 
	We refer to Schuld and Petruccione~\cite[Chap. 5]{schuld2018supervised} 
	for explicit formulae of how the rotation angles depend on the input vector. 
	(d) A $ZZ$-feature map as an example of higher-order encoding of a length-$3$ vector $x$; see also Havl{\'\i}{\v{c}}ek \textit{et al.}~\cite[Fig. 1c]{havlivcek2019supervised}.
}
\label{fig:data_encoding}
\end{figure*}

Unlike adiabatic quantum computers, there are many ways of mapping problems into gate-model quantum algorithms. 
Yet, all gate-based algorithms require the data to be encoded in quantum states. 
Quantum-enhanced CV methods fall into the category ``classical data and quantum processing'', since data is stored classically and first needs to be converted to a qubit statevector representation suitable for processing on quantum hardware. 
The requirement of encoding imposes certain restrictions on how classical data can be encoded. 
Thus, all quantum data representations have to be a valid qubit state, \ie~normalised. 
Suppose that we are given an $m$-dimensional real signal $x = (x_1, \ldots, x_m)$ that we wish to encode in a quantum state vector. 
Several encoding schemes have been proposed in the literature so far, differing in qubit properties that encode the input classical inputs. 
Common encoding schemes for gate-based quantum computers include basis, angular, amplitude and high-order encodings (see Fig.~\ref{fig:data_encoding}); while quantum annealers typically employ Ising encoding, as introduced in problem~\eqref{eq:optimization_problem}.
\vspace{0.5mm}
\parahead{Basis encoding} 
Assume that each entry $x_\ell$ of $x$ has a (possibly padded) binary representation $(q_1(x_\ell), \ldots, q_{n}(x_\ell))$, $q_j(x_\ell)\in \left\{0, 1\right\}$ such that $x_\ell$, can be directly mapped into a computational basis state $\ket{x_\ell} = \otimes_{j = 1}^{n} \ket{q_j(x_\ell)}$. 
For encoding the whole vector $x$ in basis encoding, one prepares a superposition of such computational basis states, which component-wise corresponds to the binary input pattern. 
Hence, the quantum state encoding of the data $x$ is given by 
\begin{equation}
\label{eq:basis_encoding}
\ket{\psi(x)} = \frac{1}{\sqrt{m}}\sum_{\ell = 1}^m \ket{x_\ell} 
\end{equation}
and requires $n$ qubits in total. 
As a system of $n$ qubits has $2^n$ basis states, we need $n$ such that $m\leq 2^n$ for encoding $x$. 
The quantum state in Eq.~\eqref{eq:basis_encoding} is not well defined if $\ket{x_\ell} = \ket{x_k}$ for some $\ell{\neq}k$. 
One can address this with a so-called quantum random access memory (QRAM), which introduces an index register to prepare the entangled state 
\begin{equation}
\ket{\psi(x)} = \frac{1}{\sqrt{m}}\sum_{\ell = 1}^m \ket{l}\ket{x_\ell},
\end{equation} 
where $\ket{\ell}$ is the $\log_2(m)$-length binary encoding of~$\ell$. 
Basis encoding offers the most computational freedom to apply arithmetical and trigonometrical operations on the data. 
Just as a Toffoli gate stores intermediate results in an extra qubit to revert classical operations, one adds a quantum register to the basis-encoded data to hold the outcome of the operation.
Hence, an operator $A$ would act as $A\ket{x_\ell}\otimes \ket{0} \mapsto \ket{x_\ell}\otimes \ket{f(x_\ell)}$, where $\ket{f(x_\ell)}$ encodes the computation result. 
Basis encoding is intuitive for encoding discrete data such as integers or binary strings. 
For non-integer data, it is, however, less suitable without further pre-processing. 
\vspace{0.3mm}
\parahead{Angular encoding} 
Angular encoding uses $m$ qubits to encode the data $x$. 
Each entry $x_\ell$ serves as one parameter of the single-qubit gate $U$ acting on the initial state $\ket{0}$; cf. Eq.~\eqref{eq:genral_unitary}. 
The quantum state holding the data is given by
\begin{equation}
\ket{\psi(x)} = \otimes_{\ell = 1}^m \exp(i x_\ell G) \ket{0},
\end{equation}
for some generator $G$. 
Angular encoding is straightforward to implement and is not restricted to integer values. 
Also, it is more resource-efficient than basis encoding. 
However, angular encoding can be more restrictive compared to other encoding schemes in terms of operations applicable to the data. 
\parahead{Amplitude encoding} 
Amplitude encoding loads the vector $x$ into the amplitude of a $\log_2(m)$-qubit quantum state, which typically requires normalising the vector $x$. 
The quantum state holding the data is given by 
\begin{equation}
\ket{\psi(x)} = \frac{1}{\|x\|} \sum_{\ell=1}^{m} x_\ell\ket{\ell},
\end{equation} 
where $\ket{\ell}$ is the $\log_2(m)$-length binary encoding of $\ell$.

Amplitude encoding is a resource-efficient encoding scheme, as it uses exponentially fewer qubits than the dimension of the data to encode it. 
It benefits from quantum parallelism to transform all dimensions of $x$ at once. 
However, the gate complexity to prepare a quantum state in amplitude encoding can be high and the preparation sometimes requires complex circuit architectures. 
%

Fig.~\ref{fig:3DQAE} illustrates an example of amplitude encoding for a $3$D point. 
The $3$D vector is extended to a power-of-$2$ length vector by adding an auxiliary value. 
This resulting vector is then normalised and converted into a $2$-qubit quantum state vector through amplitude encoding. 

\begin{figure}[t] 
\begin{center} 
	\includegraphics[width=1.0\linewidth] {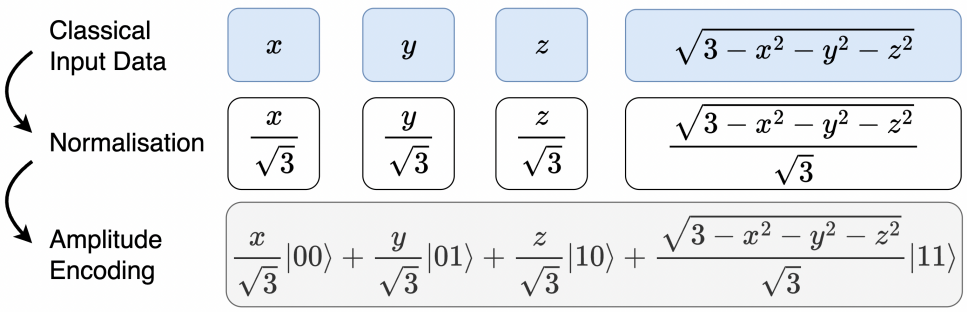} 
\end{center} 
\caption{Amplitude encoding for 3D points as proposed in 3D-QAE~\cite[courtesy of the authors]{Rathi2023}.} 
\label{fig:3DQAE} 
\end{figure}

\vspace{0.5mm}
\parahead{Higher-order encoding} 
Higher-order encoding can be used to transform the data into a higher-dimensional feature space, where it is easier to process, \eg~by using kernel methods. 
An example of higher-order encoding is the $ZZ$-feature map introduced in Ref.~\cite{havlivcek2019supervised} to enhance support vector machines on classification tasks. 
Given first a feature map $\phi: \mathbb{R^n} \to \mathbb{R}$ for some $n\leq m$, the $ZZ$-encoding encodes the data in a unitary operator of the form 
\begin{equation}
U(x) = \exp \left(i \sum_{S\subseteq [m]} \phi_S(x) \prod_{\ell\in S} Z_\ell\right),
\end{equation} 
where $Z_\ell$ is the Pauli-$Z$ operator acting on the $\ell$-th qubit of the system and $S$ a subset of the index set $[m] = \left\{1, \ldots, m\right\}$.
An example of $\phi$ is defined as $\phi_{\left\{k,\ell\right\}}(x) = (\pi - x_k)(\pi -x_\ell)$. 
The quantum statevector encoding the data is then given by 
\begin{equation}
\ket{\psi(x)} = U(x) \ket{0}^{\otimes m}. 
\end{equation} 
Hence, the number of qubits used in the $ZZ$-feature map is maximally equal to the dimension $m$ of the input vector. 
Other generators than the Pauli-$Z$ operators can also be used for higher-order encoding. 
Higher-order encoding generally suits well algorithms operating in a feature space. 
\vspace{0.5mm}
\parahead{Ising encoding} 
The encoding schemes above are merely concerned with the data and not the problem. 
In contrast, Ising encoding as defined in problem \eqref{eq:optimization_problem}, which is required for AQC-based methods, encodes both the problem and the data in qubit biases $b$ and couplings $J$; binary vectors sampled on an AQCs encode solutions to the problem. 
Nothing prevents Ising encodings from also being used in gate-based techniques and QML. 
Common circuit architecture designs shown in Fig.~\ref{fig:data_encoding} are implementation examples for the encoding schemes mentioned above. More resource-efficient circuit architectures exist; see Ref.~\cite{schuld2018supervised} and references therein. 
As a rule, QeCV practitioners do not have to design or implement data encoding schemes in most cases, as they are generally provided off the shelf. 

\subsection{Methods Relying on AQC}
\label{ssec:MethodsAQC} 
We now describe several prominent QeCV methods grounded in the adiabatic quantum computational paradigm. 
A high-level overview of the works we consider is provided in Table~\ref{tab:papers_aqc}. 
Predominantly, these methods are optimization- or energy-based, with experimental execution of the algorithms on real quantum hardware.

\begin{table*}
\centering
\caption{Overview of AQC-based methods for CV problems.
	We report problem sizes and the number of logical qubits tested, if mentioned in the papers.
	``$^\star$'': According to the experiments reported, the methods can also solve larger-sized problems. ``\# Qubits'' refers to the number of logical qubits. 
	\\ 
	Keys: 
	\begin{tabular}[t]{lllll}
		``TE'' & transformation estimation &``(R)MMF''& (robust) multi-model fitting \\
		``PSR''& point set alignment & ``(B)KC''& (balanced) k-means clustering \\
		``GM'' & graph matching & ``SISR''& single image super-resolution \\
		``PS''& permutation synchronisation & ``MCSVM''& multi-class support vector machines \\
		``MA''&  mesh alignment & ``UIM''& unsupervised image segmentation \\
		``OD, OT''& object detection/tracking & ``MS''& motion segmentation \\
		``RF''& robust fitting & ``SM''& stereo matching.
	\end{tabular}
}
\begin{tabular}{ccccccc}\hline
	Method & Problem         & Input type & Problem size$^\star$& \# Qubits & Iterative & Fig.\\\hline\hline
	QA, CVPR'20~\cite{golyanik2020quantum}&TE, PSR&point clouds & $\leq$5k$^\star$ points & $\sim$140&&\\
	IQT, CVPR'22~\cite{Meli_2022_CVPR}&TE&point clouds & $\leq$1.5k$^\star$ points  & $\sim$15 &\checkmark &\\
	QuAnt, ICLR'23~\cite{seelbach2022quant}&TE, PSR, MA&point clouds, meshes & $\leq$2k$^\star$ points, 5$^\star$ mesh vertices & $\sim$15&&Fig. \ref{fig:qcvQuAnt}\\
	QuCOOP, CVPR'25~\cite{meli2025qucoop}&PSR, MA&point clouds, meshes & $\leq$50$^\star$ points, 502$^\star$ mesh vertices & $\sim$36&\checkmark&Fig. \ref{fig:qucoop}\\\hline
	QGM, 3DV'20~\cite{SeelbachBenkner2020}&GM&graphs & $\leq$4 graph nodes & $\sim$50&&\\
	Q-Match, ICCV'21~\cite{SeelbachBenkner2021}&MA&meshes & $\leq$502 mesh vertices & $\sim$250&\checkmark&\\
	CcuantuMM, CVPR'23\cite{bhatia2023ccuantumm}&MA&meshes & $\leq$1k mesh vertices & $\sim$40&\checkmark&Fig. \ref{fig:CCuantuMM_triplets} \\\hline
	QSync, CVPR'21~\cite{QuantumSync2021}&PS, GM&permutation matrices & $\leq$ 3x3 perm. mat., 8 views & $\sim$72 &&Fig. \ref{fig:qsync}\\\hline
	QSQS, 
	ECCV'20~\cite{LiGhosh2020}&OD&bounding boxes & $\leq$45 bounding boxes & $\sim$45&&Fig. \ref{fig:QUBO_suppression} \\
	QMOT, CVPR'22~\cite{zaech2022adiabatic}&OT&tracks, detections & $\leq$3 tracks, 5 frames & $\sim$100&&\\\hline
	Doan \textit{et al.}, CVPR'22~\cite{Doan2022}&RF&data points & $\leq$100 points & $\sim$100&\checkmark&Fig. \ref{fig:Doan_robust_fitting}\\
	DeQUMF, CVPR'23~\cite{Farina2023}&MMF&data points & $\leq$1k models, 250 points & $\sim$100&\checkmark&\\
	Pandey \textit{et al.}, CVPRW'25~\cite{pandey2025outlier}&RMMF& data points & $\leq$2k models, 10k points & $\sim$120 &\checkmark\\\hline
	Bauckhage \textit{et al.}, LWDA'18\cite{Bauckhage2018AdiabaticQC}&KC&data points & $\leq$16 points & $\sim$16&&\\
	Arthur and Date, QIP'21\cite{arthur2021balanced}&BKC&data points & $\leq$21 points, $k=3$ means & $\sim$64&&\\
	Nguyen \textit{et al.}, ArXiv'23\cite{nguyen2023quantum}&KC&feature vectors & - & -&&\\
	Zaech \textit{et al.}, CVPR'24\cite{zaech2024probabilistic}&BKC&data points & $\leq$45 points, $k=3$ means & $\sim$45&&\\\hline
	Choong \textit{et al.}, CVPR'23~\cite{choong2023quantum} &SISR&images, dictionary & $\leq$15x20 LR to 45x60 HR images & $\sim$100&&Fig. \ref{fig:QSISR} \\\hline
	QMSVM, IEEE'23~\cite{delilbasic2023single}&MCSVM&feature vectors & $\leq$60 vectors, 3 classes  & $\sim$360&&\\
	Zardini \textit{et al.}, ArXiv'24~\cite{zardini2024local}&MCSVM&feature vectors & $\leq$24 vectors, 3 classes  & $\sim$144&&\\
	Q-Seg, IEEE'24~\cite{venkatesh2023q}&UIM&image patches & $\leq$32x32 images & -&&\\
	QuMoSeg, ECCV'22~\cite{Arrigoni2022}&MS&landmark points & $\leq$16 landmarks, 2 motions & $\sim$128&&\\\hline
	Santos \textit{et al.}, MDPI'18~\cite{cruz2018qubo}&SM&stereo images & $\leq$15x15 image patches & -&&\\
	Heidari \textit{et al.}, IVCNZ'21~\cite{Heidari2021}&SM&stereo images & $\leq$383x434 image patches & -&&Fig. \ref{fig:Q_stereo}\\
	Braunstein \textit{et al.}, 3DV'24\cite{braunstein2024quantum}&SM&stereo images & - & -&\checkmark&\\
	\hline
\end{tabular} 
\label{tab:papers_aqc} 
\end{table*}

\parahead{Correspondence problems on point sets}
Correspondence estimation between point sets is a fundamental task in many computer vision applications, such as 3D reconstruction~\cite{takimoto20163d}, medical image processing~\cite{modersitzki2009fair} and robotics~\cite{calli2017yale}. 
The goal is to find a transformation $\mathcal{T}$ that minimizes the distance between points in a reference set $X = \{x_i\}_{i=1}^n$ and points in a template set $Y = \{y_i\}_{i=1}^m$. 

Golyanik and Theobalt~\cite{golyanik2020quantum} introduced an AQC-based method for rigid point set alignment, where the transformation is constrained to be a rotation. 
They approximated the rotation matrix as a binary-weighted sum of basis matrices and minimised, over the binary coefficients, the sum of squared distances (SSD) between the reference and transformed points---a formulation that naturally leads to a QUBO.
Building on this, Kuete Meli \textit{et al.}~\cite{Meli_2022_CVPR} proposed a strategy to reduce the number of binary coefficients required and iteratively refine the rotation matrix approximation. 
They opted for a first-order Taylor approximation to linearise the rotation matrix around current rotation parameters, applying quantum annealing to sample the next binarised coefficients that minimise the SSD error. 
By restricting the rotation parameter to a tight interval around the current estimate, they ensured the matrix remained orthogonal at each iteration.
Later, Kuete Meli \textit{et al.}~\cite{meli2025qucoop} generalised this method in the QuCOOP framework and extended the application to rigid point set alignment without correspondences known in advance.

\parahead{Graph and shape matching}
Graph matching involves finding node correspondences between two graphs that preserve structural similarity, meaning matched nodes exhibit similar connectivity patterns. 
At scale, it can express correspondence estimation between geometric shapes (\eg~3D human scans in different poses~\cite{Bogo:CVPR:2014}). 
Here, the goal is to establish pointwise matches between geometrically similar parts of two meshes $M$ and $N$, considering point vicinity.
The problem can be formulated as a quadratic assignment problem with the energy function $ E $ of the form \eqref{eq:constrained_to_unconstrained_0}. 

Seelbach Benkner \textit{et al.}~\cite{SeelbachBenkner2020} executed graph matching on an AQC device and applied it to the correspondence estimation. 
It was already conceived in 2008~\cite{neven2008imagerecognitionadiabaticquantum} that one can perform image matching with quantum annealers in a similar fashion. 
The approach in Ref.~\cite{SeelbachBenkner2020} used penalty terms to enforce permutation matrix constraints and constructed a QUBO in the way described in Section \ref{sec:ProblMapMethAnnealers}, Eqs.~\eqref{eq:constrained_to_unconstrained_1} and \eqref{eq:constrained_to_unconstrained_2}. 
The linear system of equations $Ax=b$ can enforce permutations if $A=  \begin{bmatrix}
I \otimes \mathds{1} ^\top   \\
\mathds{1}^\top  \otimes  I   
\end{bmatrix}$,
and the vector $\mathbf{b}$ contains only ones. The upper part of $A$ is for enforcing that the rows sum up to one, while the other part does the same for the columns.
In Ref~\cite{SeelbachBenkner2020},  upper bounds for $\lambda$ or $\lambda_i$ were developed.
In a third formulation, the authors eliminated some binary variables by solving one of the linear equations directly for those variables.
The performance of their formulation was inhibited by the experimental errors in the couplings, which effectively reduced the precision. The paper only achieved success probabilities better than random guessing for $N{=}3$. 
To improve this, penalty parameters could also be chosen empirically as in QSync~\cite{QuantumSync2021} or with an iterative method~\cite{zaech2022adiabatic}. 
Another way to avoid penalty terms is by considering some fusion moves comparable to Hutschenreiter et al.~\cite{hutschenreiter2021fusion} as was done in Q-Match~\cite{SeelbachBenkner2021}. 
Q-Match~\cite{SeelbachBenkner2021} is a method using a quantum annealer to match two shapes. 
In this formulation, the weight matrix $ Q \in \mathbb{R}^{n^2 \times n^2} $ can be populated with differences in geodesic distances between point pairs of $ M $ and $ N $~\cite{SeelbachBenkner2021}.
To avoid the linear penalty terms imposing permutation matrix constraints, Q-Match uses \textit{cyclic alpha expansion} whose main idea is to start from an initial valid permutation matrix and perform a series of iterative updates that guarantee that the current solution remains a permutation matrix in every iteration. 
Hence, instead of solving the regularised problem \eqref{eq:constrained_to_unconstrained_1} directly, Q-Match considers in each step 
\begin{equation} 
P^{(k+1)}= \argmin_{ \{ P\in \mathbb{P}_n |  \exists  \mathbf{\alpha} \in  \{ 0,1\}^m: \ P = \left(\prod_i c_i^{\alpha_i} \right) P^{(k)} \}  } E(P), \label{eq:IterationStep} 
\end{equation} 
where $\mathbb{P}_n$ is the set of permutation matrices and $c_i$ are from the given set of cycles.
The authors later expressed the optimisation variable $P(\alpha) = \left(\prod_i c_i^{\alpha_i} \right) P^{(k)}$ linearly in $\alpha$ as $p(\alpha) = P^{(k)} + \sum_{i=1}^m \alpha_i(c_i - I)P^{(k)}$ for disjoint cycles $c_i$, leading to a QUBO when plugged into the energy function $E$.
The method was tested on the FAUST~\cite{Bogo:CVPR:2014} dataset as well as on QUAPLIB~\cite{burkard97qaplib}.
\begin{figure}[t] 
\begin{center} 
	\includegraphics[width=.98\linewidth] {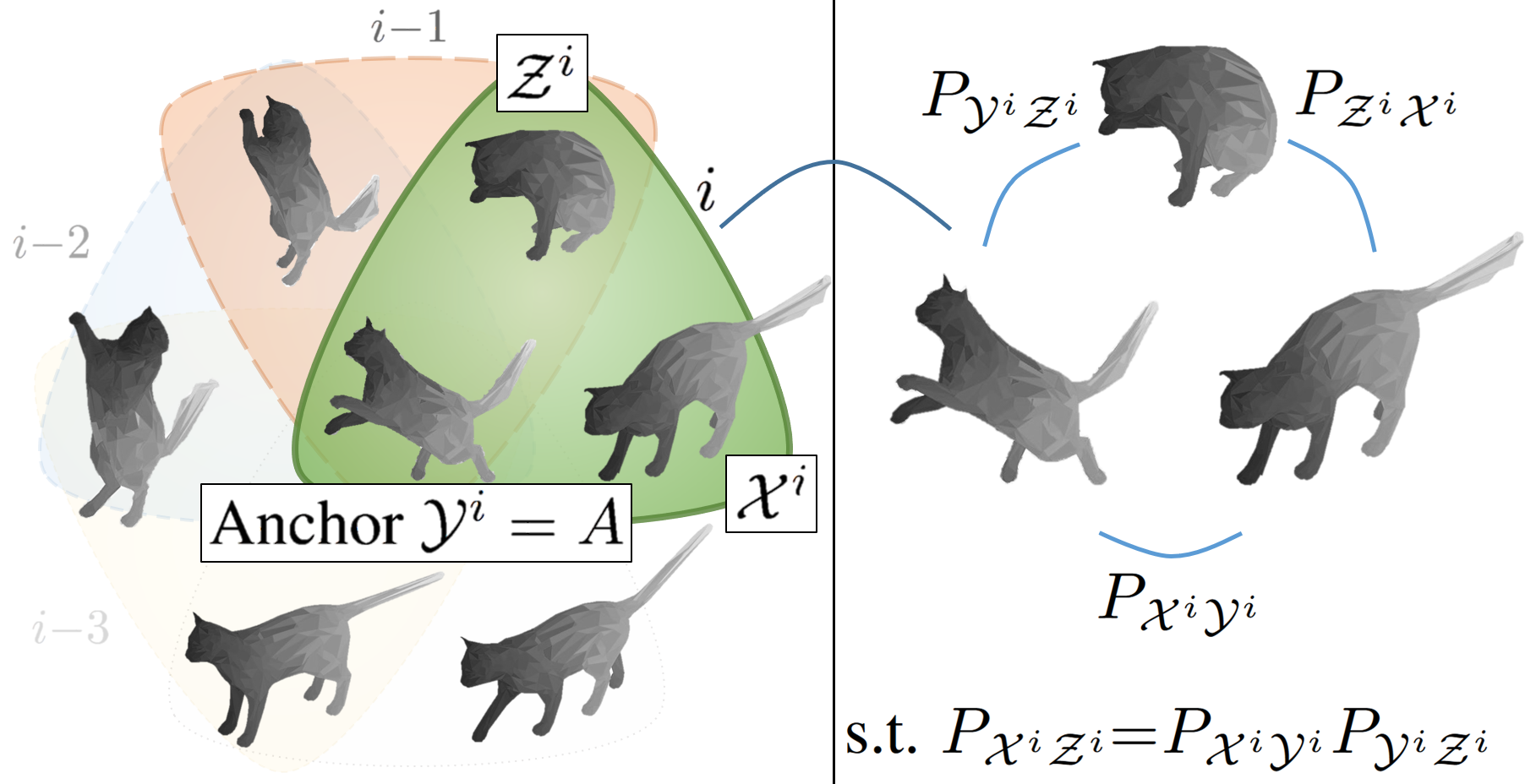} 
\end{center} 
\caption{CCuantuMM~\cite[IEEE \textcopyright 2025]{bhatia2023ccuantumm} for multiple-shapes matching.
	The method reduces the matching of N-shapes to that of shape triplets and guarantees cycle consistency.
} 
\label{fig:CCuantuMM_triplets} 
\end{figure} 

Bhatia \textit{et al.}~\cite{bhatia2023ccuantumm} extended Q-Match 
to match multiple shapes with guaranteed cycle consistency. 
In this method, optimization is always carried out with three shapes at a time; Fig.~\ref{fig:CCuantuMM_triplets} visualises this principle. 
Extending the Q-Match algorithm to handle three shapes in a naive manner leads to higher-order terms beyond quadratic, which cannot be directly mapped onto the D-Wave quantum annealer without further processing; see quadratisation techniques discussed in Sec.~\ref{sec:ProblMapMethAnnealers}.
In Ref.~\cite{bhatia2023ccuantumm}, however, the QUBO form is obtained by neglecting these higher-order terms. 
The evaluation confirms empirically that this is feasible. 
The experiments were performed on the shape datasets FAUST~\cite{Bogo:CVPR:2014}, TOSCA~\cite{bronstein2008numerical} and SMAL~\cite{Zuffi:CVPR:2017}. 
One key advantage of this method is that it scales linearly with the number of shapes. In this way, it was possible to match up to 100 shapes for FAUST.
The generalised QuCOOP framework~\cite{meli2025qucoop} was also applied to shape matching and showed to be competitive with Q-Match~\cite{SeelbachBenkner2021} on the FAUST dataset. 

\parahead{Multiple object tracking}
Object tracking is a data association problem aiming to assign measurements to targets or clutter (classified as false alarms or noisy measurements). 
Several recent works address this combinatorial data association problem and, in particular, multiple target tracking (MTT)~\cite{McCormick2022,mccormick2022multiple} with the help of QUBO formulations~\cite{McCormick2022} or Bayesian diabatic quantum annealing (DQA)~\cite{mccormick2022multiple}. 
Differently from tracking in computer vision, these works operate on sets of sensor measurements (not necessarily images) in a general context. 
The Quantum-Soft QUBO suppression is an early approach by Li and Ghosh~\cite{LiGhosh2020} for eliminating false positives in multi-object detection inspired by an earlier pedestrian detection approach~\cite{RujikietgumjornCollins2013}; see Fig.~\ref{fig:QUBO_suppression}. 
The authors demonstrate the advantages of their QUBO formulation compared to greedy and tabu search. 
Zaech \etal~\cite{zaech2022adiabatic} introduced the first AQC algorithm for multi-object tracking (MOT) based on assignment matrices. 
In MOT, object detections need to be associated through time when following the tracking-by-detection paradigm, often resulting in NP-hard discrete optimisation problems. 
Their MOT algorithm can be summarised as follows:
The input to the MOT method is a set of individual object detections in each video frame and the pairwise appearance similarities between these detections (obtained by an MLP). 
The goal of the method is to assign each detection to a track so that the total similarity score per track is maximised. 
The assignment problem is then formulated as a quadratic optimisation objective over assignment matrices mapping detections to tracks. 
These matrices introduce costs if two detections belong to a common track. 
To formulate a QUBO, the authors~\cite{zaech2022adiabatic} propose a policy for finding optimal rectification weights for the constrained assignment matrices. 
\begin{figure}[t] 
\begin{center} 
	\includegraphics[width=\linewidth] {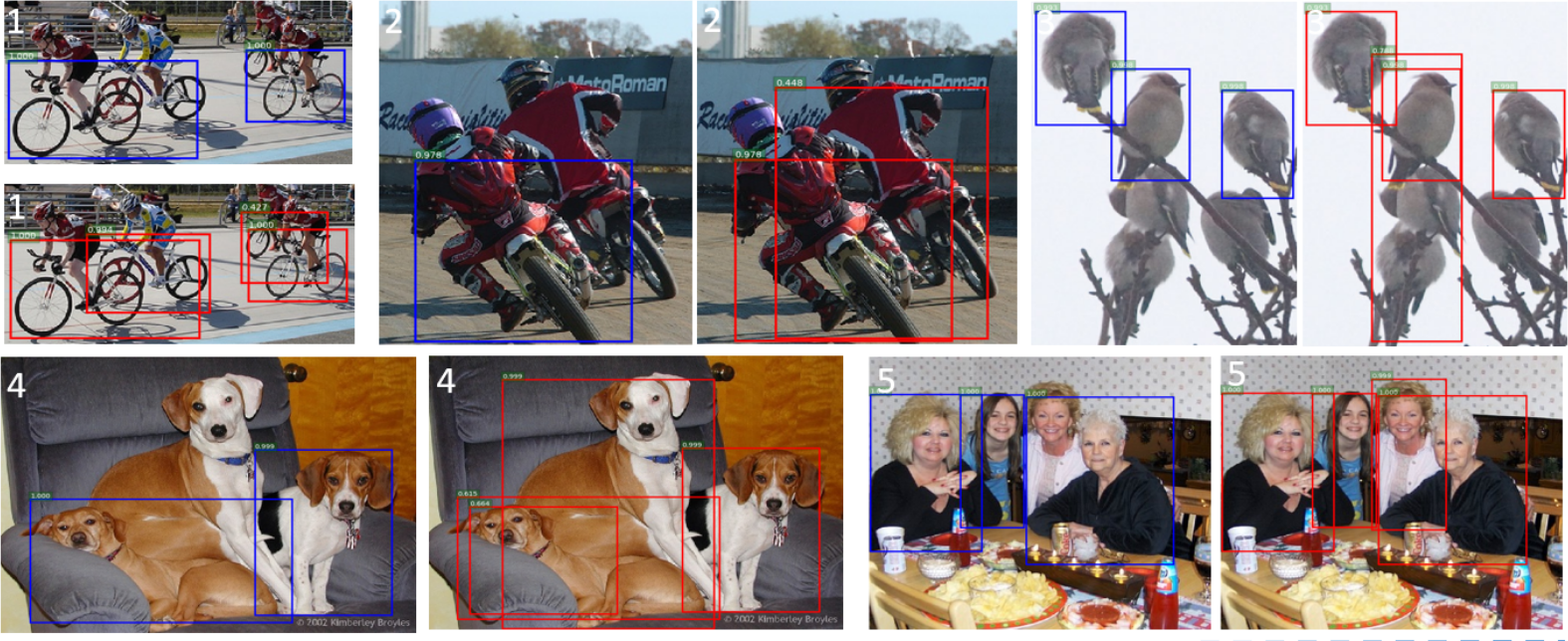} 
\end{center} 
\caption{Qualitative results of QSQS~\cite[Springer Nature \textcopyright 2025]{LiGhosh2020}, (red boxes) in comparison to a standard non-maximum suppression method (blue boxes). QSQS performs better on these images.\vspace{-3mm}} 
\label{fig:QUBO_suppression} 
\end{figure}

\parahead{Model fitting} 
Model fitting refers to the task of finding model parameters that best fit the data~\cite{meer1991robust}.
Methods for model fitting can be classified into single- and multi-model fitting approaches. 

The application of single-model fitting on quantum computers was investigated first for the gate-based paradigm~\cite{Chin_2020_ACCV}; see more details in Sec.~\ref{ssec:MethodsGateBased}.
Doan \textit{et al.}~\cite{Doan2022} proposed a hybrid-classical approach for robust fitting using its minimum vertex cover formulation, \ie~the dual of the consensus maximisation formulation: 
A binary vector $z$ is optimised to select the minimum number of points such that there exists a model for which the remaining points are all inliers. 
Other binary vectors $b_{(k)}$ define infeasible bases, each being a binary vector marking a small group of points that fail to support any model. 
The goal is then to solve $\min_z \|z\|_1$ so that $b_{(k)}^\top z \geq 1, \ \forall k$. 
The authors obtain a QUBO by introducing a binary slack variable to convert inequalities to equality constraints, which are then plugged into the objective with penalisation parameters; see the discussion in Sec.~\ref{sec:ProblMapMethAnnealers} on constraint handling. 
One of the core advantages of the method is that it either terminates with a globally optimal solution or a sub-optimal one along with a known error bound. 
Approaches such as that by Doan \textit{et al.} can be used for 1D linear regression, fundamental matrix estimation and multi-view triangulation (see Fig.~\ref{fig:Doan_robust_fitting}). 
Farina \textit{et al.}~\cite{Farina2023} subsequently introduced QuMF, a quantum-enhanced approach supporting multiple models. 
Their method is based on the disjoint set cover and a preference-consensus matrix, formulated as an $L_1$-minimisation over a binary vector subject to equality constraints, by assuming the number of target models to be known and the data not to be outlier-contaminated. 
Their QUBO is obtained by extending the objective with a penalty term enforcing the constraint. 
Importantly, the authors introduce a decomposition policy that allows splitting the original problem into parts of smaller sizes compatible with quantum annealing architectures.
This allows solving problems of substantial and practical sizes. 
Next, Pandey \textit{et al.}~\cite{pandey2025outlier} make several important improvements to QuMF, making multi-model fitting more practical, as their method does not assume the number of models to be known in advance and is explicitly designed to cope with outliers in the input data (which is often observed in practice). 
Outlier robustness is achieved thanks to the incorporation of explicit labels for outliers into the QuMF formulation, at the cost of a moderate increase in the number of qubits required for the Ising encoding. 

\begin{figure}[t]
\begin{center}
	\includegraphics[width=\linewidth]{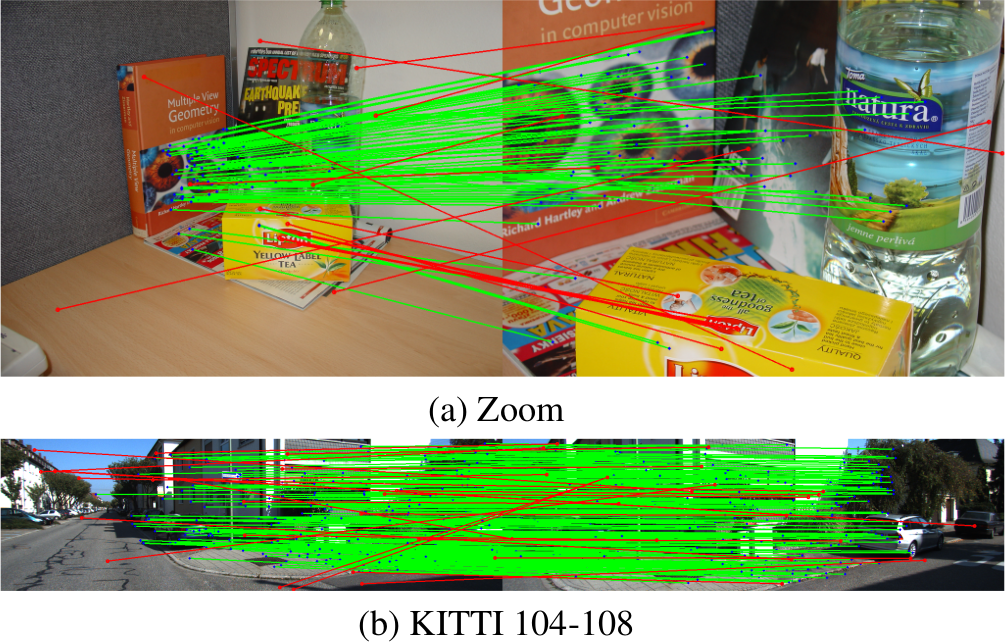}
\end{center}
\caption{Results of Doan \textit{et al.}'s approach~\cite[IEEE \textcopyright 2025]{Doan2022} for the linearised fundamental matrix fitting.
Green and red lines showcase inliers and outliers, respectively.} 
\label{fig:Doan_robust_fitting}
\end{figure}

\parahead{Clustering} 
The goal of clustering is to partition data into many classes in a way to optimises that data points within a class are close to each other, while data points from different classes should be far away~\cite{xu2005survey}.

The first works showing that clustering problems can be solved with adiabatic quantum computing appeared already in 2017 in different contexts~\cite{Bauckhage2018AdiabaticQC,ushijima2017graph}. 
In~\cite{Bauckhage2018AdiabaticQC}, $k$-means clustering with $k = 2$ was reformulated as a QUBO problem, and quantum annealing was simulated by numerically solving the Schrödinger equation.
On the other hand, Ushijima \etal~\cite{ushijima2017graph} already performed graph partitioning experiments on a quantum annealer. 

More recently, Arthur and Date~\cite{arthur2021balanced} developed a balanced $k$-means clustering formulation that allows for solving the problem exactly and not approximately, in contrast to other works.
The QUBO is obtained by minimising pairwise distances between points assigned to the same cluster:
Each point-to-cluster assignment is encoded in a binary matrix $W$, where column $w_j$ indicates membership in cluster $j$; a cost matrix $D$ is populated as $d_{mn} = \|x_m-x_n\|^2$ for input data points $(x_n)_{n=1,2,\ldots}$. 
The objective minimises $w_j^\top D w_j$ over all columns of $W$ and is subject to the constraints that each cluster must contain approximately the same number of points and that each point must belong to exactly one cluster.
Even though the proposed method does not outperform classical solutions, it demonstrates promising characteristics that could become useful as the quantum hardware continues to scale up in the future. 
Nguyen \textit{et al.}~\cite{nguyen2023quantum} even presents a formulation where the number of clusters does not have to be fixed in advance. 
Last but not least, Zaech~\etal~\cite{zaech2024probabilistic} presented a quantum version of \emph{balanced k-means} where the output samples are \emph{post-calibrated} to match the posterior distribution of the original problem. 
This allows for using the whole histogram of samples to improve the result or provide uncertainty estimates for the obtained solution. 
\parahead{Synchronisation problems} 
Many problems in multi-view computer vision involve integrating individual, pairwise estimates into consistent global estimates. The process of converting relative information into absolute information is known as \emph{synchronisation}, which is typically formulated as an optimisation problem~\cite{arrigoni2020synchronization}. 
The goal is to distribute discrepancies across a graph so that the estimates at each node remain consistent with one another, ensuring that the graph becomes \emph{cycle-consistent}.

Birdal \etal~\cite{QuantumSync2021} introduced a quantum-based approach called \emph{QSync} (Quantum Synchronisation) to tackle synchronisation problems involving permutations, illustrated in Fig.~\ref{fig:qsync}. 
They applied this method to a collection of point sets, each containing a fixed number of points, with a one-to-one correspondence between points in every pair of sets.
Given noisy relative permutations between these point sets, the goal is to solve a multi-graph matching problem by minimising a cycle-consistency loss. 
This is a constrained quadratic optimisation problem over permutation matrices, which is turned into a QUBO by incorporating the constraints in the objective via penalty parameters. 
QFWAL~\cite{yurtsever2022q} further improved upon this work by factoring in the permutation-ness as a hard constraint, as described in Sec.~\ref{sec:ProblMapMethAnnealers}. 
The authors identify several further applications that could benefit from the method, including structure-from-motion~\cite{birdalSimsekli2018, govindu2001combining, chatterjee2018robust}, point cloud registration~\cite{huang2022multiway, gojcic2020learning}, shape matching~\cite{gao2021isometric} and motion segmentation~\cite{arrigoni2022multi, huang2021multibodysync}, among others. 

\parahead{Image super-resolution} 
Image super-resolution consists in reconstructing a high-resolution image from one or more low-resolution inputs~\cite{lepcha2023image}. 
The goal is to enhance visual quality, often by recovering fine textures, edges and structures that are lost due to downsampling or degradation.

\begin{figure}[t]
\centering
\includegraphics[scale=0.5]{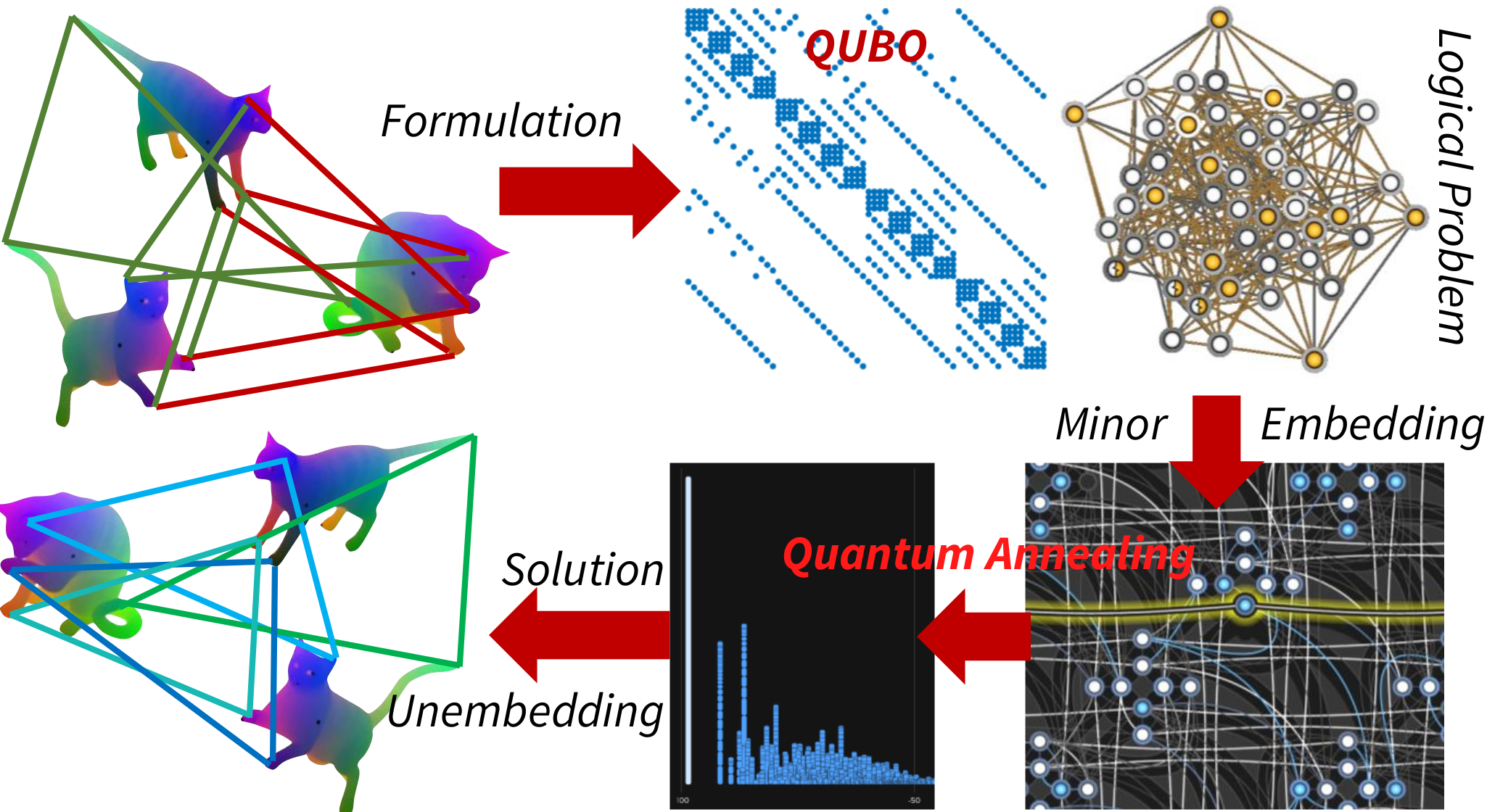}
\caption{The diagram of QSync~\cite[IEEE \textcopyright 2025]{QuantumSync2021}: A quantum permutation synchroniser operating in six QeCV steps: QUBO preparation, minor embedding, sampling, un-embedding, bitstring selection and interpretation.} 
\label{fig:qsync}
\end{figure}

Choong \textit{et al.}~\cite{choong2023quantum} proposed a quantum annealing-based algorithm for single-image super-resolution (SISR); see Fig.~\ref{fig:QSISR}.
Their approach relies on sparse representation learning and coding. 
In a prior step, two dictionaries $ D_l \in \mathbb{R}^{m \times d} $ and $ D_h \in \mathbb{R}^{n \times d} $, with $ m {<} n $, are learned from a dataset composed of image patches.
These dictionaries are then used to reconstruct low- and high-resolution image patches in a supervised manner by taking sparse linear combinations of the columns of $ D_l $ and $ D_h $, using a shared coefficient vector $ \alpha {\in} \mathbb{R}^d $. 
Given a new low-resolution image $y$, the goal is to solve for $ \alpha $ using $ D_l $ and then reconstruct the high-resolution version $x$ using $ D_h $ and the optimised $ \alpha $: 
$x = D_h\alpha^\star$ for $\alpha^\star = \argmin_\alpha \|D_l\alpha - y\|_2^2 + \lambda \|\alpha\|_1$, where the regularisation term enforces sparsity.  
Further, restricting $ \alpha $ to be a binary vector scaled by a tunable parameter enables the authors to derive the QUBO formulation, which can be solved using a quantum annealer~\cite{choong2023quantum}. 
\begin{figure}[t]
	\begin{center}
		\includegraphics[width=\linewidth]{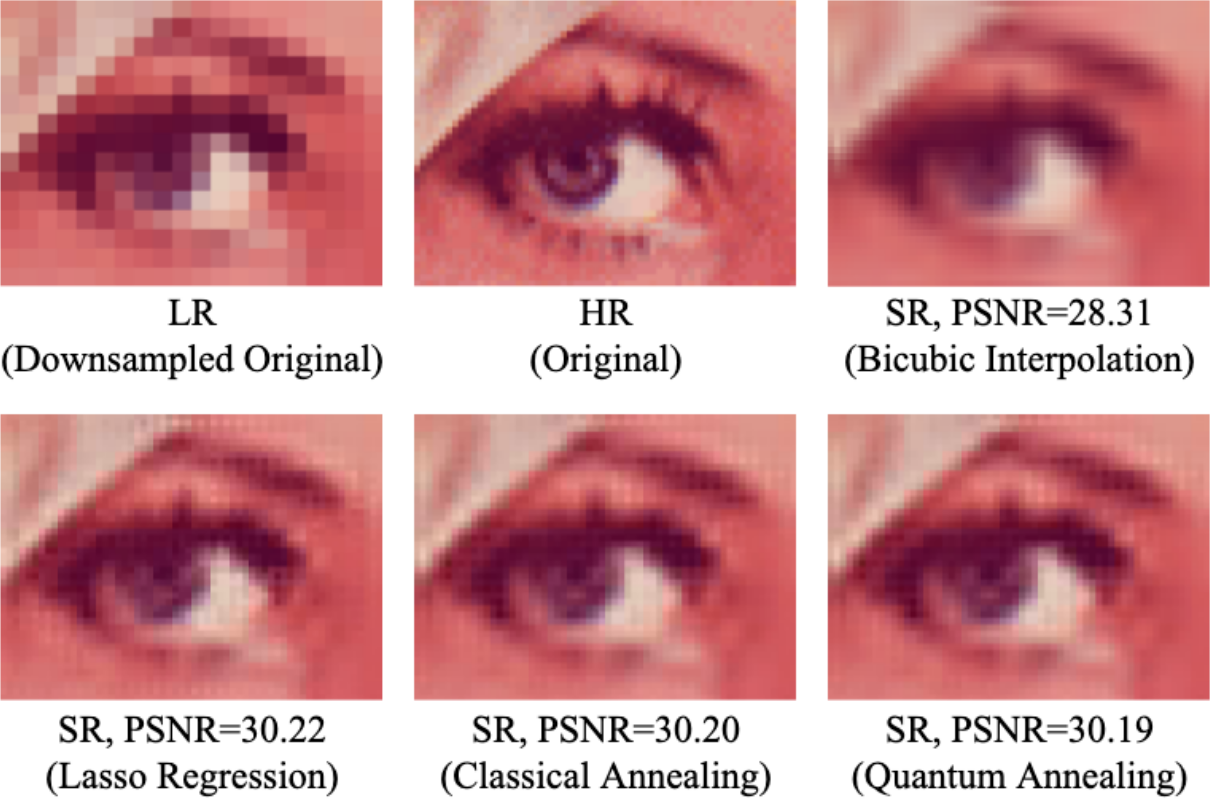}
	\end{center}
	\caption{Quantum results of single-image super-resolution in comparison to classical baselines (bi-cubic interpolation, lasso regression and simulated annealing)~\cite[IEEE \textcopyright 2025]{choong2023quantum}.} 
	\label{fig:QSISR} 
\end{figure}

\parahead{Segmentation and grouping} 
Segmentation refers to the task of classifying entities in an input signal, such as pixels in an image or points in a point cloud, into semantically meaningful categories or groups~\cite{wang2022comprehensive}. 

A method for remote sensing image segmentation was proposed by Delilbasic~\cite{delilbasic2023single}, that reformulates the classical Crammer-Singer multiclass support vector machine (SVM) ~\cite{crammer2001algorithmic} into a QUBO, enabling optimisation on a quantum annealer.
Building on this, Zardini \textit{et al.}~\cite{zardini2024local} introduced locality techniques~\cite{blanzieri2006adaptive,segata2010fast} to iteratively solve subproblems, mitigating scalability limitations. 
Beyond SVM-based approaches, Venkatesh \textit{et al.}~\cite{venkatesh2023q} proposed an unsupervised image segmentation method leveraging quantum annealers. Their method represents an image as a fully connected grid, where edge weights encode pixel similarity.
Naturally, the segmentation problem can be formulated as finding the cut of the grid such that the aggregated weights of the edges removed are minimised (\ie~min-cut problem), which can be run on annealers according to a QUBO reformulation.
Beyond static scene segmentation, Arrigoni \textit{et al.}~\cite{Arrigoni2022} introduced \emph{QuMoSeg}, a QUBO-based approach for \emph{motion segmentation}, which classifies points across multiple images into independent motion groups. Unlike traditional segmentation methods, motion segmentation relies on identifying and grouping coherent motion patterns rather than spatial structures.
By leveraging compact partial segmentation metrics, \emph{QuMoSeg} enables the processing of problems with up to 120 points on real quantum annealers, demonstrating its scalability within current hardware constraints. 

\parahead{Stereo matching}
Stereo matching is a technique for extracting depth information from visual observation by comparing two images taken from slightly different viewpoints, typically known spatial camera locations in a stereo pair~\cite{zhou2020review}. 
The goal is to compute disparities that indicate relative object distances with respect to the camera baseline and focal length by finding corresponding point pairs in the two images (constrained by epipolar geometry). 
Such disparity maps are widely used in applications like 3D reconstruction and autonomous navigation.

The first formulation of stereo matching as a QUBO problem was presented in Ref.~\cite{cruz2018qubo}, where the corresponding min-flow/max-cut formulation is encoded as a degree-three pseudo-Boolean problem. 
This is subsequently reduced to a quadratic form solvable on a quantum annealer using a quadratization method by Ishikawa~\cite{ishikawa2010transformation}, which introduces additional ancilla variables; see discussion also in Sec.~\ref{sec:ProblMapMethAnnealers}. 
Heidari \textit{et al.}~\cite{Heidari2021} refined this approach by reducing the qubit requirements: Their model uses linear regularisers and allows for obtaining optimal solutions in polynomial time. 
Although the formulation is efficiently solvable classically, their work suggests that alternative smoothness assumption functions can lead to NP-hard extensions suitable for quantum optimisation. 
Their method successfully computes disparity maps for stereo pairs from the Middlebury dataset~\cite{Scharstein2001}, with image resolutions of \(383{\times}434\) pixels and up to 39 disparity levels; see Fig.~\ref{fig:Q_stereo} for qualitative results. 

\begin{figure}[t]
\begin{center}
	\includegraphics[width=.98\linewidth]{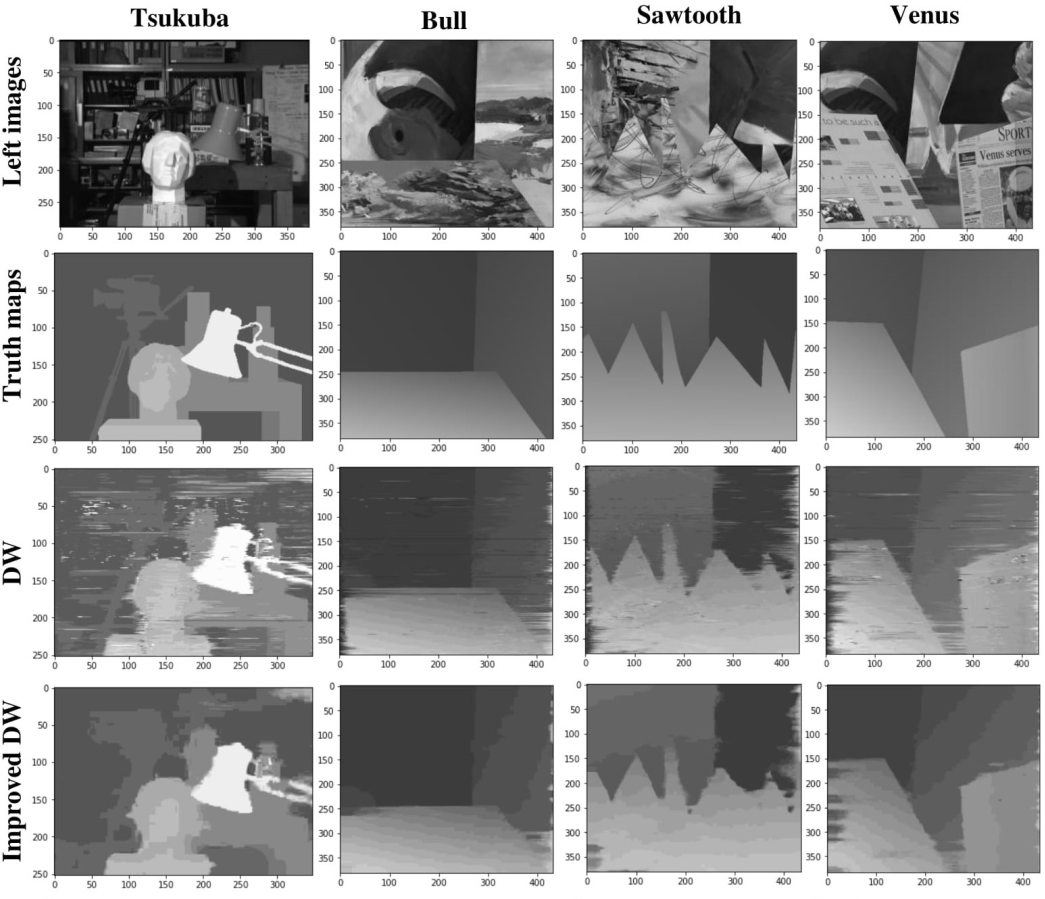}
\end{center}
\caption{
	Stereo matching by Heidari \textit{et al.}~\cite[IEEE \textcopyright 2025]{Heidari2021} on D-Wave quantum annealer (``DW'' and ``Improved DW''). 
	Colour scheme: The brighter, the higher the disparity at that pixel. 
}
\label{fig:Q_stereo}
\end{figure}

Several follow-up publications have explored NP-hard extensions of the stereo matching~\cite{heidari2022equivalent,heidari2023hybrid,heidari2024quantum,braunstein2024quantum}. 
Heidari \textit{et al.}~\cite{heidari2022equivalent} investigated the minimum multi-way cut problem as a model for stereo matching, contributing a new combinatorial perspective to the formulation. 
Later, Ref.~\cite{heidari2023hybrid} introduced a hybrid quantum-classical method that enabled the solution of significantly larger instances on the D-Wave quantum annealer (see also Ref.~\cite{heidari2024quantumThesis} for extended discussion).  
Braunstein \textit{et al.}~\cite{braunstein2024quantum} proposed a formulation based on Markov random fields and conducted a comprehensive comparison with the hybrid approach of Heidari \textit{et al.}~\cite{heidari2023hybrid}, highlighting strengths and limitations across various problem settings. 

\parahead{AQC with and for neural networks} 
An emerging research direction explores the integration of NNs with AQC to enhance learning and optimisation. 

One notable approach is \textit{QuAnt} by Seelbach Benkner \textit{et al.}~\cite{seelbach2022quant}, which leverages a neural network to predict QUBO coupling matrices, thereby eliminating the need for handcrafted QUBO formulations for different problem instances. 
Instead, a trained network infers the required coupling matrices for various problem classes. 
In the training phase, the QuAnt network learns to generate a QUBO matrix \( A \) from a given input \( P \), representing the problem instance; supervision is provided by comparing the solution \( x^* \), obtained from minimising the QUBO, with the ground-truth binary solution. 
A contrastive loss function guides the network to produce QUBO matrices that yield correct or near-optimal solutions under quantum annealing. 
QuAnt allows for a more flexible and automated approach to quantum optimization.
The effectiveness of this method has been demonstrated in tasks such as graph matching and point set registration, as illustrated in Fig.~\ref{fig:qcvQuAnt}. 

Another promising application of AQC lies in training binary neural networks~(BNNs), where weights are constrained to binary values.
Training such networks inherently constitutes a combinatorial optimisation problem, which is inefficient to solve classically. 
Sasdelli \textit{et al.}~\cite{sasdelli2021quantum} propose encoding BNN training as a QUBO problem and compare different solvers, including simulated annealing, quantum annealing and classical branch-and-bound methods. 
Recently, Krahn \textit{et al.}~\cite{Krahn2024} developed a novel AQC-powered optimiser to efficiently train general BNNs as well as binary graph networks in a layer-wise manner. 
Beyond binary networks, AQC has also been explored for training small-scale neural networks with real-valued weights. 
Yang \textit{et al.}~\cite{Fengyi2023arXiv} presented an approach to train a multilayer perceptron~(MLP) with a single hidden layer and sigmoid activation using quantum annealers, leveraging the equivalence between MLPs and energy-based models, such as variants of restricted Boltzmann machines with maximal likelihood objective. 
Similarly, Abel \textit{et al.}~\cite{Abel2022} proposed a quantum annealing-based framework for NN training, encoding network parameters as binary strings, approximating activation functions using polynomials and reducing higher-order terms to quadratic forms. 
Their method ensures global optimisation and achieves convergence in a single annealing step, significantly reducing training time compared to classical iterative approaches.
While we are not aware of any methods for training general neural architectures on AQCs, this remains an exciting direction for future research. 

\begin{figure}[t]
\begin{center}
\includegraphics[width=\linewidth]{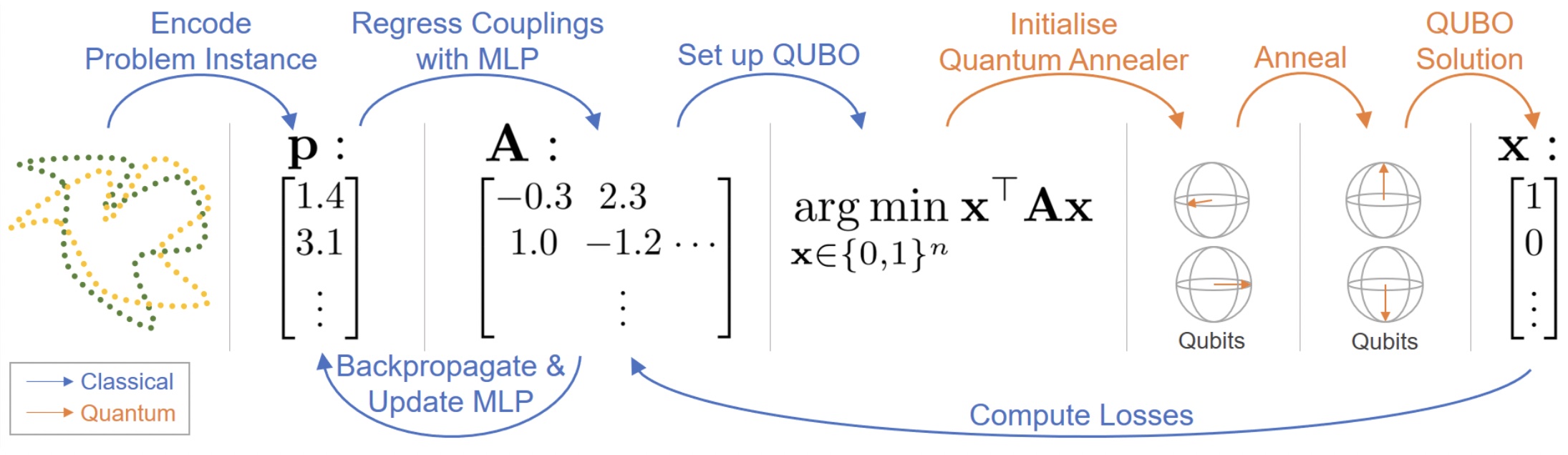}
\end{center}
\caption{QuAnt for learning QUBO coupling matrices~\cite[courtesy of the authors]{seelbach2022quant}. 
A network is trained to predict coupling matrices for given ground-truth solution vectors on sample problems.
} 
\label{fig:qcvQuAnt}
\end{figure}

\subsection{Methods Relying on Gate-based QC}\label{ssec:MethodsGateBased} 
\begin{table*}
\centering
\caption{Overview of gate-based methods for CV problems.
We report problem types, sizes and number of qubits tested if explicitly mentioned in the papers. 
Only the methods~\cite{Chin_2020_ACCV,huang2021experimental,silver2023mosaiq,kolle2024quantum}
were evaluated on real quantum devices.\\
Keys: 
\begin{tabular}[t]{lllll}
	``BIC'': & binary image classification & ``PCAE'': & point clouds auto-encoder\\
	``MCIC'': & multi-class image classification & ``IG'': & image generation \\
	``MCPCC'': & multi-class point cloud classification &``DDM'': & denoising diffusion model \\
	``RF'': & robust fitting  &``BA'': & bundle adjustment \\
	``prep.'': & classical preprocessing & ``NF/INR'': & neural fields/implicit neural representation. & 
\end{tabular}
}
\begin{tabular}{ccccccc}\hline
Method & Problem         & Input type/Datasets & Problem size& \# Qubits & prep. & Fig. \\\hline\hline
Hur \textit{et al.} QMI'22~\cite{hur2022quantum}&BIC& (Fashion) MNIST~\cite{FashionMNIST2017} & $\leq$ length-$32$ feature vectors& 8&\checkmark &\\
QDCNN, IOP'20~\cite{li2020quantum_deep}&MCIC& MNIST, GTSRB~\cite{Houben2013} &$\leq$ 32x32 images & - &&\\
sQCNN-3D, Elsevier'23~\cite{baek20223d}&MCPCC& ModelNet, ShapeNet~\cite{Wu2015}& $\leq$ 32x32x32 voxel grids& 4 &\checkmark&Fig. \ref{fig:sqcnn}\\
HQNN-Parallel, IOP'24~\cite{senokosov2024quantum}&MCIC& (Medical) MNIST, CIFAR~\cite{Krizhevsky2012}& $\leq$ \{64, 28, 32\}$^2$ images& 5 &\checkmark&Fig. \ref{fig:hqnn}\\
ATP, CVPR'25~\cite{afane2025atp}&MCIC& (Fashion) MNIST~\cite{FashionMNIST2017}, CIFAR~\cite{Krizhevsky2012}& $\leq$ \{64, 28, 32\}$^2$ images& - &\checkmark& \\\hline
Chin \textit{et al.}, ACCV'20~\cite{Chin_2020_ACCV}&RF& data points & - & - & &\\
Yang \textit{et al.}, ECCV'24~\cite{yang2024robust}&RF& data points & $\leq$4 points& 19&&\\\hline
3D-QAE, BMVC'23.~\cite{Rathi2023}&PCAE& point clouds & $\leq$16 points &6 &&Fig. \ref{fig:3DQAE}\\\hline
MosaiQ, ICCV'23.~\cite{silver2023mosaiq}&IG& (Fashion) MNIST& $\leq$ length-$10$ noise vectors & 5 &&\\
Huang \textit{et al.}, APS'23~\cite{huang2021experimental} &IG& handwritten digits & $\leq$length-$32$ latent vectors & 6&&\\
Kolle \textit{et al.}, ArXiv'24~\cite{kolle2024quantum}&DDF& (Fashion) MNIST~\cite{FashionMNIST2017}, CIFAR~\cite{Krizhevsky2012}& $\leq$ 32x32 images & 7 &&\\\hline
Piatkowski, ArXiv'22~\cite{piatkowski2022towards}&BA& sets of images & $\leq$32x32 imape patches & - &&\\
\hline
QIREN, ICML'24\cite{zhao2024quantum}&NF/INR&Coordinates & $\leq$64x64 images &  6 &&\\
QVF, ArXiv'25~\cite{Wang2025QVFs}&NF/INR& Coordinates & $\leq$ images, 3D shapes&6& &Fig. \ref{fig:chairs}\\
\hline
\end{tabular} 
\label{tab:papers_gate_based} 
\end{table*}

We next describe QeCV methods using the gate-based quantum computational paradigm, with a high-level overview provided in Tab.~\ref{tab:papers_gate_based}. 
Note that many of the works discussed next do not provide experiments on real QCs (but often using simulators thereof), due to their unavailability or assumptions of more powerful QCs that do not exist as of 2025. 
However, these studies provide strong empirical evidence for the feasibility of gate-based QeCV, making it valuable to first investigate these methods on a simulator before transitioning to real quantum devices. 

\parahead{Image segmentation through quantum kernels} 
Kernel methods are widely used in machine learning to capture the complex patterns in data by constructing a Gram matrix $K(x_i, x_j)$ in a pre-defined feature space, which encodes pairwise relationships and facilitates generalisation to unseen samples.
Similarly, such Gram matrix $K(x_i, x_j)$ can, when executed on gate-based quantum computers, be defined using quantum state overlaps, where each data point $x_i$ is encoded in a quantum state $\ket{\psi(x_i)}$ and similarity is measured through $ |\langle \psi(x_i) | \psi(x_j) \rangle|^2 $. 
Practically, quantum kernel methods, in the computer vision domain, have been experimented in areas such as image classification and segmentation tasks. 
Miroszewsk \textit{et al.}~\cite{miroszewski2023detecting}, under the framework of support vector machine~(SVM), propose to replace classical kernels with quantum kernels, which can manipulate data in an exponentially-large feature space, w.r.t.~number of qubits, to segment cloudy areas for satellite images.

\parahead{Classification} 
Image or 3D classification is a classical task in visual recognition, aiming to assign predefined labels to entire objects, such as images~\cite{chen2021review} or point clouds~\cite{zhang2023deep}. 
It relies on comparing distinguishing object features, which are typically learned by neural networks.

Cong \textit{et al.}~\cite{cong2019quantum}, inspired by the design of classical convolution networks, proposed a translation-equivariant circuit-based quantum alternative named \emph{Quantum Convolution Neural Network}~(QCNN),  which can be used for extracting high-level features. 
The QCNN architecture is visualised in Fig.~\ref{fig:qcnn}. 
The quantum architecture replaces classical convolutional layers with quantum convolutional ones, \ie~quantum transformations that express shared unitary transformations across different regions of the circuits to mimic translational equivariance in extracted quantum features. 
For the dimensionality reduction---akin to classical pooling---quantum information is compressed by discarding ancilla qubits or applying local measurements, introducing non-linear effects. 

\begin{figure}[t]
\centering
\includegraphics[width=\linewidth, trim={0cm 0cm 0cm 0cm}, clip]{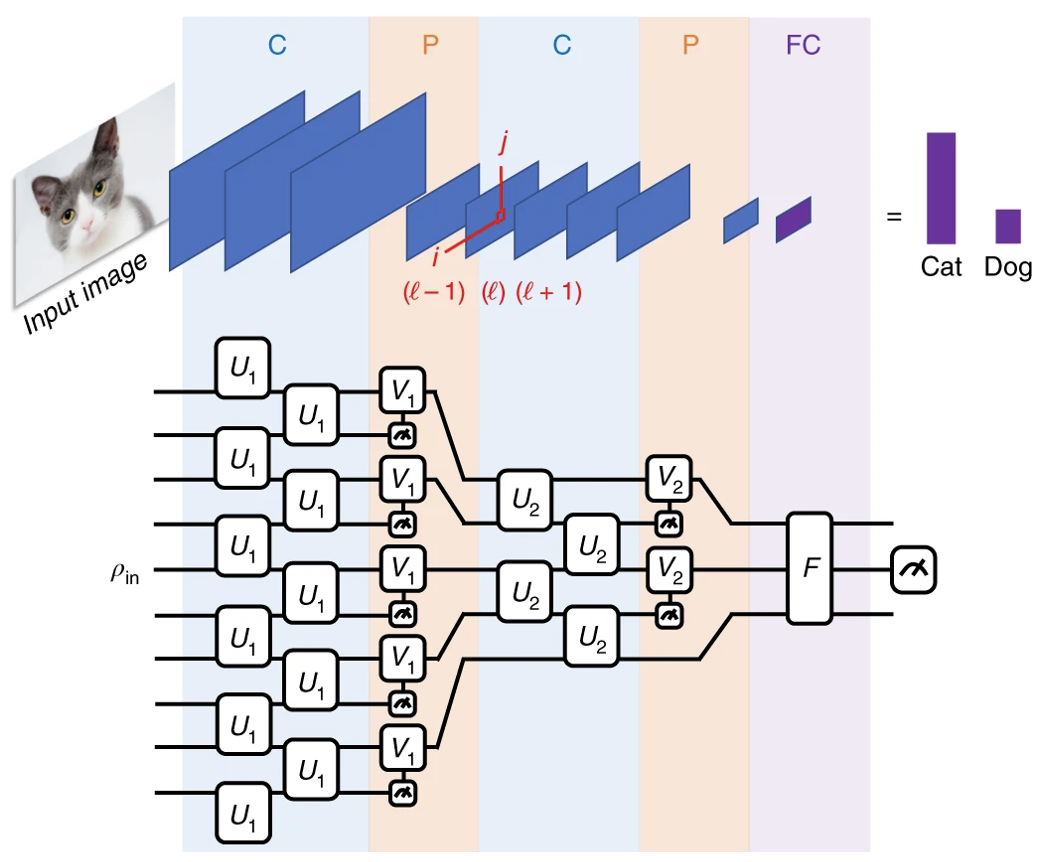}
\caption{A quantum convolutional neural network
~\cite[Springer Nature \textcopyright 2025]{cong2019quantum}.
The circuit uses shared-parameter gates for feature extraction and intermediate measurements for pooling purposes.}
\label{fig:qcnn}
\end{figure}

Hur \textit{et al.}~\cite{hur2022quantum} applied QCNN to MNIST and Fashion MNIST image classification, exploring different data encoding schemes and PQC architectures for convolutional layers. 
They demonstrated that QCNNs can outperform classical models with a comparable number of parameters. 
Li \textit{et al.}~\cite{li2020quantum_deep} proposed a deep QCNN (coined QDCNN) for image recognition, applied to recognise real-world traffic signs on the GTSRB dataset. 
They replaced the convolutional layers of Cong \textit{et al.}~\cite{cong2019quantum} and instead used QRAM to prepare a convolution kernel and perform convolution operations with quantum multipliers. 
A phase estimation subroutine converts amplitude-encoded results to basis-encoded results, allowing to apply, as discussed in basis-encoding in Sec.~\ref{sec:ProblMapMethGateBased}, arbitrary functions to the features. 
Non-linear activation functions, in particular, are approximated using Taylor series expansions with efficient quantum arithmetic.
Dimensionality reduction is achieved by increasing the convolutional stride, omitting traditional pooling layers.

\begin{figure}[t] 
\centering 
\includegraphics[width=\linewidth, trim={0cm 0cm 0cm 0cm}, clip]{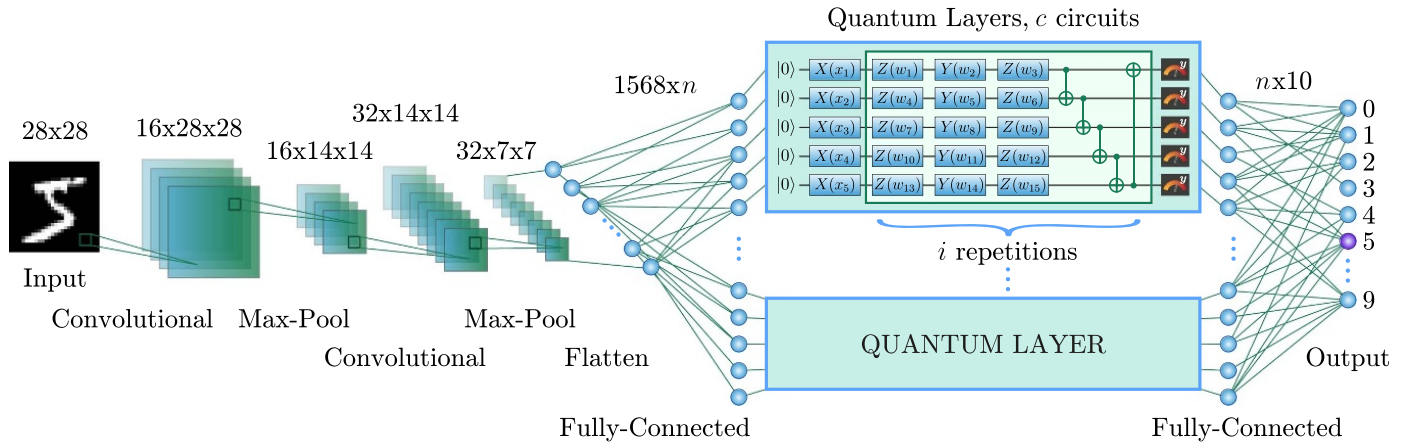} 
\caption{HQNN-Parallel approach for image classification~\cite[reproduced under the CC BY 4.0 license]{senokosov2024quantum}. 
Several PQCs with distinct parameters are added to a classical network to process feature patches in parallel.\vspace{-3mm}} 
\label{fig:hqnn} 
\end{figure} 
Senokosov \textit{et al.}~\cite{senokosov2024quantum} proposed a hybrid quantum-classical model for image classification. 
They integrated quantum circuits into a classical NN to process intermediate feature vectors and further extract features. 
To handle the high dimensionality of the input features on small-scale quantum devices, they used multiple PQCs, each processing only a patch of the input. 
They experimented with MNIST, Medical MNIST and CIFAR datasets. 
Their model is visualised in Fig.~\ref{fig:hqnn}. 
Afane et al.~\cite{afane2025atp} proposed an optimised quantum state preparation technique for image classification on gate-based computers. 
The introduced Adaptive Threshold Pruning (ATP) method optimises a learnable threshold to zero out low-intensity image pixels, thereby reducing the depth of the state preparation subcircuit. 
By lowering the entanglement entropy of the PQCs, this bi-level optimisation improves classification accuracy while enhancing robustness by pruning irrelevant features. 
Crucially, ATP focuses on a data-driven quantum state preparation, allowing it to be modularly integrated with any downstream PQC architecture.

As an extension of quantum-classical classification to 3D shapes, Baek \textit{et al.}~\cite{baek20223d} introduced scalable QCNNs for classifying 3D point clouds; see the architecture in Fig.~\ref{fig:sqcnn}. 
To efficiently handle denser 3D point cloud data, they voxelised the 3D space, selectively extracting features from a controlled number of voxels. 
They also incorporated a fidelity penalisation term to promote feature diversity following quantum encoding, which are subsequently utilised for classification. 

\begin{figure}[t]
\centering
\includegraphics[width=\linewidth, trim={0cm 0cm 0cm 0cm}, clip]{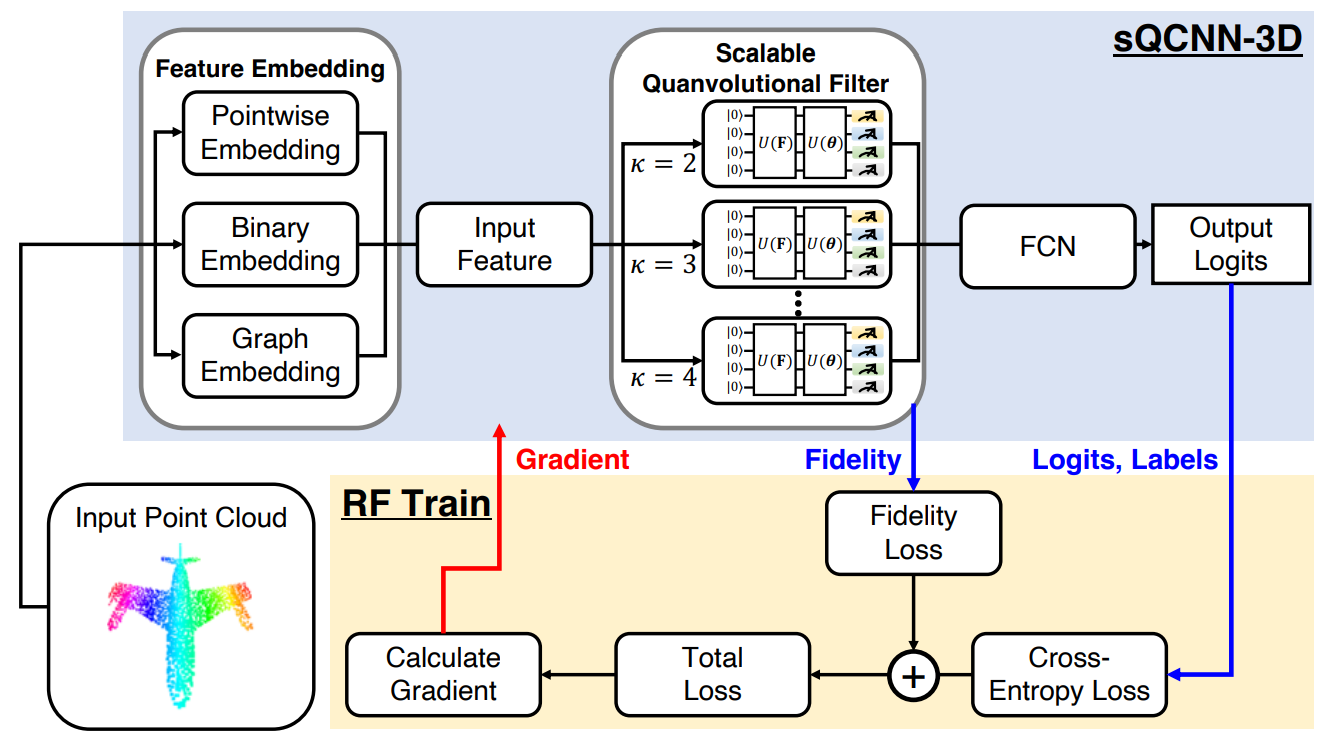}
\caption{SQCNN for point cloud classification~\cite[Elsevier \textcopyright 2025]{baek20223d}
PQC filters, like convolution kernels, extract features from input patches, with parameters shared across them for each PQC.}
\label{fig:sqcnn}
\end{figure}

\parahead{Model fitting} 
Chin \textit{et al.}~\cite{Chin_2020_ACCV} investigated the use of gate-based quantum computing for robust geometric curve fitting in computer vision. 
They proposed a probabilistically convergent classical algorithm for fitting geometric curves and integrated a quantum speedup using the Bernstein-Vazirani algorithm~\cite{BernsteinVazirani1997}. 
Their approach leverages quantum computing to efficiently compute Boolean influence measures that assess data point outlyingness. 
By incorporating quantum techniques, their method enhances the robustness and efficiency of geometric fitting, demonstrating the potential advantages of gate-based quantum computing for model fitting. 
However, this work remained purely theoretical, with an open gap that needed to be filled for practical application. 
This gap was later addressed by Yang \textit{et al.}~\cite{yang2024robust} for small-sized 1D linear regression. 
The authors explicitly derived a quantum oracle capable of computing the feasibility test needed for the Bernstein-Vazirani circuit. 
Their experiments confirmed the theoretical expectations derived by Chin \textit{et al.}~\cite{Chin_2020_ACCV}. 
We also refer to Sec.~\ref{ssec:MethodsAQC} for models fitting methods relying on AQC.

\parahead{Point set alignment} 
Noormandipour and Wang~\cite{NoormandipourWang2022} proposed Quantum Kernel Correlation (QKC), a method for applying a PQC to align two point sets. 
The algorithm is designed so that the output state of a PQC encodes the correct alignment rotation. 
Following Golyanik and Theobalt~\cite{golyanik2020quantum}, QKC restricts the transformation $\mathcal{T}$ to be a rotation and utilises a QAOA-like architecture to predict the rotation angle discretised into $2^n$ bins, where each binary vector corresponds to a unique bin. 
We also refer to Sec.~\ref{ssec:MethodsAQC} for point set alignment methods relying on AQC.

\parahead{Data autoencoding} 
Data autoencoding is a compression task aiming to encode a signal into a lower-dimensional one and then reconstruct it with minimal information loss~\cite{mienye2025deep}.
This process is particularly useful for dimensionality reduction, feature extraction and learning efficient codings of unlabeled data.

Rathi \textit{et al.}\cite{Rathi2023} introduced an autoencoder that employs parametrised quantum circuits in place of classical neural networks to encode and reconstruct sparse 3D point clouds, enabling execution on quantum hardware (see Fig.~\ref{fig:3DQAE}). While their approach has yet to match the performance of classical neural networks, it offers valuable insights into the effectiveness of quantum encoding schemes in parametrised quantum circuits. 

\parahead{Generative networks}
Generative models aim to sample new data from learnt complex data distributions across various modalities.
Modern approaches, such as variational autoencoders (VAEs)~\cite{kingma2013auto}, generative adversarial networks (GANs)~\cite{goodfellow2014generative}, normalizing flows~\cite{rezende2015variational} and diffusion-based models~\cite{rombach2022high,ho2020denoising}, have achieved significant success. 
However, as many learning models, these methods face long-standing challenges of scalability and computational efficiency, leaving room to explore quantum generative models.

In particular, Silver \textit{et al.}~\cite{silver2023mosaiq} and Huang \textit{et al.}~\cite{huang2021experimental} adopt a GAN-based generative framework, replacing the classical generator with a PQC while keeping the discriminator classical. Due to the constraints imposed by current quantum hardware and software, both approaches incorporate pre-processing techniques to reduce the dimensionality of the input data. Specifically, Huang \textit{et al.}~\cite{huang2021experimental} split the training data into image batches for quantum generation, whereas Silver \textit{et al.}~\cite{silver2023mosaiq} further refine this process using re-ordered principal component analysis (PCA) to project data into lower-dimensional representations with approximately uniform variance.
A different approach is proposed by K{\"o}lle \textit{et al.}~\cite{kolle2024quantum}, who introduce a quantum denoising diffusion network that replaces the classical U-Net-based denoising process with a quantum U-Net, where all operations adhere to quantum principles. 
Although their experiments remain limited in scale, a key advantage of their method is the ability to collapse the multi-step denoising diffusion process into a single step, effectively mapping noise directly to structured data. 
This reduction in computational complexity is attributed to the unitary property of quantum operations, where the product of arbitrary unitary matrices remains unitary. 
These developments underscore the potential of quantum generative models to overcome some of the computational bottlenecks faced by their classical counterparts. 

\parahead{Implicit representations} 
Implicit neural representations, or neural fields, use neural networks to encode signals such as images or 3D shapes, by mapping low-dimensional grid coordinates to the corresponding signal values (\textit{\eg~}~RGB pixel values or signed distances). 
Their accuracy improves when the model learns different signal frequencies, inspiring Fourier neural networks and sinusoidal activation functions. 

Zhao \etal~\cite{zhao2024quantum} proposed the quantum implicit representation network (QIREN, Fig.~\ref{fig:qiren}), a data-re-uploading-based quantum model that can learn the Fourier decomposition of signals. 
Data re-uploading~\cite{perez2020data} involves encoding inputs multiple times into the PQC; it is shown to effectively represent Fourier series~\cite{schuld2021effect}. 
QIREN demonstrated that---unlike classical methods where available frequencies grow linearly with problem size---data re-uploading enables exponentially growing frequencies with PQC size under optimal conditions, allowing for fewer parameters and a more precise representation. 

\begin{figure}[t]
\centering
\includegraphics[width=\linewidth, trim={0cm 0cm 0cm 0cm}, clip]{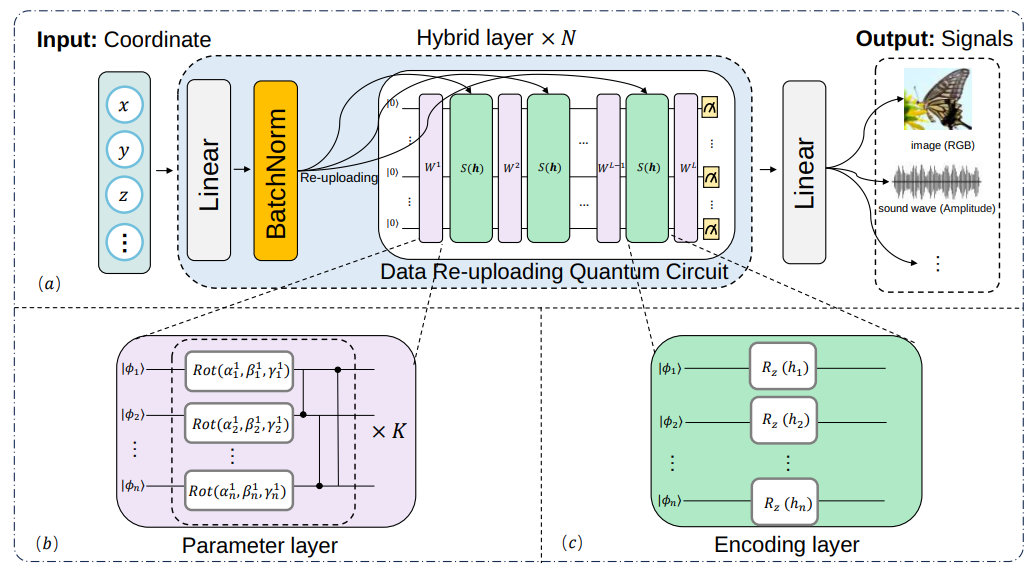}
\caption{Quantum Implicit Representation Network (QIREN)~\cite[courtesy of the authors]{zhao2024quantum} leveraging data re-uploading of the input to the PQC to represent the signal in its Fourier decomposition. 
}
\label{fig:qiren}
\end{figure}

Next, Wang et al.~\cite{Wang2025QVFs} 
proposed Quantum Visual Fields~(QVF), which learn encoding through learnable energy embeddings into quantum states. 
Unlike prior approaches~\cite{zhao2024quantum,baek20223d,senokosov2024quantum} where the quantum circuit is placed in between classical neural network layers as a classical enhancer, QVFs preserve the inherent probabilistic quantum circuit properties and do not use post-processing classical layers. 
The method was shown to outperform QIREN and even widely-used classical MLP baselines in the expressive power (representing high-frequency details with fewer parameters) and scalability across different signal resolutions. 
The QVF framework also successfully generalises to 3D shapes and represents a promising landmark in the development of practical quantum and quantum-inspired machine learning applications in visual computing, see Fig.~\ref{fig:chairs}. 
\parahead{Bundle adjustment} 
Bundle adjustment is a critical optimisation process that refines 3D structures and camera parameters by minimising projection errors between 3D points and their corresponding 2D images~\cite{hartley2003multiple}.

Piatkowski et al.~\cite{piatkowski2022towards} investigate bundle adjustment in satellite imaging. 
They explore both AQC and gate-based quantum computing simultaneously for solving clustering and feature-matching problems within this context.  
For the gate-based quantum approach, they employ quantum kernel methods and the variational quantum eigensolver~(VQE) to enhance feature matching. 
In contrast, under the AQC paradigm, they formulate a QUBO objective, which is then executed on a quantum annealer. 
This dual exploration provides insights into how different quantum computational models can contribute to optimisation challenges in satellite image processing. 

\subsection{High-Level Observations and Summary} 
\label{sec:others_and_discussion}

After having discussed both gate-based and annealing-based quantum computing paradigms, we now analyse their respective characteristics.

Quantum annealing has been applied to energy-based optimisation tasks, with all reviewed methods formulating computer vision problems in the QUBO form to ensure compatibility with the hardware’s coupling constraints, that is, the limited connectivity and admissible problem structure imposed by current quantum annealers. 
Almost all methods---except a few such as point set alignment~\cite{golyanik2020quantum}, Q-Match~\cite{SeelbachBenkner2021}, or image segmentation~\cite{delilbasic2023single}---faced the challenge of efficiently handling constraints in their optimisation, which was largely circumvented using penalty terms. 
Some approaches also encountered higher-order or non-linear objectives over binary variables, addressed through quadratisation with ancilla variables~\cite{cruz2018qubo} or via recursive decomposition~\cite{Meli_2022_CVPR, meli2025qucoop}. 
While some approaches were one-sweep~\cite{golyanik2020quantum,SeelbachBenkner2020,LiGhosh2020,zaech2022adiabatic,Bauckhage2018AdiabaticQC,zaech2024probabilistic}, many adopted iterative refinement of the solution ~\cite{Meli_2022_CVPR,meli2025qucoop,SeelbachBenkner2021,Farina2023,
pandey2025outlier,braunstein2024quantum}. 
Lastly, there have been attempts to apply quantum annealing to train classical neural networks~\cite{sasdelli2021quantum,Krahn2024,Fengyi2023arXiv,Abel2022}, but such hybrid paradigms remain challenged by resource overhead and limited problem size compatibility (see Sec.~\ref{ssec:DWave_Architecture}). 
\begin{figure}[t!]
\centering
\includegraphics[width = 0.96\linewidth]{./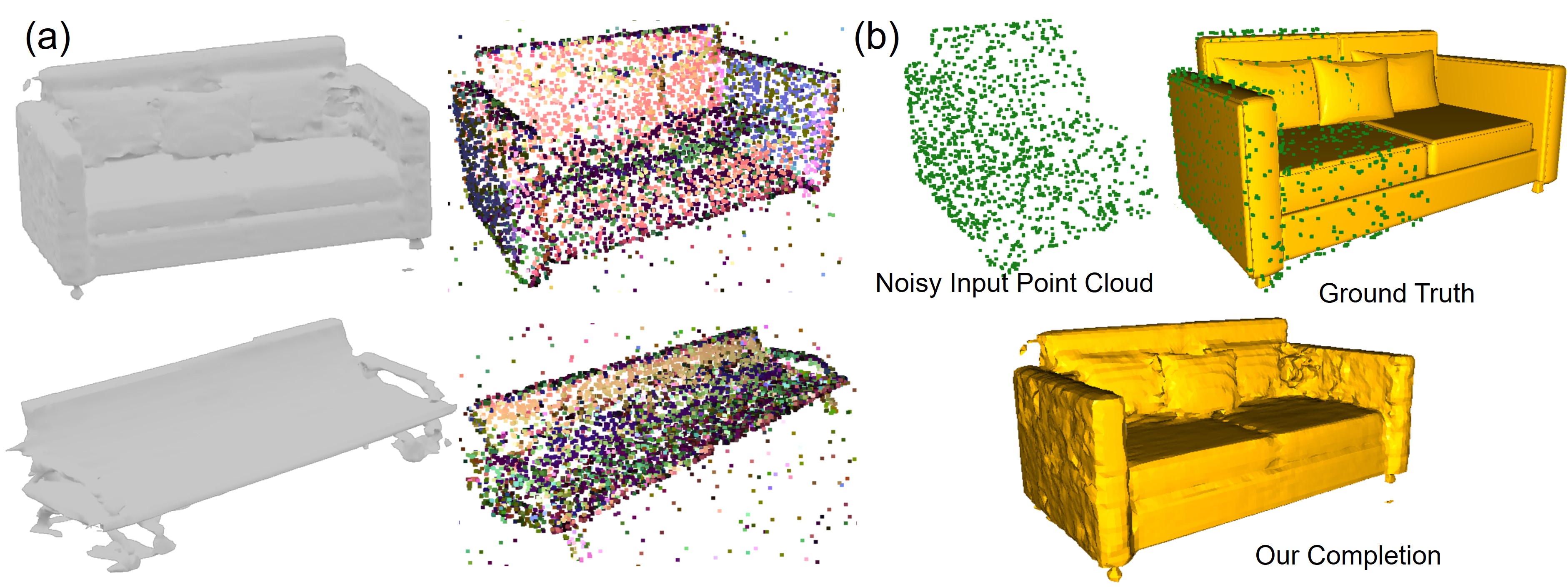} 
\caption{(a) 3D shape reconstruction using QVF~\cite[courtesy of the authors]{Wang2025QVFs}, with marching cubes applied to the inferred fields and latent codes. 
(b) Shape completion from partial inputs with QVF.}
\label{fig:chairs}
\end{figure} 

Gate-based models offer a flexible and more general alternative to quantum annealing, enabling more adaptable circuit designs. 
However, only very few analytic (handcrafted) circuits (\eg~for robust fitting~\cite{Chin_2020_ACCV,yang2024robust}) have been proposed, since their construction requires a strong inductive bias from the underlying problem. 
A growing trend favours PQCs for learning tasks, reflecting a shift toward trainable quantum architectures that can tackle various computer vision problems, including classification~\cite{hur2022quantum,li2020quantum_deep,senokosov2024quantum,baek20223d}, regression~\cite{NoormandipourWang2022,zhao2024quantum,Wang2025QVFs} and generative modelling~\cite{silver2023mosaiq,huang2021experimental,kolle2024quantum}. 
One notable advantage was their integration with kernel methods~\cite{miroszewski2023detecting}, where unitary evolution naturally realises feature space embeddings. 
Parameterised quantum circuits (PQCs) offer promising expressivity, and their tunable architecture makes them well-suited to data-driven quantum learning. However, they typically require more mature—and technically demanding—hardware than annealing-based approaches.
From the hardware perspective, despite steady progress in recent years, two fundamental challenges remain unsolved: (1) error correction and (2) scalability. 
As a result, we still lack a strictly fault-tolerant quantum computer capable of performing large-scale computations with manageable noise levels. 
On the software side, manipulating complex quantum states, especially in gate-based models, remains highly non-trivial. 
Unlike classical neural networks, designing quantum circuits for information processing demands deep interdisciplinary expertise, as computations must strictly adhere to the principles of quantum mechanics, such as unitary evolution. 
This poses significant challenges for researchers entering the field, yet makes QeCV a compelling area of exploration for the next decades. 
\section{Resources and Tools}\label{sec:resources_tools} 
This section summarises the available quantum hardware and quantum computing tools to get started in QeCV. 
Multiple quantum computers are currently accessible via the cloud. 
They differ in the number of qubits, the qubit technology, the qubit connectivity pattern and the low-level qubit 
characteristics. 
Note that the sheer number of qubits in isolation from other characteristics is not a meaningful metric. 
While the so-called ``quantum volume'' represents an attempt to introduce a single metric to express the overall performance in a standardised way~\cite{Cross2019}, we do not use it due to criticism and expected diminished relevance for future quantum systems~\cite{Baldwin2022}. 
Most available machines and those under development are gate-based; D-Wave is among the few companies providing quantum annealers. 
There are also several simulators\footnote{We allow ourselves a terminological simplification and use ``simulators'' collectively for simulators and emulators. The latter can take shortcuts in computing expected outputs of quantum computations.} of quantum hardware that simulate quantum computers with different degrees of fidelity (\textit{\eg~} supporting various levels of noise and decoherence). 
We next provide an overview of the available quantum hardware (Sec.~\ref{ssec:QC_Hardware}), Integrated Development Environments (IDEs) for programming and quantum computer simulators (Sec.~\ref{ssec:QC_simulators}) as of 2025. 
Finally, we summarise the learning materials relevant for QeCV and 
available online (Sec.~\ref{learning_quantum_materials}). 
\subsection{Quantum Computing Hardware}
\label{ssec:QC_Hardware}

Several companies and startups are actively developing quantum computing technologies, including universal fault-tolerant gate-based quantum computers and quantum annealers. 
These systems are built on diverse hardware architectures and can be accessed and programmed through various Software Development Kits (SDKs). 
In this section, we highlight the most advanced quantum computing platforms for which publicly available information exists and summarise some widely used SDKs in Table~\ref{tab:IDEs_SDKs}. 
\parahead{Quantum annealers} 
D-Wave is the first company to commercialise quantum annealing hardware at a significant scale. 
Their latest flagship processor, the Advantage 2, features ${\sim}4400$ qubits with a 20-way qubit connectivity (see Sec.~\ref{ssec:DWave_Architecture} and Fig.~\ref{fig:D-Wave_architectures}). 
This aspect and the qubit connectivity pattern make them useful for moderately large computer vision problems, as highlighted in this survey. 
Currently, other companies and institutions such as Qilimanjaro, Los Alamos National Laboratory~(LANL) and Tokyo Institute of Technology~(TIT) also build their own custom annealers with the scale, however, not comparable to D-Wave~\cite{shedule}\footnote{There also exist so-called digital annealers like Fujitsu that are, however, classical machines for solving optimisation problems that quantum annealers are targeting.}. 
In parallel, alternative quantum computing approaches have emerged, such as Coherent Ising Machines (CIMs)\cite{wang2013coherent}. 
QBoson, for example, has developed a CIM-based system with its latest product, the 550$W$~\cite{QBoson}. 
Unlike quantum annealers relying on superconducting qubits and requiring cooling to near absolute zero for superconductivity, CIMs operate using laser systems and optical qubits enabling computation at room temperature. 
While D-Wave remains the dominant player in quantum annealing, emerging technologies such as CIMs from QBoson present promising alternative paradigms for efficiently solving QUBO and Ising problems.

\parahead{Gate-based quantum hardware} 
Gate-based quantum computers can be realised using different physical platforms, often leveraging distinct types of physical particles. 
Among these, two prominent categories are fermionic gate-based and photonic gate-based quantum computers. 
Fermionic gate-based quantum computers, which typically use superconducting qubits, have seen significant advancements, with IBM being one of the most active players in both hardware development and software ecosystem expansion. 
IBM began providing access to its experimental quantum chips as early as 2019, initially with systems featuring as few as five qubits. 
Since then, the company expanded its offerings and launched Falcon (27 qubits, launched in 2020), Eagle (127 qubits, launched in 2021), Osprey (433 qubits, launched in 2022), Heron (156 qubits, launched 2024) and Condor (1121 qubits, launched in 2024); several of these quantum systems are available via the cloud. 
Google is also actively working on gate-based quantum hardware. 
In 2018, it introduced a 53-qubit quantum computing chip, Sycamore, which was used to experimentally demonstrate that certain problems, such as random circuit sampling, can be performed more efficiently on Sycamore than on the best classical supercomputer, therefore reaching a significant milestone.
Another recent milestone quantum chip developed by Google, Willow, launched in 2024, demonstrates, for the first time, the implementation of quantum error correction on the quantum chip with physical qubits arranged in a surface code layout. 
Notably, Willow is the first experimental chip to demonstrate quantum error correction below the surface threshold, as highlighted in Ref.~\cite{Willow2024}, marking a significant step forward in making quantum computing more reliable.
Differently from many other companies, Google does not provide public direct access to their experimental machines, which are primarily used for internal research and collaboration with other academic and industry institutes.
IBM and Google regularly release and update their long-term roadmaps for future generations of quantum hardware~\cite{IBMroadmap2025, Googleroadmap2025}. 
Other companies that are predominantly working on gate-based quantum hardware each with their own unique type of experiments are Rigetti, IonQ, Quantinuum, AQT, eleQtron, Microsoft and IQM. 
While Rigetti utilises superconducting qubits in their Aspen series, IonQ, Quantinuum, AQT and eleQtron rely on trapped-ion qubits. 
Quantinuum is a merger of Honeywell Quantum Solutions and Cambridge Quantum, working on quantum processors based on trapped-ion technology. 
Although AQT and eleQtron both work with ions they manipulate them in different ways. While eleqtron makes use of microwaves for the qubit control, AQT's devices utilize laser beams~\cite{lippert2022perspectives}.
Next, IQM focuses on superconducting qubit technology and Microsoft recently released the first quantum processor based on topological qubits~\cite{Aghaee2025}. 
Quantum computing with photonic gate-based systems is sometimes referred to as ``continuous variable quantum computing" because photons can theoretically exist in an infinite number of modes. 
Currently, Xanadu is considered the leader in developing large-scale, fault-tolerant quantum computers. 
Through cloud access, users can interact with their Borealis photonic quantum processor, with its architecture designed for Gaussian boson sampling. 
As of the time the survey is written, Borealis can process 216 squeezed-state modes of light (note that in photonic computers, modes can be considered equivalent to qubits). 
Other companies, such as PsiQuantum, QuiX Quantum, LightMatter and ORCA Computing, are also exploring photonic gate-based quantum processors. 
However, they do not currently provide direct access to their quantum chips. 
\subsection{Quantum Computing Frameworks and Simulators} 
\label{ssec:QC_simulators} 

\begin{table}[]
\centering
\caption{Available adiabatic (AQC) and gate-based (GQC) resources for developing QeCV algorithms. 
All SDKs are supported in the Python programming language.
Qubits: Number of (physical) qubits supported.
} 
\begin{tabular}{cccccccc}\hline
SDK & Hardware         & Paradigm  & Qubits \\\hline\hline
Ocean~\cite{dwavesysDWaveOcean} & DWave   & AQC   & {$\sim$}5000    \\
Cirq~\cite{Cirq} & NISQ Circuit   & GQC & {$\sim$}100    \\
Fermilib~\cite{Fermilib} & Simulation    &  GQC   & {$\sim$}10 \\
Qiskit~\cite{qiskit2024} & IBM    &   GQC   & {$\sim$}1000    \\
PennyLane~\cite{PennyLane} & Simulation    &  GQC   & {$\sim$}20  \\
CuQuantum~\cite{10313722} & Simulation    &  GQC   & {$\sim$}40  \\\hline
\end{tabular}
\label{tab:IDEs_SDKs} 
\end{table} 

We next introduce frameworks, \ie~IDEs and SDKs, for programming quantum computers (both gate-based and annealing machines) along with different 
classical quantum hardware simulators. 
Some leading and widely-used platforms are listed in Table \ref{tab:IDEs_SDKs}. 
For each quantum computing paradigm, we discuss frameworks for simulation on classical machines and then present frameworks that can execute the algorithms on real quantum hardware. 
Several Japanese companies introduced quantum annealing simulators and solvers inspired by them. 
Fujitsu offers a digital annealer~\cite{Bosse2020} mimicking quantum annealing using classical hardware, and provides a scalable alternative for solving Ising-like combinatorial optimisation tasks.
Hitachi has also explored quantum-inspired annealing with its CMOS-based Ising machines, leveraging classical simulations to validate their effectiveness~\cite{Yamaoka2015}.
Toshiba has developed a quantum annealing simulator that emulates quantum annealing using classical hardware~\cite{Goto2019}, applying principles from quantum mechanics and integrating hybrid optimisation models to enhance real-world optimisation tasks. 
To date, D-Wave's Ocean~\cite{dwavesysDWaveOcean} is the leading SDK usable for programming of real AQC hardware, \ie~D-Wave annealers. 
Regarding gate-based quantum computer simulators, several companies provide comprehensive frameworks that integrate simulation and development environments, optimised for execution on different hardware platforms. 
IBM offers Qiskit Aer~\cite{QiskitAer2025}, a high-performance simulator with noise modelling capabilities, running on classical CPUs and GPUs. It is fully integrated into the Qiskit software development kit (SDK). 
Google’s Cirq framework~\cite{Cirq} features qsim, a state-of-the-art wavefunction simulator written in C++ with OpenMP support, designed for large-scale simulations on multi-core CPUs and GPUs. 
Microsoft’s Quantum Development Kit (QDK) includes Q\# simulators for full-state and Toffoli simulations on classical CPUs, with cloud-based scalability. 
NVIDIA’s cuQuantum~\cite{10313722} is a GPU-accelerated quantum circuit simulator optimised for high-performance computing clusters; it supports efficient quantum circuit simulations through CUDA-Q/QX libraries. 
Intel provides the Intel Quantum Simulator (IQS), formerly known as qHiPSTER, which is optimised for multi-threaded CPU execution and designed to take advantage of Intel’s HPC architectures. 
Amazon Braket includes a local simulator that supports multiple backends, executed on Amazon Web Services (AWS) cloud infrastructure with CPU and GPU acceleration. 
Atos offers the Quantum Learning Machine (QLM), a dedicated classical supercomputer optimised for quantum simulation, supporting both CPU and FPGA acceleration. 
These platforms serve as all-in-one tools for quantum algorithm design, hardware validation and benchmarking, effectively integrating simulation capabilities with broader quantum computing SDKs and IDEs.
Beyond simulation, these SDKs and development environments enable programming and execution on real gate-based quantum computers. 
IBM provides Quantum Composer, a graphical interface for quantum algorithm design, as well as Qiskit Runtime, optimised for building, executing and optimising quantum workloads. 
These tools are based on Qiskit~\cite{qiskit2024}, an open-source framework supporting multiple gate-based quantum platforms, such as those from IBM, Rigetti, IonQ, IQM and Quantinuum, along with simulators like Aer and MQT DDSIM. 
Rigetti’s Aspen-M-3 quantum processor is accessible through its Quantum Cloud Services (QCS), designed for hybrid quantum-classical algorithms and programmed using Quil/pyQuil, Rigetti’s quantum instruction language. 
Google’s Quantum AI division provides Cirq~\cite{Cirq}, a Python-based framework for programming quantum circuits. Cirq includes built-in simulators for density matrix and wavefunction simulations and supports real quantum hardware. 
Additionally, Azure Quantum offers cloud-based access to a range of quantum hardware providers, including Rigetti, IonQ, Quantinuum and Pasqal, through the Q\# programming language. 
PennyLane~\cite{PennyLane}, developed by Xanadu, is an open-source framework supporting quantum machine learning and variational quantum algorithms. 
In addition to its photonic quantum hardware support (such as the Borealis processor~\cite{Madsen2022QuantumCA}), PennyLane also simulates gate-based quantum operations, making it a versatile tool for quantum software development. 
It integrates with a variety of quantum processors, including those from IBM, Rigetti and IonQ and supports local simulations using classical hardware, making it an excellent choice for both quantum circuit simulation and real quantum hardware execution. 
The GitHub repository~\cite{githubGitHubQosfawesomequantumsoftware} contains a list of references to different kinds of open-source software related to quantum computing. 
\subsection{Learning Materials} \label{learning_quantum_materials}
\begin{table}[]
\centering
\caption{
A selection of adiabatic (AQC) and gate-based (GQC) learning materials relevant to QeCV. 
The difficulty level of the provided material for a reader interested in QeCV ranges from ``$\star$'' (starter) to ``$\star \star$'' (intermediate/expert).} 
\begin{tabular}{ccc}
\hline
Material & Paradigm & Level\\
\hline \hline
Nielsen \& Chuang~\cite{nielsen2002quantum} & AQC/GQC & $\star$/$\star \star$ \\
Das \& Chakrabarti~\cite{das2005quantum} & AQC & $\star$ \\
IBM Quantum Learning~\cite{ibm_learning} & GQC &  $\star$ \\
Pennylane~\cite{pennylane_learning} & GQC &  $\star$ \\
Microsoft Azure Quantum~\cite{microsoft_learning} & GQC &  $\star$ \\
D-Wave Leap~\cite{dwave_learning} & AQC &  $\star$ \\
QCVML Workshop~\cite{QCVML_Workshop_2023} & AQC/GQC & $\star \star$ \\
\hline
\end{tabular}
\label{tab:learning_material}
\end{table}

As of 2025, quantum computing is not a well-known technology for most computer vision researchers and practitioners. 
Its specifics (\textit{\eg~}~due to terminology and specialised notations influenced by physics) and complexity can be a barrier to understanding and engaging in QeCV research. 
Also, our experience from successfully teaching QeCV to computer vision and visual computing students confirms these observations. 
This section provides further selected learning materials apart from this survey that could be helpful in QeCV. 
Some of them are associated with quantum IDEs and SDKs discussed in the last section; an overview is provided in Table \ref{tab:learning_material}. 

\parahead{Textbooks}
Understanding the principles of quantum mechanics is fundamental to grasping how quantum computers operate. 
The textbook by Nielsen and Chuang~\cite{nielsen2002quantum} is a foundational text in quantum information science covering a wide range of topics on gate-based quantum computing (\eg~~quantum algorithms, quantum teleportation, quantum cryptography and quantum error correction). 
Other books and articles by Das and Chakrabarti~\cite{das2005quantum}, McGeoch~\cite{McGeoch2014} and Albash and Lidar~\cite{AlbashLidar2018} are devoted to quantum annealing. 
They provide the fundamentals of quantum annealing and related
optimisation methods. 
Textbooks are excellent resources for those new to the principles of quantum computing, providing foundational knowledge and clear explanations. 
They also serve as excellent refreshers and deep dives for more experienced readers looking to enhance and expand their understanding.
\parahead{Programming} 
To get started in implementing quantum algorithms and interacting with quantum devices, some quantum computer manufacturers and software framework providers make coding examples and learning platforms available to the users; see Sec.~\ref{ssec:QC_simulators}. 
They provide problem modelling examples, notebooks and exercises for practising quantum programming. 
Notable ones include, but are not limited to, IBM Quantum Learning~\cite{ibm_learning}, Microsoft Azure Quantum~\cite{microsoft_learning} for gate-based quantum computing and D-Wave Leap~\cite{dwave_learning} for AQC. 
Pennylane~\cite{pennylane_learning} provides regularly replenished learning materials for quantum differentiable programming, especially for gate-based PQC. 
\parahead{The QCVML Workshop} 
The quantum computing mindset has a high potential to provide new perspectives leading to new insights, especially in the field in which many things are empirical, \textit{\ie~} computer vision. 
As the QeCV community grows, the demand for a dedicated venue at leading computer vision conferences increases. 
Thus, the first and second Quantum Computer Vision and Machine Learning (QCVML) workshops at a primary computer vision conference took place in 2023 and 2024 (Conference on Computer Vision and Pattern Recognition and European Conference on Computer Vision)~\cite{QCVML_Workshop_2023}. 
The list of covered topics includes quantum-inspired approaches, computer vision methods relying on adiabatic quantum computers (\textit{\eg~} quantum robust fitting), quantum neural networks and quantum hardware. 
While the first edition of QCVML focused on quantum annealing in computer vision, the second one was devoted to gate-based quantum hardware and perspectives of using it in computer vision.  
\section{Discussion}\label{sec:discussion} 
We reviewed QeCV approaches leveraging quantum annealers and gate-based quantum computers. 
Despite notable advances in recent years, the field remains in its early stages, with numerous open challenges and promising avenues for future research (Sec.~\ref{ssec:open_challenges}).
These will be explored prior to examining the specifics of publishing a QeCV paper (Sec.~\ref{ssec:publishing_reviewing}) and discussing its potential social implications (Sec.~\ref{ssec:potential_social_implicatins}).

\subsection{Open Challenges and Future Directions} 
\label{ssec:open_challenges} 
Open challenges pervade all aspects of current QeCV research, starting from the tools and available hardware through to data encoding (\textit{\eg~}~of images and 3D shapes) and problem mapping methodologies, PQC architecture designs, integration with existing classical tools and standardised benchmarking (\textit{\eg},~machine learning). 
Thus, quantifying the contribution of the quantum part in a hybrid approach is not always straightforward. 
In addition to the challenges associated with the new computational paradigm and problem formulations compatible with it, 
we next highlight several categories of open challenges (while not claiming exhaustiveness) that we believe will remain important for the field. 
\vspace{0.5mm}
\parahead{Quantum hardware and accessibility} 
There are multiple challenges associated with the accessibility of quantum hardware and its characteristics. 
QeCV is an applied field with high demands on the accessibility of quantum hardware or powerful classical machines capable of simulating it. 
While methods designed for quantum annealers can be tested on real quantum hardware (at least for small problems), methods for gate-based machines, as a rule, have to be tested on simulators; see discussion in Sec.~\ref{ssec:QC_simulators}.
Qubit noise and decoherence as well as error suppression in quantum hardware are some of the biggest challenges in physical quantum computer realisations. 
Another challenge is related to the connectivity patterns between the qubits, \ie~not all required connections between qubits upon the method design exist in hardware (which necessitates minor embedding and other forms of transpiling). 
Next, because a lot of developed quantum hardware is experimental, it is often accessible to a small group of people, usually closely working in or with the hardware development teams. 
On the one hand, access to and evaluation on real hardware is, hence, one of the biggest challenges recognised by the QeCV community. 
On the other hand, quantum method simulation is a valid and widely accepted evaluation methodology in the case of gate-based methods. 
There are reasons for cautious optimism due to the recent progress in quantum error correction~\cite{google_willow}. 
Moreover, the latest estimates regarding the required physical qubits for a fault-tolerant logical qubit has been corrected down to 100:1 (in contrast to 1000:1 or higher before 2025). 

Concerning quantum annealers, D-Wave machines still have substantial errors in the coupling parameters~\cite{error}. 
A relative error of $1\%$ in the biases, couplings, or even readouts is not uncommon. 
Care must be taken when qubit biases and couplings span several orders of magnitude, as such disparities may significantly influence the resulting outcomes.
Aspects of robustness to the resulting errors have not been given much attention so far, except for, \textit{\eg~}~Young \textit{et al.}~\cite{young2013adiabatic} 
proposing an exemplary strategy to combat this error type.

\vspace{0.5mm}
\parahead{Training of deep neural networks on QCs} 
Many problems in computer vision can now be successfully addressed by training some kind of a deep neural network (\eg~transformers~\cite{Dosovitskiy2021}) end-to-end on large enough datasets. 
Rather than being instances of QML, the existing network architectures are computationally classical by design. This raises a compelling question: \emph{Can quantum computers be leveraged to accelerate and make deep network training more {\coo}-efficient, particularly for large-scale models like transformers?}
Despite some progress, such as efforts to train binary networks~\cite{Carrasquilla2023, Krahn2024}, there is no widely accepted framework for training general neural architectures on quantum hardware. A key challenge lies in optimising real-valued network weights within the discrete constraints of a binary quantum annealer~\cite{Li2025arXiv}. However, recent advances in convex formulations of neural networks and co-positive programming~\cite{yurtsever2022q} suggest promising directions---particularly since some constrained real-valued solutions naturally reside on binary vertices. Notably, the recent work of Prakhya~\etal~\cite{prakhya2024convex} introduces a co-positive convex formulation for two-layer, finite-width ReLU networks. With further development, such formulations could pave the way for quantum-compatible neural network training, bridging the gap between classical deep learning and quantum optimisation~\cite{Abbas2024}. 

\vspace{0.5mm}
\parahead{Quantum computers as samplers} 
Quantum annealers (and quantum computers in general) solve problems by repeatedly sampling from a modified posterior distribution, typically selecting either the most frequent or the lowest-energy solution as the final output. This approach bears similarities to maximum a posteriori (MAP) estimation, where the solution is reduced to a single mode, discarding all other sampled information. However, a more principled approach would leverage the full set of quantum samples to approximate the true problem posterior, without resorting to expensive sequential Markov Chain Monte Carlo (MCMC) methods. Since quantum measurements inherently produce likely solutions, these samples could be recycled to perform uncertainty quantification or the identification of multiple plausible solutions (modes). 
Despite its appeal, this approach is non-trivial because quantum annealers optimise a modified posterior rather than the true problem posterior. The transformation or alignment between these distributions is not always straightforward, necessitating a general framework for calibrating quantum samples to match the true posterior. One promising step in this direction is the probabilistic k-means approach of Zaech et al.~\cite{zaech2024probabilistic}, where calibration between the annealed and true posterior is achieved with minimal computational overhead---albeit for the specific case of binary balanced k-means. Extending such methodologies to general posteriors, thereby enabling robust uncertainty quantification, remains an important and promising avenue for future research. 
\vspace{0.5mm}
\parahead{PQC architectures for vision} 
Another open challenge is that most results in PQC research are still theoretical. 
Many assumptions in PQC works are often too restrictive, simplified and unrealistic, often 
making any statements or even speculations for real systems impossible. 
Given that computer vision is an applied research field, many experimental results need to be obtained to confirm theoretical works or show contradictions with them. 
On the other hand, many PQC works demonstrate applications in vision using only a few exemplary problems, such as image classification, often leaving the investigation of a broader range of computer vision problems aside. 
There are, however, many other problems in computer vision that could benefit from quantum computational paradigms and hybrid architectures. 
From the classical setting, it is well known that different problems require different approaches and architectures, inductive biases, losses and training regimes. 
We strongly believe that the situation is similar regarding quantum-enhanced techniques.
Finding [near-]optimal architectures for all relevant, challenging and unsolved computer vision problems could mean the initiation of hundreds and thousands of projects over the next decade and beyond. 

\vspace{0.5mm}
\parahead{PQC trainability and barren plateaus} 
Another open challenge is associated with PQC trainability. 
Of course, small-scale PQCs can be simulated and trained classically, allowing the evaluation of the gradient via computational graphs. 
However, as we expect to reach a point where PQCs become too complex for classical computers, an efficient way for evaluating the gradient for PQCs on quantum hardware would be essential. 
Current algorithms for that, \textit{\eg~}~the parameter-shift rule~\cite{mitarai2018quantum}, require multiple runs and evaluations of a quantum circuit (which scales linearly with the number of parameters).
In classical neural networks, calculating the gradient scales similarly to the forward pass, thanks to the reuse of the intermediate quantities. 
However, the collapse of the state function (due to measurements) in PQCs seems to prevent reusing intermediate gradients when calculating the gradient. 
A method known as simultaneous perturbation stochastic approximation (SPSA)~\cite{spall1992multivariate,spall1998overview,wang2011discrete,gacon2021simultaneous} allows to approximate the gradient with only two runs of the quantum circuit, irrespective of the number of parameters. 
More efficient or accurate gradient computation methods could enable scalable and hardware-native training. 

Similar to training classical gradient-based learning models, proper model initialisation is important to ensure proper gradient flow. 
As most of the time, we can not infer the prior distribution of the parameters, a common choice is to randomly initialise the weights. 
However, this, in the field of PQC, could cause the gradient vanishing problem called ``barren plateaus''.
It is expressed by partial derivatives that become exponentially small with the number of qubits $n$ considered in the PQC: 
\begin{equation} \label{eq:26}
\mathbb{E}_\theta [\partial _\theta \mathcal L(\theta)] = 0, 
\end{equation}
\begin{equation}
\operatorname{Var}_\theta [\partial _\theta \mathcal L(\theta)] \in O \bigg(\frac{1}{\nu^n} \bigg), \;\nu > 1.
\end{equation}
Here, $\theta$ are the tunable parameters, $\mathcal L$ is the loss function and $\nu$ is some number greater than one (a specific value of $\nu$ depends on the quantum state density, the unitary circuit block and the specific measurement operator). 
The root cause of barren plateau is believed to be linked with random matrix theory and unitary design with identified factors such as: 1) unconstrained expressivity~\cite{mcclean2018barren,arrasmith2022equivalence,holmes2022connecting} 2) global measurement~\cite{cerezo2021cost} 3) specific noise such as depolarizing noise~\cite{wang2021noise} and 4) specific entanglement types~\cite{ortiz2021entanglement}. 
Some design architectures, such as quantum convolutional neural network~\cite{cong2019quantum,pesah2021absence}, permutation-equivariant quantum neural network~\cite{schatzki2022theoretical} and initialisation strategies such as Refs.~\cite{zhang2022escaping,grant2019initialization,shi2024avoiding}, have been proposed as useful designs to relieve this problem; more practical policies are yet to be discovered in computer vision. 
Finally, recent research results suggest that some architectures that avoid barren plateaus are also classically simulable~\cite{bermejo2024quantum, cerezo2023does}. 

\vspace{0.5mm}
\parahead{Parameter selection in quantum annealing} 
A common way to find optimal weights for rectifiers in QUBO formulations is grid search. 
One of the practical reasons for its popularity is that it works well in practice. 
It has been observed in several publications that rectification weights can be selected from large ranges. 
Hence, they can often be determined even with large search steps. 
Moreover, several works propose to select rectifier weights in other ways, \ie~ using iterative rectifier policies~\cite{zaech2022adiabatic}, a co-positive reformulation~\cite{yurtsever2022q} or a tree-structured Parzen estimator (TPE)~\cite{pandey2025outlier}. 
QuAnt-type QUBO learning~\cite{seelbach2022quant} avoids explicit rectifiers as it does not require explicit linear constraints. 
Parameter selection in quantum annealing, nevertheless, remains an open challenge as all techniques have advantages and disadvantages, such as the generalisation of the found parameters in the case of the grid search and TPE~\cite{pandey2025outlier} or the convergence speed~\cite{zaech2022adiabatic, yurtsever2022q}. 
\parahead{High-risk nature of QeCV research} 
QeCV research is high-risk due to many open challenges, some of which have already been described in this section. 
The currently available quantum hardware is noisy and error-prone; accessing it and performing systematic and large-scale experiments might be associated with high costs and investments and no short-term payoffs. 
Related to it is another uncertainty concerning the quantum computing roadmap. 
What if fault-tolerant real quantum hardware arrives later than predicted and what will it mean for QeCV in the mid- and long-term? 
Encoding classical data into quantum states remains a challenge, especially images and 3D data. 
The cost of encoding can potentially nullify advantages in the later method stages. 
The theoretical assumptions made in many papers on QML are unrealistic and far from the state of modern quantum hardware. 
Moreover, quantum machine learning architectures allow weaker inductive biases and less flexibility in the architecture design compared to classical machine learning (\ie~the choice of learnable operations to choose from is much more restricted compared to the classical case). 
We believe these challenges and risks have to be taken into account when preparing for submission and reviewing a QeCV paper. 
\subsection{Publishing and Reviewing a QeCV Paper}
\label{ssec:publishing_reviewing} 
Publishing and reviewing papers on QeCV brings additional challenges on top of reaching the publication bar of the respective venue, since QeCV is an emerging field. 
Those include challenges due to the (comparably new for the CV community) computational paradigm, presentation of fundamentals, the expected novelty and evaluation on real quantum hardware vs.~simulations. 
\parahead{Challenges due to new computational paradigm} 
A new paradigm brings several challenges with it. 
We highlight the following ones: 1) Motivating its usage; 2) Accessing real quantum hardware or setting up a simulator; 3) Comprehensive and accessible presentation of the method and results for the CV community; 4) Reasonable interpretation of the experimental results (\eg~avoiding ``optimism bias'' in the absence of strong experimental evidence). 
%
%
Care must be taken to use the terminology as accurately and consistently as possible, despite being new to the CV community. 

\parahead{The fundamentals and the novelty aspect} 
Providing an extended background section about QeCV would help many readers understand relevant fundamentals of quantum computing and, eventually, how a new method works and why certain design choices have been made; the paper will be perceived as self-contained. 
As the field progresses, the fundamentals section will shrink until it is possible to keep it very short (a few sentences) or omit it entirely while assuming that the readers are familiar with the basics. 
\parahead{Reviewing a QeCV paper}
The history of publications at CV venues led to the emergence of expectations from QeCV papers different from those in non-quantum CV (\eg~numerical and runtime improvements compared to previous methods). 
Hence, reviewers should be informed and calibrate their expectations. 
Technical contributions should be analysed and presented from several perspectives, with a particular emphasis on how the new work advances quantum computational science and how it compares to the previous quantum state of the art.  
In the case of QeCV, improvements could be, \eg~in terms of a better complexity class, fewer parameters in a model or no necessity to perform relaxations of the objective functions. 
When the first approach for a given problem relying on quantum hardware is proposed, it is unreasonable to demand/expect the outperformance of classical state of the art (especially considering the current hardware realisations of quantum computers). 
On the other hand, having a quantum approach just for the sake of a new quantum version---with no conceivable theoretical improvements compared to existing classical techniques---could be valuable and ignite new ideas, but also be seen as problematic.

\vspace{0.5mm}
\parahead{Real quantum hardware \textit{vs} simulators} 
Another aspect concerns the experimental evaluation. 
While there are more and more ways to access real quantum hardware, not everyone can still afford it. 
At this stage, we believe that papers should not be penalised for not having experiments on real quantum hardware. 
After all, it is in the community's interest to be inclusive and open to new ideas. 
Results on quantum computer simulators or with objective optimisation by simulated annealing are also convincing (for problems that can be simulated or sampled on classical hardware in a reasonable time). 
\vspace{0.5mm}
\parahead{Development stages of QeCV approaches} 
QeCV papers demonstrate different maturity levels of the proposed methods and presented results. 
Based on them, we propose to distinguish several stages of QeCV method development (the higher, the more advanced): 
\begin{itemize}[leftmargin=*] 
\item \textbf{Stage 1:} A CV problem has been successfully mapped into a form admissible to quantum hardware, giving rise to a QeCV approach; 
\item \textbf{Stage 2:} A QeCV approach has been evaluated on a quantum hardware simulator; 
\item \textbf{Stage 2A:} A QeCV approach evaluated on a quantum hardware simulator outperforms previous QeCV approaches tested on a simulator; 
\item \textbf{Stage 3:} A QeCV approach has been tested on a gate-based NISQ machine or a quantum annealer; 
\item \textbf{Stage 3A:} A QeCV approach tested on a gate-based machine or a quantum annealer, \ie~real quantum hardware, outperforms previous QeCV approaches tested on real hardware; 
\item \textbf{Stage 4:} A QeCV approach tested on a gate-based machine or a quantum annealer shows confident results on the level of classical state of the art; 
\item \textbf{Stage 4A:} A QeCV approach tested on a gate-based machine or a quantum annealer outperforms previous QeCV approaches and shows confident results on the level of classical state of the art; 
\item \textbf{Stage 5:} A QeCV approach tested on a gate-based machine or a quantum annealer clearly shows superior results than the best-known classical state of the art; 
\item \textbf{Stage 5A:} A QeCV approach tested on a gate-based machine or a quantum annealer clearly shows superior results than previous QeCV approaches and the best-known classical state of the art. 
\end{itemize} 
Note that we understand under ``gate-based machine'' in the above list either a gate-based NISQ device or an error-corrected gate-based quantum computer in future. 
Moreover, the achieved stage depends not only on the method but also on the quantum hardware development level. 
This implies that level upgrades will be possible solely due to hardware improvements. 
QeCV and classical approaches to the same problem are always compared in stages 2A and 3A--5A. 
The case when a QeCV approach tested on real quantum hardware outperforms a previous QeCV method tested on a quantum hardware simulator is not explicitly mentioned (for the sake of simplicity, as this case is expected to be rare and irrelevant in the near term) and can be classified into stages 4A or higher. 
The achieved highest stages can differ across the problems with known QeCV approaches. 
Theoretical papers are of Stage 1. 
First papers proposing gate-based QeCV solutions to a new problem usually fall into Stage 2 (Stage 3 for annealers) and the first follow-ups are often of Stages 2/2A (Stage 3A for annealers). 
Several QeCV methods using quantum annealers reach stages 4/4A; reaching the same stages for gate-based methods depends on the progress in gate-based quantum hardware. 
The long-term goal of QeCV is to investigate which CV problems can have QeCV solutions and which stages they can reach. 
\emph{Reaching and maintaining stage 5/5A on any problem is the ultimate goal of QeCV for the next decades.} 
The community can monitor the progress in QeCV and orient itself by the proposed development stages. 
The motivation of compatibility with quantum hardware serves as a driver of scientific discovery. 
\subsection{Potential Social Implications}
\label{ssec:potential_social_implicatins} 
We believe that QC has a high potential to reshape and strongly influence in the long term many areas of science and engineering, including computer vision. 
Readers interested in ethical considerations of quantum computing can refer to  Roberson~\cite{roberson2021building}. 
Computer vision and machine learning are requiring more and more electricity and cause more and more \coo\ emissions~\cite{2019arXiv190602243S, Fu2021}. 
Hence, quantum computing has the potential to reduce those in the long term as quantum-enhanced techniques are increasingly widely adopted. 
We also should not forget that many things can be discovered in QeCV until we have fault-tolerant quantum computers. 
Hence, quantum computational paradigms are a driver of scientific discovery in the broad sense. 
QCs will not replace existing computers, but rather enrich the infrastructure and set of tools for computer vision in addition to CPUs, GPUs and embedded processors among others. 
In future, fault-tolerant gate-based quantum computers could enable efficient modeling of non-negligible quantum-mechanical effects (beyond the possibilities of classical machines) in 1) image formation and 2) solving inverse problems in computer vision. 
These two categories include high-resolution imaging; visual sensing and inverse problems where interferometric effects play a role; and imaging and inverse problems limited by the Heisenberg bound (\eg~in microscopy). 

\section{Conclusion}
\label{sec:conclude} 
This survey introduced and provided a comprehensive overview of an emerging field of QeCV, one of the first applied quantum computing  disciplines across the sciences. 
We reviewed the fundamentals and the operational principles of quantum computers required to understand existing state-of-the-art QeCV methods for readers with a computer vision background and no strong background in physics. 
Among the two largest quantum computing paradigms, i.e., gate-based and quantum annealing, the latter is more technologically advanced and allows experimentation on real quantum hardware. 
In contrast, the gate-based paradigm has been studied for computer vision theoretically or on simulators so far, though the recent introduction of the error correction code in the Willow quantum processor is a highly promising milestone towards scalable universal quantum computing for computer vision.

Next, we reviewed resources and tools for QeCV research, aspects that need to be taken into account when writing or reviewing a QeCV paper, and discussed current challenges, future directions, and potential social implications. 
We also introduced the development stages of QeCV approaches. 
To conclude---while the researchers and hardware producers will be occupied with the development of fault-tolerant and reliable quantum hardware during the next years---we see QeCV as an exciting field with many open challenges that has a high potential to provide solutions of a new kind to some of the hardest and unsolved problems, and develop next-generation techniques in computer vision. 

\vspace{5pt}
\parahead{Acknowledgements} 
NKM, MM and VG acknowledge the support of the Deutsche Forschungsgemeinschaft (DFG, German Research Foundation), project number 534951134. 
TB acknowledges support from the Engineering and Physical Sciences Research Council [grant EP/X011364/1]. TB was supported by a UKRI Future Leaders Fellowship [grant number MR/Y018818/1]. 
The authors thank Sharareh Sayyad for feedback on Sec.~\ref{sec:operational_principles} and the Appendices.

\clearpage\twocolumn[\vspace*{\fill}]
\bibliographystyle{IEEEtran}
\bibliography{egbib}

\clearpage\twocolumn[\vspace*{\fill}]
\begin{appendices}

\section{Quantum speedup---a representative example}\label{sec:gatespeedup}

This section will introduce Deutsch's algorithm, a simple quantum algorithm that demonstrates speedup over the classical method.
Deutsch's algorithm solves the following problem: let $f:\{0,1\} \mapsto \{0,1\}$ be a binary function. We say $f$ is \emph{balanced} if $f(0)\ne f(1)$, else $f$ is \emph{constant}, \ie~$f(0) = f(1)$. Table~\ref{tab:binfuncs} shows the possible $f$'s. Given an \emph{unknown} $f$, determine if $f$ is constant or balanced.

\begin{table}[h]\centering
\subfloat[Const.]{
\begin{tabular}{|c|c|}
\hline
  $x$   &  $f(x)$ \\
  \hline
  0   & 0 \\
  1   & 0 \\
  \hline
\end{tabular}}
\hspace{0.5em}
\subfloat[Bal.]{
\begin{tabular}{|c|c|}
\hline
  $x$   &  $f(x)$ \\
  \hline
  0   & 0 \\
  1   & 1 \\
  \hline
\end{tabular}}
\hspace{0.5em}
\subfloat[Bal.]{
\begin{tabular}{|c|c|}
\hline
  $x$   &  $f(x)$ \\
  \hline
  0   & 1 \\
  1   & 0 \\
  \hline
\end{tabular}\label{tab:thebal}}
\hspace{0.5em}
\subfloat[Const.]{
\begin{tabular}{|c|c|}
\hline
  $x$   &  $f(x)$ \\
  \hline
  0   & 1 \\
  1   & 1 \\
  \hline
\end{tabular}}
\caption{Constant and balanced binary functions.}
\label{tab:binfuncs}
\end{table}

Explanations are for what ``unknown'' means: in the context of Deutsch's Algorithm, we are able to evaluate $f$, but we do not have access to its definition. 
This makes an already simplistic problem contrived, but such an artificial problem is essential for an accessible introduction. 
Each truth table in Table~\ref{tab:binfuncs}, in this case, defines a logical gate that evaluates the function $f$.

How would the problem be solved classically? The solution must involve evaluating $f$ twice, \ie~evaluating $f(0)$ and $f(1)$ and comparing the outputs. Can we do better with a quantum machine? 

Deutsch's Algorithm constructs the 2-qubit quantum circuit shown in Fig.~\ref{fig:deutsch}, where $U_f$ is the quantum gate corresponding to the (unknown) function $f$. 
As alluded to in Sec.~\ref{sec:composegates}, any logical gate can be made reversible, hence a quantum gate. This is achieved by introducing a dummy input $y$, a redundant output that simply copies the input $x$, and a new output $y \oplus f(x)$, \ie~the XOR of $y$ and $f(x)$; see Fig.~\ref{fig:quantf}. 
Intuitively, $U_f$ outputs $f(x)$ in the second output qubit if input $y$ is set to $0$.

As a concrete example, say we are given the $f$ defined in Table~\ref{tab:thebal}. The corresponding truth table is
\begin{table}[h]\centering
\begin{tabular}{|c|c|c|c|}
\hline
\multicolumn{2}{|c|}{Inputs} & \multicolumn{2}{c|}{Outputs} \\
\hline
   $x$  & $y$ & $x$ & $y \oplus f(x)$ \\
\hline
   0  & 0 & 0 & $1$ \\
   0  & 1 & 0 & $0$ \\
   1  & 0 & 1 & $0$ \\
   1  & 1 & 1 & $1$ \\
   \hline
\end{tabular}
\end{table}
which implies that $U_f$ is the unitary matrix
\begin{equation}
	\label{eq:unitaryf}
	U_f =
    \begin{matrix}
		& \textit{00} & \textit{01} & \textit{10} & \textit{11} \\ \cline{2-5}
		\textit{00}  & 0  & 1  & 0  & 0  \\
		\textit{01}  & 1  & 0  & 0  & 0  \\
		\textit{10}  & 0  & 0  & 1  & 0  \\
		\textit{11}  & 0  & 0  & 0  & 1
    \end{matrix}
\end{equation}
where the row (resp.~column) labels indicate the input (resp.~output) combinations and the 1's in the matrix indicate the inputs and outputs that are corresponding.

\begin{figure}[t]\centering
\begin{quantikz}
\ket{0} & \gate[2]{H^{\otimes 2}}\slice{$\ket{\phi_1}$} & \gate[2]{U_f} \slice{$\ket{\phi_2}$} & \gate{H}\slice{$\ket{\phi_3}$} & \meter{} \\
\ket{1} &                         &               &  \gate{I_2}        &
\end{quantikz}
\caption{Quantum circuit for Deutsch's algorithm.} 
\label{fig:deutsch} 
\end{figure}
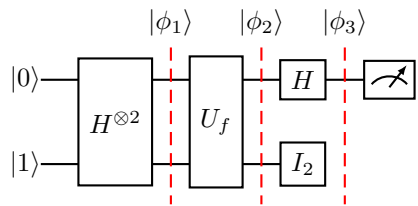

\begin{figure}[t]\centering
\subfloat[Quantum circuit]{
\begin{quantikz}
\ket{x}&\gate[2]{U_f}& \qw & \rstick{$ \ket{x} $} \\
\ket{y}&&\qw & \rstick{$ \ket{y \oplus f(x)} $}
\end{quantikz}
}
\hspace{1em}
\subfloat[Truth table. $f(x)' := \overline{f(x)}$]{ 
\begin{tabular}{|c|c|c|c|}
\hline
\multicolumn{2}{|c|}{Inputs} & \multicolumn{2}{c|}{Outputs} \\
\hline
   $x$  & $y$ & $x$ & $y \oplus f(x)$ \\
\hline
   0  & 0 & 0 & $f(0)$ \\
   0  & 1 & 0 & $f(0)'$ \\
   1  & 0 & 1 & $f(1)$ \\
   1  & 1 & 1 & $f(1)'$ \\
   \hline
\end{tabular}
\label{fig:uftruthtable}
}
\caption{Quantum implementation of $f$.}
\label{fig:quantf}
\end{figure}

The quantum circuit in Fig.~\ref{fig:deutsch} is initialised to the basic state $\ket{01}$. The successive application of the gates yields
\begin{align}
    \ket{\phi_3} = (H \otimes I_2 )U_f H^{\otimes 2}\ket{01},
\end{align}
followed by a measurement of the first qubit. To gain more insights, we need to step through the algorithm.

From Fig.~\ref{fig:deutsch}, we have
\begin{align}
    \ket{\phi_1} = \frac{1}{2} \left[ \begin{matrix} 1 \\ -1 \\ 1 \\ -1 \end{matrix} \right] = \frac{\ket{00} - \ket{01} + \ket{10} - \ket{11}}{2}.
\end{align}
Applying $U_f$ on $\ket{\phi_1}$ following the truth table in Fig.~\ref{fig:uftruthtable} gives
\begin{align}\label{eq:phi2tensored}
\ket{\phi_2} = \frac{\ket{0f(0)} - \ket{0 f(0)'} + \ket{1 f(1)} - \ket{1 f(1)'} }{2}.
\end{align}
Note that all four basic states in $\ket{\phi_1}$ are \emph{processed simultaneously via a single invocation} of $U_f$. As a concrete example, if $f$ is as defined in Table~\ref{tab:thebal}, we have
\begin{align}\label{eq:phi2eg}
    \ket{\phi_2} = \frac{\ket{01} - \ket{00} + \ket{10} - \ket{11} }{2}.
\end{align}
We could also obtain the same by multiplying Eq.~\eqref{eq:unitaryf} with Eq.~\eqref{eq:phi2eg} 
\begin{align}
    U_f
    \left[ \begin{matrix} \frac{1}{2} \\ -\frac{1}{2} \\ \frac{1}{2} \\ -\frac{1}{2} \end{matrix} \right] = 
    \left[ \begin{matrix} - \frac{1}{2} \\ \frac{1}{2} \\ \frac{1}{2} \\ -\frac{1}{2} \end{matrix} \right],
\end{align}
with the result simply being the vector form ~\eqref{eq:phi2eg}.

In the general case of an unknown $f$, Eq.~\eqref{eq:phi2tensored} can be rewritten in the separable form
\begin{align}
\ket{\phi_2} &= \left[ \frac{(-1)^{f(0)}\ket{0} + (-1)^{f(1)}\ket{1}}{\sqrt{2}} \right] \left[ \frac{\ket{0}-\ket{1}}{\sqrt{2}} \right] \\
&= \begin{cases}
    \pm \left[ \frac{\ket{0}+\ket{1}}{\sqrt{2}} \right] \left[ \frac{\ket{0}-\ket{1}}{\sqrt{2}} \right], \;\;\;\; \text{if $f$ is constant;} \\
    \pm \left[ \frac{\ket{0} - \ket{1}}{\sqrt{2}} \right] \left[ \frac{\ket{0}-\ket{1}}{\sqrt{2}} \right], \;\;\;\; \text{if $f$ is balanced.}
\end{cases}
\end{align}
Recognising that $H\otimes I_2$ applies the Hadamard gate on the 1st qubit while leaving the 2nd qubit unchanged, we have
\begin{align}
    \ket{\phi_3} = \begin{cases}
    \pm \ket{0} \left[ \frac{\ket{0}-\ket{1}}{\sqrt{2}} \right], \;\;\;\; \text{if $f$ is constant;} \\
    \pm \ket{1} \left[ \frac{\ket{0}-\ket{1}}{\sqrt{2}} \right], \;\;\;\; \text{if $f$ is balanced.}\label{eq:phi2sep}
\end{cases}
\end{align}
The result is that the first qubit will be in a state from the computational basis. 

The final step in Deutsch's algorithm is to measure the first qubit; in fact, using the process described in Eq.~\eqref{eq:measurement}. Since the first qubit is always going to be in a basis state---either $\ket{0}$ or $\ket{1}$ depending on whether $f$ is constant or balanced---the measurement outcome will immediately reveal which type of function the unknown $f$ is. 
As alluded to above, only a single evaluation of $f$ is required in Deutsch's Algorithm, compared to two in the classical method. While this does not seem to be a significant result at first glance, it is sufficient to introduce the fundamentally different approach and potential benefits of quantum computing. In any case, much greater gaps between quantum and classical methods (\eg~quadratic, exponential) have been achieved on more complex problems, such as Grover's Search Algorithm~\cite{grover1996fast} and Shor's 
Factoring Algorithm~\cite{Shor1997}.

Note that the computational gain of Deutsch's Algorithm is specified in terms of \emph{query complexity}, \ie~the number of function evaluations relative to that required in the classical method. Query complexity is a common way to assess quantum speedup, including for more prominent algorithms~\cite{ambainis2018understanding}.

\section{Proofs} 
We next provide proofs to several claims from the main text, \ie the construction of the diagonal problem Hamiltonian $H_P$ in Eq.~\eqref{eq:problem_hamiltonian}, the adiabatic theorem, Theorem 1 and the no-crossing eigenvalue theorem, Theorem 2.

\subsection{Eigenvalues of $H_P$}
\label{sec:eigenvalues_of_H_P}
To construct the time-dependent Hamiltonian of Schrödinger's Equation for AQC, we introduce in Eqs.~\eqref{eq:problem_hamiltonian} and \eqref{eq:initial_hamiltonian} the problem and initial Hamiltonians $H_P$ and $H_I$. 
For the problem Hamiltonian, we have (essentially) stated the following lemma:

\begin{lemma}{$H_P$ is diagonal}{}
The matrix
\begin{eqnarray*}
\label{eq:defHP}
    H_P &=& \sum_{i=1}^n \sum_{j=1}^n J_{i,j} \sigma_i^z \sigma_j^z + \sum_i b_i \sigma_i^z, \\
    \label{eq:defSimgai}
    \sigma_i^z &=& \underbrace{I \otimes I \hdots \otimes  I}_{(i-1)\text{-many times}} ~\otimes \sigma_z \otimes \underbrace{ I \otimes  \hdots \otimes I}_{(n-i)\text{-many times}}, \\
    \label{eq:defSimga}
    \sigma_z &=& \begin{bmatrix} 1 & 0 \\ 0 & -1 \end{bmatrix}, \ \ I = \begin{bmatrix} 1 & 0 \\ 0 & 1 \end{bmatrix}.
\end{eqnarray*}
is a diagonal $2^n \times 2^n$ matrix whose diagonal values enumerate all costs $s^T J s + s^T b$, \ie~for $2^n$ many $s \in  \{-1,1\}^n$. 
\end{lemma}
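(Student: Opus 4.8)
The plan is to exploit the fact that each $\sigma_z$ is diagonal, so that every $\sigma_i^z$—being a Kronecker product of $\sigma_z$ and several identity matrices—is diagonal as well, and hence so is $H_P$ as a sum of products of such matrices. The key is then to identify the diagonal entries explicitly. First I would set up indexing conventions: label the $2^n$ computational basis states $\ket{b_1 b_2 \cdots b_n}$ with $b_k \in \{0,1\}$, and recall that under the tensor-product construction the basis vector $\ket{b_1 \cdots b_n}$ sits at position $1 + \sum_k b_k 2^{n-k}$. The single-qubit observation is that $\sigma_z \ket{0} = (+1)\ket{0}$ and $\sigma_z \ket{1} = (-1)\ket{1}$; in other words $\sigma_z \ket{b} = (1 - 2b)\ket{b}$.

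Next I would lift this to $n$ qubits. Since $\sigma_i^z$ acts as $\sigma_z$ on the $i$-th factor and as the identity elsewhere, and since $(A_1 \otimes \cdots \otimes A_n)(\ket{b_1}\otimes\cdots\otimes\ket{b_n}) = (A_1\ket{b_1})\otimes\cdots\otimes(A_n\ket{b_n})$, we get
\begin{equation}
\sigma_i^z \ket{b_1 \cdots b_n} = (1 - 2b_i)\,\ket{b_1 \cdots b_n}.
\end{equation}
So each $\sigma_i^z$ is diagonal with eigenvalue $s_i := 1 - 2b_i \in \{+1,-1\}$ on that basis state. Then $\sigma_i^z \sigma_j^z$ is diagonal with eigenvalue $s_i s_j$, and consequently
\begin{equation}
H_P \ket{b_1 \cdots b_n} = \Bigl( \sum_{i,j} J_{i,j} s_i s_j + \sum_i b_i s_i \Bigr) \ket{b_1 \cdots b_n} = \bigl( s^\top J s + s^\top b \bigr)\ket{b_1 \cdots b_n},
\end{equation}
where $s = (s_1,\ldots,s_n)^\top$. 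This shows $H_P$ is diagonal. The final bookkeeping step is to note that the map $b \mapsto s$, i.e. $b_i \mapsto 1 - 2b_i$, is a bijection from $\{0,1\}^n$ onto $\{-1,+1\}^n$, so as $\ket{b_1\cdots b_n}$ ranges over all $2^n$ computational basis states, the diagonal entries $s^\top J s + s^\top b$ range over exactly the $2^n$ cost values for all $s \in \{-1,+1\}^n$.

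I do not anticipate a genuine obstacle here; the only mild subtlety worth stating carefully is the mixed-product rule for Kronecker products, $(A\otimes B)(C\otimes D) = (AC)\otimes(BD)$, which is what lets us conclude both that $\sigma_i^z$ is diagonal and that products $\sigma_i^z\sigma_j^z$ multiply eigenvalues entrywise; I would invoke it explicitly rather than leave it implicit. One may also remark, as the main text already notes in connection with the invariance $s_i^2 = 1$, that the diagonal terms $J_{i,i}$ contribute only a constant shift $\sum_i J_{i,i}$, so without loss of generality one often takes $J$ hollow; this is not needed for the lemma but clarifies the correspondence with the Ising objective \eqref{eq:optimization_problem}.
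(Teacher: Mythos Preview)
Your proposal is correct and follows essentially the same approach as the paper: both arguments show that the computational basis vectors are simultaneous eigenvectors of all $\sigma_i^z$ with eigenvalues $\pm 1$, then read off the eigenvalues of $H_P$ as $s^\top J s + s^\top b$; the only cosmetic difference is that the paper indexes basis vectors directly by $s\in\{-1,+1\}^n$ via $e_s=e(s_1)\otimes\cdots\otimes e(s_n)$, whereas you go through $\{0,1\}$-bits first and then substitute $s_i=1-2b_i$. One small point: your choice of $b_i$ for the bit labels collides with the bias-vector components $b_i$ appearing in $\sum_i b_i s_i$, so pick a different letter for the bits when you write this up.
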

\begin{proof}
Note that $\sigma_i^z$ is a diagonal matrix, which immediately makes the entire $H_p$ diagonal. For any $a\in \{-1,1\}$, define 
\begin{equation}
    e(a) = \begin{cases} \begin{bmatrix} 1 \\ 0 \end{bmatrix} & \text{if } a = 1, \\
\begin{bmatrix} 0 \\ 1 \end{bmatrix}&\text{if } a = -1
\end{cases}
\end{equation}
as an eigenvector of $\sigma^z$ with eigenvalue $a$, and for any $s\in \{-1,1\}^n$ consider $e_s := e(s_1) \otimes e(s_2) \otimes \hdots \otimes e(s_n)$. Note that the multiplication of $e_s$ with $\sigma^z_i$ yields
\begin{align*}
    \sigma_i^z e_s &= e(s_1) \otimes \hdots e(s_{i-1}) \otimes \sigma_z e(s_i) \otimes e(s_{i+1}) \otimes \hdots \otimes e(s_n) \\
    &=\begin{cases}
    e_s & \text{if } s_i=1, \\
    -e_s & \text{if } s_i = -1,
    \end{cases}
\end{align*}  
due to the tensor product being multi-linear. As such, $e_s$ is an eigenvector of $\sigma_i^z$ with eigenvalue $s_i$, which results in 
\begin{align*}
    H_P e_s &=\sum_{i=1}^n \sum_{j=1}^n J_{i,j}  \sigma_i^z \sigma_j^z e_s) + \sum_i b_i \sigma_i^z e_s \\
    &= \left(\sum_{i=1}^n \sum_{j=1}^n J_{i,j} s_i s_j + \sum_i b_i s_i  \right) e_s,
\end{align*}
as claimed.
\end{proof}

\subsection{Quantum Adiabatic Theorem}
\label{sec:adiabatic_quantum_theorem} 
We next show the proof of the adiabatic theorem, Theorem 1, which supports quantum annealing as a heuristic for solving QUBO problems. A proof like this can also be found in \cite{das2005quantum} or in references cited in \cite{Hauke_2020}, while the main text from \cite{Hauke_2020} also provides further useful context.
We then elaborate on the \emph{``large spectral gap''} and the \emph{``sufficiently slow transition''} conditions needed to sample accurate solutions.

\begin{proof}
Note that all $H(t)$ are Hermitian matrices, such that for all $t$ there exists an orthonormal basis $\ket{k(t)}$ of eigenvectors to eigenvalues $\lambda_k(t)$, i.e.
\begin{equation}
\label{eq:eigenvectors2}
    H(t)\ket{k(t)} = \lambda_k(t) \ket{k(t)}.
\end{equation}
Thus, at each time we can represent a quantum state $\ket{\psi(t)}$ that is evolving under \eqref{eq:schroedinger_equation} as 
\begin{equation}
    \ket{\psi(t)} = \sum_k c_k(t)\ket{k(t)}
\end{equation}
for suitable coefficients $c_k(t)$. Inserting this representation into the Schrödinger equation yields on the left-hand side to 
\begin{align}
    i  \hslash \frac{d}{dt}\ket{\psi(t)} &=  i  \hslash \sum_k \frac{d}{dt} \left(c_k(t)\ket{k(t)} \right) \\
    &=  i  \hslash \sum_k c_k'(t) \ket{k(t)} + c_k(t) \ket{k'(t)},
\end{align}
and on the right-hand side to 
\begin{align}
    H(t) \ket{\psi(t)} = \sum_k\lambda_k(t) c_k(t)\ket{k(t)}.
\end{align}
Taking the product of the entire equation with one fixed $\ket{k(t)}$ and using the orthonormality
\begin{equation}
\langle l(t), k(t) \rangle = \begin{cases} 1 & \text{if } l=k \\
0 & \text{ otherwise.}
\end{cases}
\end{equation}
yields
\begin{align}
\label{eq:schoedinger_in_eigenbasis}
  &i  \hslash \left(  c_k'(t)  + c_k(t) \langle k(t),k'(t)\rangle + c_k(t)\sum_{l\neq k} \langle k(t), l'(t) \rangle \right) \nonumber \\
  &= \lambda_k(t) c_k(t).
\end{align}
Our next step is to gain an understanding of how $ \langle k(t), l'(t) \rangle $ behaves. To do so, we differentiate \eqref{eq:eigenvectors2} (in the notation of $\ket{l}(t)$) with respect to $t$ to find
$$ H'(t) \ket{l(t)} + H(t) \ket{l'(t)} = \lambda_l'(t) \ket{l(t)} + \lambda_l(t) \ket{l'(t)},$$
and, after the product with $\bra{k(t)}$, obtain
\begin{align*}
    \bra{k(t)} H'(t) \ket{l(t)}  + \bra{k(t)}H(t) \ket{l'(t)} &= \lambda_l(t) \Braket{k(t) |l'(t)}\\
    \Updownarrow \\
    \bra{k(t)} H'(t) \ket{l(t)}  + \lambda_k(t)\Braket{k(t) | l'(t)} &= \lambda_l(t) \Braket{k(t)|l'(t)}\\ 
    \Updownarrow  \\
    \langle k(t), l'(t) \rangle &= \frac{ \Braket{k(t)|H'(t) |l(t)} }{\lambda_l(t) - \lambda_k(t)}.
\end{align*}
Now if 
\begin{align}
    \label{eq:spectralGap} \|H'(t)\| \ll \min_{l\neq k} \lambda_l(t) - \lambda_k(t) 
\end{align}
holds for all times $t$, then the term $\sum_{l\neq k} \langle k(t), l'(t) \rangle  $ in \eqref{eq:schoedinger_in_eigenbasis} becomes negligible and we obtain 
\begin{equation}
\label{eq:componentwise_2}
    c_k'(t) \approx -\frac{i}{\hslash} (\lambda_k(t)-\langle k(t), k'(t) \rangle) c_k(t)
\end{equation}
 This implies that
\begin{equation}
    c_k(t) \approx c_k(0)\ \text{exp}\left(-\frac{i}{\hslash}\int_0^t  \lambda_k(s) - i \hslash\langle k(s), k'(s) \rangle~ds\right)
\end{equation}
and shows that if $c_k(0)=0$ has held for all but one index $k$, it will approximately hold for all $t \leq T$. The quality of this approximation depends on two things:
\begin{enumerate}
    \item First on the so-called \textit{spectral gap} 
$$ \min_t \min_{l\neq k} \lambda_l(t) - \lambda_k(t) $$ 
being large for $\lambda_k(t)$ being the smallest eigenvalue of $H(t)$.
\item Second on $H'(t)$ being small. Note that for our example of $H(t) = (1-t/T) H_I + t/T H_P $ the derivative, $H'(t)$, is proportional to $1/T$, \ie~the total \textit{annealing time}, indicating that longer annealing times will improve the quality of the solution, provided that the physical realization allows to do so in a stable manner.   
\end{enumerate}
\end{proof}

\subsection{No Crossing of Eigenvalues}
\label{sec:no_crossing_eigenvalues}
Before we come to the actual proof, let us recall the particular part of the Perron-Frobenius Theorem.
\begin{theorem}{Perron-Frobenius}{}
Let $H \in \mathbb{R}^{m \times m}$ be non-negative, \ie~all $H_{i,j}\geq 0$ and irreducible. Then the largest eigenvalue in magnitude is positive and simple. 
\end{theorem}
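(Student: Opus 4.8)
The plan is to prove the three substantive claims — that the spectral radius $\rho := \rho(H)$ is (i) attained as an eigenvalue, (ii) strictly positive with a strictly positive eigenvector, and (iii) algebraically simple. Since $\rho$ is by definition the maximum modulus over the spectrum, the phrase ``largest eigenvalue in magnitude'' refers precisely to $\rho$ once we have shown it is itself an eigenvalue; I would record at the outset that for any eigenpair $(\lambda, x)$ the entrywise inequality $|\lambda|\,|x| = |Hx| \le H|x|$ forces $|\lambda| \le \rho$, so no eigenvalue exceeds $\rho$ in modulus.

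First I would establish the Perron eigenvector via the Collatz--Wielandt functional. For $x \ge 0$, $x \neq 0$, set $r(x) = \min_{i:\, x_i > 0} (Hx)_i / x_i$, and define $\rho^{*} = \sup\{ r(x) : x \ge 0,\ x \neq 0 \}$, the supremum being restricted to the compact simplex $\{x \ge 0 : \sum_i x_i = 1\}$. I would show $r$ attains its supremum at some $x^{*} \ge 0$ and that the maximizer satisfies $H x^{*} = \rho^{*} x^{*}$: if a coordinate of $H x^{*} - \rho^{*} x^{*}$ were strictly positive, a small perturbation of $x^{*}$ would strictly increase $r$, contradicting maximality. A short comparison with the modulus bound above then identifies $\rho^{*} = \rho$.

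Next I would upgrade $x^{*} \ge 0$ to $x^{*} > 0$ and deduce $\rho > 0$ using irreducibility. The key fact is that irreducibility of $H$ is equivalent to $(I + H)^{m-1}$ being entrywise strictly positive; applying this matrix to the eigenrelation gives $(I + H)^{m-1} x^{*} = (1 + \rho)^{m-1} x^{*}$, whose left side is strictly positive whenever $x^{*} \ge 0$ is nonzero, so $x^{*} > 0$. Strict positivity of $x^{*}$ together with $H x^{*} = \rho x^{*}$ and the fact that an irreducible $H$ has no zero row forces $\rho > 0$. Geometric simplicity follows in the same spirit: given any real eigenvector $w$ for $\rho$, choosing the largest scalar $t$ keeping $x^{*} - t w$ nonnegative would produce a nonnegative eigenvector with a vanishing coordinate, contradicting strict positivity unless $w$ is a multiple of $x^{*}$.

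The hard part will be algebraic simplicity, since the preceding arguments rule out only a second independent eigenvector, not a nontrivial Jordan block. To close this gap I would invoke the left Perron eigenvector $y > 0$ with $y^{\top} H = \rho\, y^{\top}$, obtained by applying the whole argument to $H^{\top}$, which is again non-negative and irreducible. If $\rho$ admitted a generalized eigenvector $u$ with $(H - \rho I) u = x^{*}$, pairing with $y$ yields $0 = y^{\top} (H - \rho I) u = y^{\top} x^{*} > 0$, a contradiction; hence the generalized eigenspace coincides with the one-dimensional eigenspace and $\rho$ is algebraically simple. I expect the two delicate points to be the attainment-and-characterization step for the Collatz--Wielandt maximizer (where $r$ is only upper semicontinuous, being a pointwise minimum of ratios) and the noncircular use of the left eigenvector for algebraic simplicity; both are standard but must be stated carefully.
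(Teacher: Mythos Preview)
Your sketch is a correct outline of the standard Perron--Frobenius proof via the Collatz--Wielandt functional, and you have correctly flagged the two genuinely delicate points (upper semicontinuity of $r$ for attainment, and the left-eigenvector pairing for algebraic as opposed to merely geometric simplicity).

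However, the paper does not prove this theorem at all. It is stated in the appendix under the heading ``Before we come to the actual proof, let us \emph{recall} the particular part of the Perron--Frobenius Theorem'' --- i.e.\ it is invoked as a classical black-box result from linear algebra, and immediately applied: the authors show that $-H(t)$ is non-negative and irreducible for $t<T$, and then cite Perron--Frobenius to conclude that the smallest eigenvalue of $H(t)$ is simple. So there is nothing to compare your argument against; the paper's ``proof'' of this statement is simply a citation. Your write-up is fine as a self-contained treatment, but in the context of this survey it would be out of scope --- the theorem is background, not a contribution.
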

Our proof strategy is to show that $-H(t)$ in the evolution of the Hamiltonians is non-negative and irreducible, such that the simplicity of the largest eigenvalue ensures a non-zero spectral gap, \ie~a non-zero difference between the smallest and second smallest eigenvalues of $H(t)$. 

Recall that a matrix $H \in \mathbb{R}^{m \times m}$ is irreducible if and only if the directed graph we obtain by looking at vertices $i$, $i \in \{1, \hdots ,m\}$ and connecting vertex $i$ with vertex $j$ if $H_{i,j}\neq 0$, is \textit{strongly connected}. The latter means that there is a path from any vertex to any other vertex. We first prove the following lemma.
\begin{lemma}{$H_I$ is irreducible}{}
The matrix $H_I$ defined as 
\begin{eqnarray*}
\label{eq:initial_hamiltonian_2}
    H_I &=& -\sum_{i}  \kappa \sigma_i^x  \\
    \sigma_i^x &=& \underbrace{I \otimes I \hdots I}_{(i-1)\text{-many times}}  \otimes \sigma_x \otimes \underbrace{ I \otimes I \hdots \otimes I}_{(n-i)\text{-many times}}, \\
    \sigma_x &=& \begin{bmatrix} 0 & 1 \\ 1 & 0 \end{bmatrix}, \ \ I = \begin{bmatrix} 1 & 0 \\ 0 & 1 \end{bmatrix}.
\end{eqnarray*}
is irreducible. 
\end{lemma}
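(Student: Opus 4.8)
The plan is to reduce the claim to a statement about the connectivity of a graph and then to recognize that graph as an $n$-dimensional hypercube. First I would observe that, since $\kappa > 0$, the matrix $-H_I = \kappa\sum_i \sigma_i^x$ has only nonnegative entries, so the Perron--Frobenius hypothesis of nonnegativity is immediate and the whole content of the statement is the irreducibility of $-H_I$ (equivalently, of $H_I$). I would use the same basis as in the proof of the $H_P$ lemma: for $s \in \{-1,1\}^n$, set $e_s := e(s_1) \otimes \cdots \otimes e(s_n)$, which runs over the computational basis of $\C^{2^n}$.

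The key computation is the action of $\sigma_i^x$ on these basis vectors. Since $\sigma_x e(1) = e(-1)$ and $\sigma_x e(-1) = e(1)$, and the identity factors in $\sigma_i^x$ act trivially, multilinearity of the tensor product gives $\sigma_i^x e_s = e_{s^{(i)}}$, where $s^{(i)}$ is $s$ with the sign of its $i$-th entry reversed. Hence the $(s,s')$ entry of $-H_I$ equals $\kappa$ if $s$ and $s'$ differ in exactly one coordinate and $0$ otherwise; in particular all off-diagonal nonzero entries connect strings at Hamming distance one, and the diagonal vanishes.

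Then I would invoke the definition of irreducibility: $H_I$ is irreducible iff the directed graph on vertex set $\{-1,1\}^n$, with an edge from $s$ to $s'$ whenever $(-H_I)_{s,s'} \neq 0$, is strongly connected. By the previous step this graph is precisely the hypercube graph $Q_n$, and it is in fact undirected, since each $\sigma_i^x$ (hence $-H_I$) is symmetric. Finally I would check that $Q_n$ is connected: given $s, s' \in \{-1,1\}^n$, flipping one at a time the coordinates where they disagree traces out a path in $Q_n$ from $s$ to $s'$, so any two vertices are joined; as the graph is undirected this is strong connectivity. This gives irreducibility of $H_I$ and finishes the proof.

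I do not expect a genuine obstacle here --- the argument is essentially bookkeeping. The one point requiring care is the tensor-product computation $\sigma_i^x e_s = e_{s^{(i)}}$: one must use a coordinate ordering consistent with the Kronecker-product convention already fixed in Sec.~\ref{sec:fundamentals}, and one should note explicitly that $\kappa>0$ so that the sign does not interfere with nonnegativity or with the ``nonzero entry'' criterion defining the graph. Once this lemma is in place, it slots directly into the stated strategy for Theorem~\ref{thm:nocrossing}: writing $-H(t) = (1-f(t))(-H_I) + f(t)(-H_P)$, nonnegativity holds for $t\in(0,T)$ and irreducibility is inherited from the $-H_I$ term, so the Perron--Frobenius theorem yields a simple extremal eigenvalue of $H(t)$ and hence a strictly positive spectral gap.
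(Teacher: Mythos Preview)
Your proposal is correct and follows essentially the same approach as the paper: both compute that $\sigma_i^x$ acts on the computational basis vector $e_s$ by flipping the $i$-th coordinate, identify the resulting adjacency structure as connecting sign strings that differ in one position, and conclude strong connectivity by observing that any string can be reached from any other via single flips. Your version is slightly more explicit in naming the graph as the hypercube $Q_n$ and in noting that symmetry reduces strong connectivity to ordinary connectivity, but the underlying argument is the same.
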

\begin{proof}
As before, for any $a\in \{-1,1\}$, define 
\begin{equation}
e(a) = \begin{cases} \begin{bmatrix} 1 \\ 0 \end{bmatrix} & \text{if } a = 1, \\
\begin{bmatrix} 0 \\ 1 \end{bmatrix}&\text{if } a = -1
\end{cases}
\end{equation}
as an eigenvector of $\sigma^z$ with eigenvalue $a$,
and for any $s\in \{-1,1\}^n$ consider $e_s := e(s_1) \otimes e(s_2) \otimes \hdots \otimes e(s_n)$. We find that 
\begin{align}
     \sigma_i^x e_s &=   \hdots \otimes e(s_{i-1}) \otimes \sigma_x e(s_i) \otimes e(s_{i+1}) \otimes \hdots \\
     &=   \hdots \otimes e(s_{i-1}) \otimes e(-s_i) \otimes e(s_{i+1}) \otimes \hdots  .
\end{align}
This means, interpreting the $2^n$ many indices of $H_I$ as edges associated with every $s \in \{-1,1 \}^n$, the non-zero entries of $H_I$ connect every vertex $s$ to any other vertex $t$ that arises from $s$ by flipping one sign. Since every $s_1$ can be generated from every other $s_2$ by a sequence of sign flips, the graph induced by $H_I$ is strongly connected, which yields the assertion. 
\end{proof}

We can now prove the intended theorem, Theorem~\ref{thm:nocrossing}.
\begin{proof}
First, note that the diagonal entries of $H_I$ are zero. Second, note that $H_P$ is a diagonal matrix. Third, note that for any $c\in \mathbb{R}$
\begin{align}
    &\argmin_{s\in \{-1,+1\}^n} s^TJs + s^Tb \\
    &= \argmin_{s\in \{-1,+1\}^n} s^TJs + s^Tb  - c \cdot n\\
    &= \argmin_{s\in \{-1,+1\}^n} s^TJs + s^Tb  - c (s^T I s)\\
    &= \argmin_{s\in \{-1,+1\}^n} s^T(J-cI)s + s^Tb, 
\end{align}
which means that one can always ensure all entries of $H_P$ are negative. Thus, for any continuous monotone $f:[0,T]\rightarrow [0,1]$ with $f(0)=0$ and $f(T)=1$ we have that
$$ H(t) = (1-f(t))H_I + f(t)H_P $$
is non-positive and irreducible for $t<T$ due to the Perron-Frobenius theorem (and the previous Lemma), which means that the smallest eigenvalue is simple. 
Finally, the assumption of the ground state being non-degenerate yields the same for $t=T$. Thus, the minimal spectral gap in $[0,T]$ (which is attained as a continuous function over a compact interval) is greater than zero.  
\end{proof}

\subsection{Equivalence of gate-based and AQC}
\label{app:equivalence_gb_aqc}
We sketch the proof supporting the equivalence of gate-based and adiabatic quantum computing, Theorem~\ref{thm:equiv_aqc_gqc}.
Again, as mentioned in the main text, we only sketch the proof and refer the reader to~\cite{van2001powerful,aharonov2008adiabatic} for a full discussion.

\begin{proof}
\textbf{From adiabatic to gate-based model.} The unitary evolution of a quantum system is governed by the time-dependent Schrödinger equation, whose solution can be expressed as the time-evolution operator $U(\tau)$:
\begin{equation}
    U(\tau) = \textit{T}\,\text{exp} \left[-i \int_{0}^{\tau} H(t) dt\right],
\end{equation}
where $U(\tau)$ describes the system evolution from time $0$ to $\tau$ and $\textit{T}$ is the time-ordering operator.
In numerical simulations, such a continuous evolution is often discretized through Trotterization, which approximates the evolution as a sequence of time-independent Hamiltonian steps:
\begin{equation}
    U(\tau) = \textit{T}\,\text{exp} [-i \int_{0}^{\tau} H(t) dt] \approx \prod_{k=1}^{p} \text{exp} [-i H(k \Delta t) \Delta t].
\end{equation}
Here, $p$ should be chosen to be sufficiently large such that $\Delta t = \tau / p$ is small for this approximation to hold. 
In the context of quantum annealing, the system's Hamiltonian evolves smoothly from an initial Hamiltonian $ H_I $ to a final problem Hamiltonian $ H_P $ (see Eq.~\eqref{eq:hamiltonian_transition}). To apply Trotterization in this setting, we use the first-order Trotter formula:
\begin{equation}
    e^{i (A+B) \Delta t} = e^{i A \Delta t} e^{i B \Delta t} + \mathcal{O}{((\Delta t)^2)},
\end{equation}
which allows us to approximate the time-evolution operator in a form suitable for implementation on gate-based quantum hardware:
\begin{align}
\label{eq:aqc_to_circuit}
    U(\tau) = \prod_{k=1}^{p} &\text{exp} [(-i (1-f(k \Delta t)) H_I \Delta t] \nonumber \\
    &\cdot \text{exp} [(-i f(k \Delta t) H_P \Delta t],  
\end{align}
which can be executed within a circuit-based quantum computing framework.

\vspace{0.5mm}
\parahead{From gate-based to adiabatic model}
The proof consists in designing an AQC-Hamiltonian $H$ that drives the system, in polynomial time, into the output state of a given quantum circuit. 
It was provided by Aharonov \textit{et al.}~\cite{aharonov2008adiabatic} and goes back to the circuit-to-Hamiltonian construction of Kitaev~\cite{kitaev2002classical}. 
The idea is to decompose the circuit unitary into a product $U = U_p\cdots U_1$, each $U_k$ operating on one or two qubits. 
Assume $\ket{\gamma_k}$ is the state of the system after gate $U_k$: The goal is then to construct the final Hamiltonian $H_P$ with $\ket{\gamma_p}$ as its ground state. 
However, not knowing $\ket{\gamma_p}$, it seems impossible to explicitly specify $H_P$. 
Kitaev~\cite[Section 14.4]{kitaev2002classical} showed that by adding a so-called clock register to the system, it is possible to construct---without knowing the states $\ket{\gamma_k}$---a Hamiltonian $H$ whose ground state is
\begin{equation}
\label{eq:psi_gqc_to_aqc}
    \ket{\psi} = \frac{1}{\sqrt{p+1}} \sum_{k=0}^p \ket{\gamma_k} \otimes \ket{k}^c.
\end{equation}
The clock register $\ket{k}^c$ validates the correctness of the quantum state propagation throughout the circuit. 
This Hamiltonian $H$ is constructed as a sum
\begin{equation}
\label{eq:H_gqc_to_aqc}
    H = \frac{1}{2} \sum_{k=1}^p H_k + H_{\text{in}} + H_{\text{out}},
\end{equation}
where Hamiltonians $H_{\text{in}} $ and $ H_{\text{out}}$ (not dependent on any $U_k$) ensure the correct initialisation and evolution of the clock register. 
Only Hamiltonians 
\begin{align}
    H_k = & -U_k \otimes \ket{k}\bra{k-1}^c - U_k^{\dagger} \otimes \ket{k-1}\bra{k}^c \nonumber \\
    &+ I \otimes (\ket{k}\bra{k}^c + \ket{k-1}\bra{k-1}^c)
\end{align} 
involve gates $U_k$ and their inverses, denoted by $U_k^{\dagger}$. 
They ensure that any wrong state of the clock register of the system incurs a higher energy level while any correct state is tensor-ed with the desired state of the propagation at step $k$. 
At the end of the evolution, one measures the clock register in Eq.~\eqref{eq:psi_gqc_to_aqc} with probability $1/(p+1)$ in the state $\ket{p}^c$, meaning that the working register is in the desired ground state $\ket{\gamma_p}$. 
Aharonov et~al.~\cite[section 4]{aharonov2008adiabatic} showed that the spectral gap of the resulting AQC-Hamiltonian is lower-bounded by $1/p^3$, justifying that the running time of the adiabatic computation is polynomial with the circuit depth. 
\end{proof}

\section{Monte Carlo Sampling} 
\label{app:montecarlosampling}
Another question one could ask is how quantum or simulated annealing compare to classical algorithms that are inspired by quantum annealing. 
This question is explored, \eg~in Crosson et al.~\cite{crosson2016simulated}. 
They look at an algorithm where low-energy states are estimated using classical Markov chain Monte Carlo methods. 
The authors consider not only one ---as done in simulated annealing, but $L$ state configurations $(x_1, \ldots, x_L)$, $x_i \in \{0, 1\}^n$, of the quantum system at a time. 
The distribution of those multi-states at time $s=t/T$ is given by
\begin{equation}
    \pi(x_1, \ldots, x_L) = 
    \frac{1}{Z} e^{-\frac{\beta s}{L} \sum_{i=1}^L f(x_i)}\prod_{j=1}^n \phi(\bar x_j),
\end{equation}
where $Z$ is a normalisation constant, $\beta$ is a temperature parameter, $f(x_i)=x_i^\top Q x_i$ is the energy of the system in state $x_i$ and $\phi(\bar x_j)$ is a function counting the number of consecutive bits that disagree along the so-called worldline $\bar x_j = (x_{j,1}, \ldots, x_{j,L})$. 
The multi-state configuration allows to simulate quantum effects like tunnelling, superposition and entanglement. 
The quantum distribution $\Pi(x)$, which represents the actual quantum system's behaviour, is approximated by summing over all the multi-state probabilities: 
\begin{equation}
    \Pi(x) = \sum_{x_2, \ldots, x_L} \pi(x, x_2, \ldots, x_L).
\end{equation} 
The simulated evolution then uses a Markov chain Monte Carlo sample from $\pi$ at various values of the adiabatic parameter $s$. 
In a simple form, a configuration $x^\prime$ is chosen in the neighbourhood of the current $x$ by randomly flipping a bit in a selected worldline and the new configuration is accepted with probability 
\begin{equation}
    P(x,x^\prime) =\frac{1}{2nL}\min\left\{1, \frac{\pi(x^\prime)}{\pi(x)}\right\}.
\end{equation}

The authors~\cite{crosson2016simulated} show that this simulated quantum annealing demonstrates an exponential advantage over simulated annealing on a problem class with a high spike on the energy landscape, where quantum annealing also exponentially outperforms simulated annealing. 
Therefore, for these problems, it was ruled out that the advantages of quantum tunnelling in simulated quantum annealing and quantum annealing can also be achieved by general-purpose classical optimisation algorithms and thus are not exclusively quantum advantages.

Simulated quantum annealing can be done particularly well for Hamiltonians that are stoquastic, i.e.~the off-diagonal entries are real and non-positive (in the standard basis).
The underlying Hamiltonians for the D-Wave machines are also stoquastic. 
Nevertheless, there already exist experiments where a scaling advantage towards simulated quantum annealing was reported~\cite{king2021scaling}. 
\end{appendices}


\end{document}